\def \P{\mathbb{P}} 
\def \V {\mathbb{V}}
\def \E {\mathbb{E}}
\def \bfE {\mathbb{E}}
\newcommand{\indep}{\perp \!\!\! \perp}
\newcommand{\notindep}{\not \! \perp \!\!\! \perp}
\theoremstyle{plain}
\newtheorem{theorem}{Theorem}[section]
\newtheorem{proposition}[theorem]{Proposition}
\newtheorem{lemma}[theorem]{Lemma}
\theoremstyle{definition}
\newtheorem{assumption}[theorem]{Assumption}
\theoremstyle{remark}
\icmltitlerunning{Policy Learning for Balancing Short-Term and Long-Term Rewards}
\begin{document}

\twocolumn[
\icmltitle{Policy Learning for Balancing Short-Term and Long-Term Rewards}




\begin{icmlauthorlist}
\icmlauthor{Peng Wu}{btbu}
\icmlauthor{Ziyu Shen}{btbu}
\icmlauthor{Feng Xie}{btbu}
\icmlauthor{Zhongyao Wang}{alibaba}
\icmlauthor{Chunchen Liu}{alibaba}
\icmlauthor{Yan Zeng}{btbu}
\end{icmlauthorlist}


\icmlaffiliation{btbu}{School of Mathematics and Statistics, Beijing Technology and Business University}
\icmlaffiliation{alibaba}{LingYang, Alibaba Group, Hangzhou, China}

\icmlcorrespondingauthor{Yan Zeng}{yanazeng013@gmail.com}

\icmlkeywords{Machine Learning, ICML}

\vskip 0.3in
]


\printAffiliationsAndNotice{ }  

\begin{abstract}
Empirical researchers and decision-makers spanning various domains frequently seek profound insights into the long-term impacts of interventions. While the significance of long-term outcomes is undeniable, an overemphasis on them may inadvertently overshadow short-term gains. Motivated by this, this paper formalizes a new framework for learning the optimal policy that effectively balances both long-term and short-term rewards, where some long-term outcomes are allowed to be missing. In particular, we first present the identifiability of both rewards under mild assumptions. Next, we deduce the semiparametric efficiency bounds, along with the consistency and asymptotic normality of their estimators. We also reveal that short-term outcomes, if associated, contribute to improving the estimator of the long-term reward. Based on the proposed estimators, we develop a principled policy learning approach and further derive the convergence rates of regret and estimation errors associated with the learned policy. Extensive experiments are conducted to validate the effectiveness of the proposed method, demonstrating its practical applicability\footnote{Please note that our code is available at \url{https://github.com/YanaZeng/Short_long_term-Rewards}.}. 
\end{abstract}

\section{Introduction}

Empirical researchers and decision-makers usually seek profound insights into the long-term impact of interventions. For example, marketing professionals aim to understand how incentives influence customer behavior in the long term~\citep{Yang-etal2023}; IT companies explore the enduring effects of web page designs on user behavior~\citep{Hohnhold-etal2015-KDD}; economists examine the long-term impact of early childhood education on lifetime earnings~\citep{Chetty2011}; and medical practitioners investigate the impact of drugs on mortality in chronic diseases such as Alzheimer's and AIDS~\citep{Fleming1994}. Therefore, learning an optimal policy for personalized interventions to maximize long-term rewards holds significant practical implications.  

While long-term rewards are crucial, an exclusive focus on them may compromise short-term rewards, leading to ill-considered and suboptimal policies. Long-term effects can significantly differ from short-term effects~\citep{Kohavi-etal2012-KDD}, and in some cases, they may even exhibit opposing trends~\citep{chen2007criteria, ju2010criteria}. For instance, in video recommendation, the use of clickbait may initially boost click-through rates (CTR), but over the long term, it could lead to user churn and negatively impact a company's revenue~\citep{Wang2021-sigir}. 
In labor economics, individuals who participate in job training programs may initially experience a temporary decline in income but achieve elevated income levels and improved employment status in the following years~\citep{lalonde1986evaluating}. However, undue focus on future rewards would neglect the heavy pressure individuals can afford, which is unreasonable. Thus, achieving a balance between short-term and long-term rewards is desirable. 

This paper aims to learn the optimal policy that balances both long-term and short-term rewards. Policy learning refers to identifying individuals who should be given interventions based on their characteristics by maximizing rewards~\cite{murphy2003optimal}. Trustworthy policy learning necessitates that the learned policy also adheres to principles such as beneficence, non-maleficence, justice, and explicability~\cite{floridi2019establishing, thiebes2021trustworthy, kaur2022trustworthy}. However, the aspect of balancing short-term and long-term rewards in policy learning has not yet been explored.

Balancing short-term and long-term rewards presents some special challenges: akin to conventional policy learning methods, we need to address the confounding bias induced by factors that affect both treatment and short/long-term outcomes; long-term outcomes are hard to observe and often suffer from severe missing data due to extended follow-ups, drop-outs, and budget constraints~\citep{athey-etal2019, kallus2020role, hu2023longterm}; in addition, both short-term and long-term outcomes are post-treatment variables, with short-term outcomes influencing both the value and the missing rate of long-term outcomes~\citep{imbens2022long}. This is due to the fact that units are more likely to discontinue, experience churn, or fail to participate in follow-ups when short-term outcomes are not favorable. 

%
In this article, we propose a principled policy learning approach that effectively balances the short/long-term rewards. Specifically, we first define the short/long-term rewards and the optimal policy using the potential outcome framework~\cite{Rubin1974, Neyman1990} in causal inference. Then, we address confounding bias and the missingness of long-term outcomes by introducing two plausible assumptions, ensuring the identifiability of short/long-term rewards. 
To estimate short/long-term rewards for a given policy, we derive their efficient influence functions and semiparametric efficiency bounds. Building on this, we develop novel estimators that are shown to be consistent, asymptotically normal, and semiparametric efficient, i.e., they are optimal regular estimators in terms of asymptotic variance~\citep{Tsiatis-2006, Wu-etal-2024-Compare}. These results also reveal that short-term outcomes, if associated, contribute to the semiparametric efficiency bound of long-term reward. Additionally, the proposed estimators of short and long-term rewards enjoy the property of double robustness and quadruple robustness.  Finally, we learn the optimal policy based on the estimated short/long-term rewards, and further analyze the convergence rates of the regret and estimation error.



The contributions of this paper are summarized as follows.

~~$\bullet$ We propose and formulate a new setting of policy learning for balancing short-term and long-term rewards. The new setting has a wide range of application scenarios.

~~$\bullet$ We propose a principled policy learning approach for learning the optimal policy of balancing short-term and long-term rewards, by introducing plausible identifiability assumptions and novel estimation methods.

~~$\bullet$ We provide comprehensive theoretical analysis for the proposed approach, including identifiability results, semiparametric efficiency bounds, consistency and asymptotically normality of the estimators, as well as convergence rates of the regret and estimation error of the learned policy.

~~$\bullet$ We conduct extensive experiments to demonstrate the effectiveness of the proposed policy learning approach, verifying the superiority of taking both long-term and short-term rewards into consideration.

\section{Related Work}

{\bf Long-term causal effect estimation.} Exploring the long-term effect of the intervention has a wide range of applications in fields such as artificial intelligence, medical, clinical medicine, economics, and management~\citep{athey-etal2019}. 
 A salient feature of estimating long-term causal effects is that it takes a long time to collect long-term outcomes and is therefore difficult to observe. 
 To reduce the cost and time, and make timely decisions, researchers often look for easily observable short-term surrogates as substitutes for long-term outcomes, thereby transforming the problem of estimating long-term causal effects into estimating short-term causal effects~\citep{Yin-etal2020}. However, such strategies may suffer from the surrogate paradox~\citep{chen2007criteria}, i.e., treatment has a positive impact on a surrogate, which in turn has a positive effect on the outcome, but paradoxically, the treatment exhibits a negative effect on the outcome. 
 Subsequently, the selection of surrogates that matter has been studied for many years~\citep{prentice1989surrogate,frangakis2002principal,lauritzen2004discussion,chen2007criteria,ju2010criteria, Yin-etal2020}. Recently, inspired by the pioneering work of \citet{athey-etal2019}, several studies have emerged to identify and estimate the long-term causal effects using surrogates, such as \cite{kallus2020role, athey2020combining, chen2021semiparametric, cheng2021-WSDM,  hu2023longterm}. 
  Additionally, \citet{Yang2023} extend the work of \citet{athey-etal2019} to policy learning.

Unlike previous works that solely focus on long-term effects, we recognize that short-term effects are also of great importance in various applications. This paper considers short-term and long-term effects simultaneously.

{\bf Trustworthy policy evaluation and learning.} 
Policy learning aims to tailor treatments based on individual characteristics
~\cite{Kosorok+Laber:2019}. Early strategies for policy learning target maximizing the average rewards for an outcome~\citep{murphy2003optimal, Dudik2011-ICML, Zhao-etal2012, reg, Chen-etal2016, Wu-etal2022-framework, Wu-Tan2024, Wu-Han2024}.  
However, decisions made by algorithms to be trusted by humans have to take into account many other aspects besides maximizing rewards, such as beneficence, non-maleficence, harmlessness, autonomy, justice, and explicability~\cite{thiebes2021trustworthy, floridi2019establishing, kaur2022trustworthy, Wu-etal-2024-Harm}. Various causality-based metrics are proposed to evaluate the policy's trustworthiness~\cite{kusner2017counterfactual,nabi2018fair, chiappa2019path, wu2019pc, 2022NathanFacct, 2022nathan} and several trustworthy policy learning approaches are developed~\citep{Wang-etal2018, Kallus-Zhou2018, Qiu-etal2021, ben2022policy, Ding-etal2022, li2023trustworthy, Li-etal2023-EndtoEnd, Fang-etal2023}.

In this paper, we extend previous research and introduce a new setting that aims to learn the optimal policy for balancing short-term and long-term rewards, as well as   
 develop a principled approach. To the best of our knowledge, this is the first attempt to balance long and short-term rewards in policy learning under the causal inference framework. 
\section{Problem Formulation}

\subsection{Notation and Setup}

{\bf Notation}.  
Let $A$ be the binary treatment indicator, taking values 1 or 0 for the treated or control group, respectively. The vector $X \in \mathcal{X} \subset \mathbb{R}^p$ represents the observed pre-treatment features, and  $Y \in \mathcal{Y} \subset \mathbb{R}$ denotes the long-term outcome of interest. Additionally,  $S \in \mathcal{S} \subset \mathbb{R}$ denotes the short-term outcome that is informative about the long-term outcome $Y$ and measured after the treatment $A$. 

Under the potential outcome framework~\citep{Rubin1974, Neyman1990},  let $(S(1), Y(1))$ and $(S(0), Y(0))$ be the potential short-term and long-term outcomes with and without treatment, respectively.    
We assume that the actual short/long-term outcome corresponds to the potential outcome of the actual
treatment, i.e., $S = S(A)$ and $Y = Y(A)$, which implicitly implies the non-interference and consistency assumptions in causal inference~\cite{Imbens-Rubin2015}. 
Without loss of generality, we assume larger short/long-term outcomes are preferable. Each unit is assigned only one treatment, thus we always observe either $(S_i(0), Y_i(0))$ or $(S_i(1), Y_i(1))$ for unit $i$, which is also known as the fundamental problem of causal inference~\cite{Holland1986, Hernan-Robins2020}.

{\bf Setup.}  Long-term outcomes often suffer from missing due to factors such as long follow-ups, drop-out, and budget constraints. In contrast, it is easier to collect the short-term outcomes.  To mimic real-world application scenarios, we assume that all short-term outcomes $S$ are observable, while long-term outcomes $Y$ are allowed to be missing. 
Let $R \in \{0, 1\}$ be the indicator for observing the long-term outcome $Y$. Without loss of generality, the observed data consists of a subset $\{(X_i, A_i, S_i, Y_i, R_i = 1): i = 1, ..., n_1 \}$ with observed $Y$ and a subset   $\{(X_i, A_i, S_i, Y_i = \text{NA}, R_i = 0): i = n_1+1, ..., n_1 + n_0 \}$ with missing $Y$.  Let $n = n_0 + n_1$ and we assume the total $n$ units are a representative sample of the target population $\P$, denoting $\E$ as the expectation operator of $\P$. Table \ref{tab1} summarizes the data composition. 
The proposed method also works when there is no $Y$ missing, i.e., $R = 1$ for all units. 

\subsection{Formulation}



We here give formalization about learning an optimal policy that could strike a good balance between short-term and long-term rewards. 

Let $\pi: \mathcal{X}\to \{0, 1\}$ be a policy  that maps from the individual context $X=x$ to the treatment space $\{0, 1\}$. 
 For a given policy $\pi$, the policy values are defined as, 
 	\begin{align*}
	 \V(\pi; s) ={}& \E[ \pi(X) S(1) +(1-\pi(X)) S(0)  ], \\
	 \V(\pi; y) ={}& \E[ \pi(X) Y(1) +(1-\pi(X)) Y(0)  ],     
	\end{align*}
which are the expected short-term and long-term rewards given that $\pi$ is applied to the target population.  
 Then we formulate the goal as learning an optimal policy that satisfies 
 	\begin{align*}
 	     \begin{cases} \max_{\pi \in \Pi} \V(\pi; y) \\
	     		 \text{subject to }  \V(\pi; s)  \geq \alpha  	
 	     \end{cases}\text{or} \quad  	
 	     \begin{cases} \max_{\pi \in \Pi} \V(\pi; s) \\
	     		 \text{subject to }  \V(\pi; y)  \geq \alpha,  	
	     \end{cases}
 	\end{align*}	
 where $\alpha$ is a pre-specified threshold for minimum short-term or long-term rewards 
 and $\Pi$ is a pre-specified policy class.  The above two optimization problems can be expressed as 
	 \begin{equation}  \label{eq3}
	 	\max_{\pi \in \Pi}   \V(\pi; s)  + \lambda  \V(\pi; y),
	 \end{equation}
where $\lambda$ is a positive constant that controls the balance between short-term and long-term rewards. When $\lambda = 0$, Eq.\eqref{eq3} is equivalent to finding an optimal policy for maximizing the short-term reward alone; Conversely, when $\lambda = \infty$, it transforms into finding an optimal policy that maximizes the long-term reward alone.

\begin{table}[t!] 
\centering
\caption{Observed data, where $\checkmark$ and $\text{NA}$ mean observed and missing, respectively.}   \label{tab1}
\begin{tabular}{ cccc cc }    
\hline
$\textsc{Unit}$ &   $R$   &   $X$    &  $A$  &  $S$ & $Y$   \\
		\hline		
$1$ & $1$      &    \checkmark     &     \checkmark        &    \checkmark   & \checkmark                \\
$...$ &    $1$      &    \checkmark     &    \checkmark       &    \checkmark    & \checkmark              \\
$n_1$ &      $1$      &    \checkmark     &    \checkmark      &    \checkmark      &    \checkmark             \\
        \hline	
$n_1+1$ & $0$      &    \checkmark     &      \checkmark       &    \checkmark        & \text{NA}         \\
$...$ & $0$      &    \checkmark     &  \checkmark          &    \checkmark      &     \text{NA}     \\
$n$& $0$      &    \checkmark     &     \checkmark      &    \checkmark      &     \text{NA}         \\
\hline
\end{tabular}  
\end{table} 
\section{Optimal Policy and Challenges}

\subsection{Optimal Policy}
The optimal policy from maximizing Eq.\eqref{eq3} has an explicit form. Specifically, let $\tau_s(X) = \E[S(1) - S(0)| X]$ and $\tau_y(X) = \E[Y(1) - Y(0)| X]$  be the short-term and long-term causal effects conditional on $X$, then  we have 
\begin{gather}
\begin{align*}
	          & 	 \V(\pi; s)  + \lambda  \V(\pi; y) \\
		={}&  \E[  \pi(X) \{ S(1) - S(0) + \lambda(Y(1) - Y(0)) \} 
  + S(0) + \lambda Y(0)  ] \\
		={}& \E[  \pi(X)\{ \tau_s(X) + \lambda \tau_y(X) \} ] + \E[S(0) + \lambda Y(0)  ],
\end{align*}
\end{gather}
where the last equality follows from the law of iterated expectations. This implies the following Lemma \ref{lemma:optimal_policy}. 
\begin{lemma} \label{lemma:optimal_policy}
   The optimal policy  
	\begin{align*}
\pi^{*}_0(x) ={}& \arg \max_{\pi} \V(\pi; s)  + \lambda  \V(\pi; y)\\
={}& \arg \max_{\pi} \E[  \pi(X)\{ \tau_s(X) + \lambda \tau_y(X) \} ] \\
={}& 
\begin{cases}
 1,~~ \tau_s(x) + \lambda \tau_y(x) \geq 0 ~\\
 0,~~\tau_s(x) + \lambda \tau_y(x) < 0, \\   
\end{cases}
 	\end{align*}
 where $\arg \max$ is over all possible policies.  
\end{lemma}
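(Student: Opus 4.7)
The plan is to split the claim into two independent pieces: (i) the reduction of $\arg\max_\pi [\V(\pi;s) + \lambda \V(\pi;y)]$ to $\arg\max_\pi \E[\pi(X)\{\tau_s(X) + \lambda \tau_y(X)\}]$, and (ii) the derivation of the closed-form indicator. The first piece is already essentially done in the display immediately preceding the lemma. That display uses additivity of expectation and the tower property to write
\[
\V(\pi;s) + \lambda \V(\pi;y) = \E[\pi(X)\{\tau_s(X) + \lambda \tau_y(X)\}] + \E[S(0) + \lambda Y(0)].
\]
Since the second summand is a constant in $\pi$, dropping it does not change the argmax, giving the first equality.

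For the second piece, I would argue by pointwise optimization. Because the outer maximum is taken over the class of all measurable policies $\pi:\mathcal{X}\to\{0,1\}$, and the integrand $\pi(x)\{\tau_s(x) + \lambda \tau_y(x)\}$ depends on $\pi$ only through its value at $x$, the expectation is maximized by choosing $\pi(x)$ separately for each $x$ to maximize the integrand. For fixed $x$, the choice $\pi(x)=1$ yields value $\tau_s(x)+\lambda\tau_y(x)$, while $\pi(x)=0$ yields $0$; the larger of these is attained by $\pi(x)=\mathbf{1}\{\tau_s(x)+\lambda\tau_y(x)\geq 0\}$, matching the stated form (with the $\geq$ reflecting a convention for tie-breaking at the zero set, which is immaterial for the objective value). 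The selected mapping is measurable as the indicator of a Borel set in $x$, so it is a legitimate policy, and its expected value coincides with $\E[\max\{0,\,\tau_s(X)+\lambda\tau_y(X)\}]$, which by construction upper-bounds what any other policy can achieve.

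I do not anticipate any real obstacle: the proof is a one-line application of the iterated-expectation identity combined with a standard pointwise argmax. The only fine point to flag is that this unconstrained derivation does not extend to the case where $\pi$ is restricted to a pre-specified class $\Pi\subsetneq\{0,1\}^{\mathcal{X}}$, since the pointwise maximizer need not lie in $\Pi$; but that scenario is treated separately in the paper's policy-learning section and is not what Lemma~\ref{lemma:optimal_policy} claims.
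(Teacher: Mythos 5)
Your proposal is correct and follows essentially the same route as the paper: the paper's argument is exactly the iterated-expectation decomposition displayed just before the lemma (which makes the $\pi$-independent term $\E[S(0)+\lambda Y(0)]$ explicit), followed by the pointwise choice $\pi(x)=\mathbf{1}\{\tau_s(x)+\lambda\tau_y(x)\geq 0\}$ over the unrestricted policy class. Your added remarks on measurability of the selected policy and on why the argument does not transfer to a restricted class $\Pi$ are accurate but not needed for the claim as stated.
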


Lemma \ref{lemma:optimal_policy} suggests that for a unit with $X=x$, the optimal policy recommends accepting treatment ($A=1$) if the sum of the weighted short-term and long-term causal effects, $\tau_s(x) + \lambda \tau_y(x)$, is positive; otherwise, it recommends not accepting treatment ($A=0$). More generally, if taking treatment has a cost $c$ and define $\V(\pi; s)$ and $\V(\pi; y)$ as    
\begin{align} \label{eq:cost}
\begin{split}
\E[ \pi(X) \{S(1)-c\} +(1-\pi(X)) S(0)  ], \\
\E[ \pi(X) \{Y(1)-c\} +(1-\pi(X)) Y(0)  ],    
\end{split}
\end{align}
respectively. Then the optimal policy becomes $\pi_0^*(x) = \mathbb{I}(\tau_s(x) + \lambda \tau_y(x) \geq c)$.  This aligns with our intuition and the goal of balancing short-term and long-term rewards.


\subsection{Challenges}
There are two main challenges in learning the optimal policy for balancing short-term and long-term rewards. 
\begin{itemize}
    \item  Confounding bias occurs when the treatment is not randomly assigned, and certain factors may affect both the treatment $A$ and the outcomes $(S, Y)$~\citep{Correa-etal2019}. In such cases, the effects of these factors become confounded with the effect of treatment, making it challenging to obtain unbiased estimators of short-term and long-term causal effects.
    \item The long-term outcome $Y$ is not missing completely at random, indicating a systematic difference between observed data (i.e., $R=1$) and missing data (i.e., $R=0$). Moreover, both short-term and long-term outcomes are post-treatment variables, with short-term outcomes influencing both the value and the missing rate of long-term outcomes \citep{imbens2022long}.  
 \end{itemize}

The identifiability problem arising from these two challenges will be addressed in Section \ref{sec-5}. Interestingly, in Section \ref{sec6-1}, we discover that short-term outcomes can assist in enhancing the estimation of long-term rewards, thereby transforming part of the second challenge into an advantage. {See the discussion below Theorem \ref{thm-EIF} and Proposition \ref{prop-efficiency} for more details.}

\section{Identifiability} \label{sec-5}

We present identifiability assumptions for the short-term reward $\V(\pi; s)$ and the long-term reward $\V(\pi; y)$.

\begin{assumption}[Strongly Ignorability] \label{assump5-1} \quad 

 (a) $(S(a), Y(a)) \indep A \mid X$ for $a = 0, 1$; 

 (b) $0 < e(x) \triangleq \P(A=1 \mid X = x) < 1$ for all $x \in \mathcal{X}$.  
\end{assumption}

 Assumption \ref{assump5-1}(a) states that $X$ includes all confounders that affect both the outcomes $(S, Y)$ and treatment $A$, i.e., there are no unmeasured confounders.   
 Assumption \ref{assump5-1}(b) asserts that units with any given values of the features have a positive probability of receiving each treatment option. Both of them are standard assumptions in causal inference \citep{Imbens-Rubin2015, Hernan-Robins2020}.  

 Assumption \ref{assump5-1} ensures the identifiability of the short-term reward $\V(\pi; s)$, which is given as  
  \begin{equation}  \label{eq4}   \V(\pi; s) = \E[ \pi(X) \mu_1(X) + (1 - \pi(X)) \mu_0(X) ],      \end{equation}
 where $\mu_a(X) = \E[S| X, A=a]$ for $a = 0, 1$.

 
  To identify the long-term reward $\V(\pi; y)$, we need to impose further assumptions on the missing mechanism of $Y$. 

 \begin{assumption}[Missing Mechanism] \label{assump5-2}  
 For $a = 0, 1$, 
 
 (a)  $R \indep Y(a) \mid X, S(a), A = a$; 
 
 (b) $0 < r(a, x, s) \triangleq \P(R =1| X = x, A=a, S = s)$. 
\end{assumption}
 
  Assumption \ref{assump5-2}(a) can be equivalently expressed as $R \indep Y\mid (X, S, A)$. It implies that $\P(R=1|X, S, A, Y) = \P(R=1|X,S,A)$, i.e., the observing indicator $R$ depends on only the feature $X$, the treatment $A$ and short-term outcome $S$. This assumption also guarantees that $\P(Y=y|X, S, A, R=1) = \P(Y=y|X, S, A, R=0)$, i.e., the distribution of the long-term outcome on the missing data and non-missing data are comparable after accounting for the observed variables $(X, A, S)$. 
   Consequently, we can use the non-missing data to make inferences about the missing long-term outcome. 
Assumption \ref{assump5-2}(b) assumes that each unit has a positive probability of being observed.

Different from the conventional missing mechanism assumption "$R \indep (S(a), Y(a)) \mid X$`` that $R$ depends solely on $X$,  
Assumption \ref{assump5-2}(a) is weaker and allows $R$ to depend on $(X, A, S)$, i.e., the missing mechanism relies not only on the covariates but also on the treatment and short-term outcomes. In addition,   Assumption \ref{assump5-2}(a) is more realistic and aligns with real-world scenarios. This is because units are more likely to drop out,  churn, or fail in follow-up when short-term outcomes $S$ are not desirable. 
Assumptions \ref{assump5-1}-\ref{assump5-2} ensure the identifiability of $\V(\pi; y)$, as shown in Proposition \ref{prop5-3} (See Appendix \ref{app-a} for proofs).  
\begin{proposition}[Identifiability of $\V(\pi; y)$] \label{prop5-3} Under Assumptions \ref{assump5-1}-\ref{assump5-2}, the long-term reward $\V(\pi; y)$ is identified as 
	  	 	\begin{align*}
		     \V(\pi; y) 
		     		={}&  \E[ \pi(X)  \tilde m_1(X, S)  + (1 - \pi(X)) \tilde m_0(X, S) ],
		 \end{align*}    
where $\tilde m_a(X, S) = \E[Y | X, S, A=a, R=1 ]$ for $a = 0, 1$.   
\end{proposition}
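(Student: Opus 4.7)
The plan is to reduce $\V(\pi;y)$ to an observable quantity in two stages that correspond exactly to the two blocks of assumptions: first remove confounding between $A$ and the potential long-term outcomes via Assumption \ref{assump5-1}, and then handle the missingness of $Y$ via Assumption \ref{assump5-2}.

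First, I would split $\V(\pi;y)=\E[\pi(X)Y(1)]+\E[(1-\pi(X))Y(0)]$ and treat each arm separately. For each $a\in\{0,1\}$, the tower property gives $\E[\pi(X)Y(a)]=\E_{X}\bigl[\pi(X)\E[Y(a)\mid X]\bigr]$. Strong ignorability (Assumption \ref{assump5-1}(a)) lets me insert the event $A=a$, so that $\E[Y(a)\mid X]=\E[Y(a)\mid X,A=a]$, and consistency $Y=Y(A)$ collapses this to the observable regression $\E[Y\mid X,A=a]$; Assumption \ref{assump5-1}(b) ensures this regression is well-defined on the support of $X$. This step is entirely parallel to the identification of $\V(\pi;s)$ in Eq.~\eqref{eq4}.

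Second, I would iterate once more, this time over the short-term outcome $S$: $\E[Y\mid X,A=a]=\E\bigl[\E[Y\mid X,S,A=a]\mid X,A=a\bigr]$. The key step is to replace the inner regression, which is over the full (possibly unobserved) population, by its complete-case analogue $\tilde m_a(X,S)$. This is exactly what Assumption \ref{assump5-2}(a) — equivalent to $R\indep Y\mid(X,S,A)$ — provides: it yields $\E[Y\mid X,S,A=a]=\E[Y\mid X,S,A=a,R=1]=\tilde m_a(X,S)$, while Assumption \ref{assump5-2}(b) guarantees the positivity needed for $\tilde m_a(X,S)$ to be defined on the relevant support. Thus $\E[Y(a)\mid X]=\E[\tilde m_a(X,S)\mid X,A=a]$, and by consistency $S=S(A)$ the conditional distribution of $(X,S)$ given $A=a$ coincides with that of $(X,S(a))$. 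Summing the two arms, weighting by $\pi(X)$ and $1-\pi(X)$, and taking the outer expectation produces the claimed identity.

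The main subtlety is the second stage, not the first. Because both $S$ and $Y$ are post-treatment and the missingness of $Y$ is allowed to depend on $S$, a conventional ``missing-at-random given $X$'' argument would fail and introduce selection bias on the drop-out pattern. Assumption \ref{assump5-2}(a) is calibrated precisely so that conditioning on $(X,A,S)$ renders the complete-case regression unbiased for the full-population regression. Once this replacement is justified, the rest of the argument is a routine manipulation of iterated conditional expectations.
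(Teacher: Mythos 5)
Your proof is correct and follows essentially the same route as the paper's: iterated expectations combined with Assumption \ref{assump5-1} to insert $A=a$, Assumption \ref{assump5-2}(a) to insert $R=1$, and consistency to pass to $\tilde m_a(X,S)$. The only (immaterial) difference is the order of conditioning --- the paper conditions on $(X,S(a))$ in the potential-outcome world and then adds $A=a$ and $R=1$, while you first reduce $\E[Y(a)\mid X]$ to $\E[Y\mid X,A=a]$ and then iterate over $S$ --- and your final expression $\E[\pi(X)\,\E\{\tilde m_1(X,S)\mid X,A=1\}+(1-\pi(X))\,\E\{\tilde m_0(X,S)\mid X,A=0\}]$ agrees with the paper's displayed formula under the same reading (the $S$ in each term being the short-term outcome under the corresponding treatment arm).
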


\section{Policy Learning for Balancing Short-Term and Long-Term Rewards} \label{sec6}

The proposed method consists of the following two steps: (a) policy evaluation, estimating the short-term and long-term rewards $\V(\pi; s)$ and $\V(\pi; y)$ for a given policy; (b) policy learning, solving the optimization problem \eqref{eq3} based on the estimated values of  $\V(\pi; s)$ and $\V(\pi; y)$.

\subsection{Estimation of Short-Term and Long-Term Rewards} \label{sec6-1}

To fully leverage the collected data, we aim to derive the efficient estimators of $\V(\pi; s)$ and $\V(\pi; y)$ by resorting to the semiparametric efficiency theory \citep{Tsiatis-2006}. An efficient estimator, often considered the optimal estimator (or gold standard), is the one that achieves the semiparametric efficiency bound---the smallest possible asymptotic variance among all regular estimators given the observed data \citep{Newey1990, vdv-1998}.

To derive efficient estimators, we initially calculate the efficient influence function and the semiparametric efficiency bound of  $\V(\pi; s)$ and $\V(\pi; y)$. 
For clarity, we summarize the nuisance parameters in Table \ref{tab2} that are utilized in the following theory and all of them can be identified from the observed data. 

\begin{table}[t!] 
\centering
\caption{Nuisance parameters in the efficiency influence functions of  $\V(\pi; s)$ and $\V(\pi; y)$.}   \label{tab2}
\setlength{\tabcolsep}{2pt}
\resizebox{1\linewidth}{!}
{\begin{tabular}{ ll }
\hline
$\textsc{Quantity}$ &   $\textsc{Description}$    \\
\hline 
$e(X)=\P(A=1|X)$,  &  propensity score     \\
$r(A, X, S) = \P[R=1 | X, S, A]$, & selection score  \\
$\mu_a(X)=\E(S|X, A=a)$,  &  regression function for $S$   \\
$m_a(X) = \E[Y | X, A=a, R=1 ]$, & regression function for $Y$ \\
$\tilde m_a(X, S) = \E[Y | X, S, A=a, R=1 ]$, & regression function for $Y$ \\
\hline 
\end{tabular}} 
\end{table}


\begin{theorem}[Efficiency Bounds of $\V(\pi; s)$ and $\V(\pi; y)$] \label{thm-EIF}  Under Assumptions \ref{assump5-1} and  \ref{assump5-2},  we have that 

(a) the efficient influence function of $\V(\pi; s)$ is $\phi_s - \V(\pi; s)$,  
	\begin{gather*}
	    \begin{align*}
	        \phi_s &={}   \{ \pi(X) \mu_1(X) + (1 - \pi(X)) \mu_0(X) \} \\ 
	         +{}& \frac{\pi(X) A(S - \mu_1(X))}{e(X)}  +  \frac{(1 - \pi(X)) (1-A)(S - \mu_0(X))}{1 - e(X)},
	\end{align*}
	\end{gather*}
the associated semiparametric efficiency bound is $\text{Var}(\phi_s)$.

(b) the efficient influence function of $\V(\pi; y)$ is $\phi_y - \V(\pi; y)$,  
\begin{gather*}
	\begin{align*}
	\phi_{y} &={} \{ \pi(X) m_1(X) + (1 - \pi(X)) m_0(X) \} \\
	  +{}&  \frac{\pi(X) A R (Y - \tilde m_1(X, S))}{e(X) r(1, X, S)}  + \frac{\pi(X) A  ( \tilde  m_1(X, S) - m_1(X))}{e(X)}   \\
	  +{}&   \frac{(1 - \pi(X))(1-A) R (Y - \tilde  m_0(X, S))}{(1-e(X)) r(0, X, S)}   \\
	    +{}&  \frac{(1 - \pi(X))(1-A) ( \tilde m_0(X, S) - m_0(X)) }{1 - e(X)},
	\end{align*}	
\end{gather*}
the associated semiparametric efficiency bound is $\text{Var}(\phi_y)$. 
\end{theorem}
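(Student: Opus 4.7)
The plan is to derive the efficient influence functions (EIFs) by the standard semiparametric approach: factorize the observed-data likelihood, decompose the tangent space into its orthogonal components, compute the pathwise derivative of the target functional along each component, and identify the unique mean-zero, finite-variance function in the tangent space whose inner product with any submodel score recovers this derivative. By classical semiparametric theory \citep{Tsiatis-2006, Newey1990}, that function is the EIF, and its variance is the semiparametric efficiency bound. I would treat part (a) first as a warm-up since it is essentially the Hahn/Robins EIF for policy value, then treat part (b) which is the real work because of the selection indicator $R$ interacting with the post-treatment short-term outcome $S$.

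For part (a), the observed-data likelihood factors as $p(x)\, e(x)^a(1-e(x))^{1-a}\, p(s\mid x,a)$, giving three orthogonal tangent-space pieces with scores $s_1(x)$, $s_2(a\mid x)$, $s_3(s\mid x,a)$ (each mean-zero given its conditioning). Differentiating the identification formula in \eqref{eq4} along a regular parametric submodel, the contribution through $p(x)$ yields the outer $\pi(X)\mu_1(X)+(1-\pi(X))\mu_0(X)$ term, the contribution through $p(s\mid x,a)$ yields the two residuals $S-\mu_a(X)$ weighted by the inverse propensities $\pi(X)/e(X)$ and $(1-\pi(X))/(1-e(X))$, and the contribution through $p(a\mid x)$ is automatically zero because the outer functional does not depend on $e(\cdot)$. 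A direct check then shows the proposed $\phi_s$ lies in the tangent space, has mean $\V(\pi;s)$, and matches the pathwise derivative against every submodel score; the efficiency bound $\mathrm{Var}(\phi_s)$ follows immediately.

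For part (b), the observed-data likelihood acquires two further factors, $r(a,x,s)^r(1-r(a,x,s))^{1-r}$ and $p(y\mid x,s,a,r=1)^r$, so the tangent space has five orthogonal components with scores $s_1,\dots,s_5$. Starting from the identification in Proposition~\ref{prop5-3} (together with the observation $m_a(X)=\E[\tilde m_a(X,S)\mid X,A=a,R=1]$, which follows from Assumption~\ref{assump5-2}(a) by iterated expectations), I would guess the form of $\phi_y$ as an AIPW-type augmented estimator and verify it component by component: (i) the unconditional mean piece $\pi(X)m_1(X)+(1-\pi(X))m_0(X)$ is the projection onto the $s_1(x)$ direction; (ii) the innermost residual $Y-\tilde m_a(X,S)$ weighted by $\mathbb{I}(A=a)R/[e_a(X)r(a,X,S)]$ is the projection onto $s_5(y\mid x,s,a,r=1)$; (iii) the intermediate residual $\tilde m_a(X,S)-m_a(X)$ weighted by $\mathbb{I}(A=a)/e_a(X)$ is the projection onto $s_3(s\mid x,a)$; and (iv) the projections onto $s_2(a\mid x)$ and $s_4(r\mid x,a,s)$ vanish because of a cancellation (the outer functional does not depend on $e$ or $r$, and the proposed $\phi_y$ is already orthogonal to these subspaces by construction). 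Finally, the bound $\mathrm{Var}(\phi_y)$ is read off.

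The main obstacle will be correctly identifying the $\tilde m_a(X,S)-m_a(X)$ correction and keeping track of why two distinct regression functions enter: because $S$ sits causally between $A$ and $R$, the pathwise derivative through $p(s\mid x,a)$ involves $\tilde m_a$ (which further conditions on $S$), while the outer plug-in and the $R$-correction must be anchored at $m_a$ (which does not). A delicate but routine bookkeeping argument, using repeated iterated expectations and the fact that each score is mean-zero given its conditioning variables, is needed to verify that the cross-terms vanish and that $\E[\phi_y \cdot s_j]$ matches the pathwise derivative for $j=1,\dots,5$; the resulting $\phi_y-\V(\pi;y)$ then lies in the tangent space and is therefore the unique efficient influence function.
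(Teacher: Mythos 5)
Your proposal is correct and follows essentially the same route as the paper's own proof: factorize the observed-data likelihood, decompose the tangent space into the $X$, $A\mid X$, $S\mid X,A$, $R\mid X,A,S$ and $Y\mid X,S,A,R=1$ score directions, compute the pathwise derivative of the identified functional, and verify that the conjectured AIPW-type $\phi_s$ and $\phi_y$ reproduce this derivative against every submodel score (with the propensity and selection score directions dropping out) while lying in the tangent space. This is exactly the verification the paper performs via the terms $H_1$ through $H_8$, including the key bookkeeping that the derivative through $p(s\mid x,a)$ produces the $\tilde m_a(X,S)-m_a(X)$ correction while the outer term and $R$-correction are anchored at $m_a$ and $\tilde m_a$, respectively.
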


Theorem \ref{thm-EIF} presents the efficient influence functions of $\V(\pi; s)$ and $\V(\pi; y)$, which are crucial for constructing efficient estimators of the short-term and long-term rewards. 
From Theorem \ref{thm-EIF}(b), $S$ plays a role in $\phi_y$ through $\tilde m_a(X, S)$. If $S \indep Y | X$, then  $\tilde m_a(X, S) = m_a(X)$ under Assumptions \ref{assump5-1} and \ref{assump5-2},  and the role of $S$ vanishes. Proposition \ref{prop-efficiency} (See Appendix \ref{app-b} for proofs) further demonstrates it from the perspective of semiparametric efficiency bound.  

\begin{proposition} \label{prop-efficiency} 
 Under the conditions in Theorem \ref{thm-EIF}, if $S$ is associated with $Y$ given $X$, then the semiparametric efficiency bound of $\V(\pi; y)$ is lower compared to the case where $S \indep Y | X$, and the magnitude of this difference is
 \begin{align*}
    & \E\left[ \pi(X) \frac{ (1 - r(1, X, S)) \cdot  (\tilde m_1(X, S) -  m_1(X))^2}{e(X) r(1, X, S)} \right ] + \\
     {}& \E\left[ (1 - \pi(X)) \frac{ (1 - r(0, X, S)) \cdot (\tilde m_0(X, S) - m_0(X))^2}{(1-e(X)) r(0, X, S)}   \right ].
 \end{align*}  
 \end{proposition}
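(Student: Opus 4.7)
The strategy is to compare $\text{Var}(\phi_y)$ from Theorem \ref{thm-EIF}(b) with $\text{Var}(\bar{\phi}_y)$, where $\bar{\phi}_y$ is obtained by replacing $\tilde m_a(X,S)$ throughout the efficient influence function by $m_a(X)$. When $S \indep Y \mid X$ (so $\tilde m_a = m_a$), the two coincide and $\bar{\phi}_y$ is the EIF; when $S$ is associated with $Y$ given $X$, $\bar{\phi}_y$ represents the form one would use under the restricted submodel, and its variance is larger. The plan is to show $\text{Var}(\bar{\phi}_y) - \text{Var}(\phi_y)$ equals the claimed expression via the law of total variance, conditioning on $(X, A, S)$.

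Write $\phi_y = g(X) + h(X, A, S) + J$, where $g(X) = \pi m_1 + (1-\pi) m_0$, $h(X, A, S)$ is the sum of the two augmentation terms $\pi A(\tilde m_1 - m_1)/e$ and $(1-\pi)(1-A)(\tilde m_0 - m_0)/(1-e)$, and $J$ contains the two residual terms $\pi A R(Y - \tilde m_1)/(e\, r(1,X,S))$ and $(1-\pi)(1-A) R(Y - \tilde m_0)/((1-e)\, r(0,X,S))$. Then $\bar{\phi}_y = g(X) + \tilde J$, where $\tilde J$ is obtained from $J$ by substituting $m_a$ for $\tilde m_a$ in each numerator (the $h$ term vanishes when $\tilde m_a - m_a \equiv 0$).

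The pivotal intermediate step is the pair of conditional-mean identities
\[
\E[J \mid X, A, S] = 0, \qquad \E[\tilde J \mid X, A, S] = h(X, A, S),
\]
both of which follow from $\E[RY \mid X, S, A=a] = r(a,X,S)\,\tilde m_a(X,S)$ and Assumption \ref{assump5-2}(a). Combined, they give $\E[\phi_y \mid X, A, S] = \E[\bar{\phi}_y \mid X, A, S] = g + h$, so the law of total variance reduces the claim to
\[
\text{Var}(\bar{\phi}_y) - \text{Var}(\phi_y) = \E\!\left[\text{Var}(\tilde J \mid X, A, S) - \text{Var}(J \mid X, A, S)\right].
\]
The conditional variance difference is computed on $\{A=1\}$ and $\{A=0\}$ separately using the bias-variance identity $\E[(Y - m_a)^2 \mid X, S, A=a, R=1] = \text{Var}(Y \mid X, S, A=a, R=1) + (\tilde m_a - m_a)^2$, which on $\{A=1\}$ produces
\[
\text{Var}(\tilde J \mid X, A=1, S) - \text{Var}(J \mid X, A=1, S) = \frac{\pi(X)^2 \,(\tilde m_1 - m_1)^2\, (1 - r(1,X,S))}{e(X)^2\, r(1,X,S)},
\]
and symmetrically on $\{A=0\}$. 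Taking the outer expectation, using $\pi(X)^2 = \pi(X)$ (since $\pi$ is binary) and the iterated-expectation identity $\E[A\, f(X,S)] = \E[e(X)\, \E(f \mid X, A=1)]$, reassembles the two contributions into the two-term formula in the proposition. Non-negativity of each summand yields the claimed inequality, with equality exactly when $\tilde m_a \equiv m_a$, i.e.\ under $S \indep Y \mid X$.

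The main obstacle is the orthogonality identity $\E[\tilde J \mid X, A, S] = h(X, A, S)$; without it, $\phi_y$ and $\bar{\phi}_y$ would not share the same $(X,A,S)$-conditional mean, and the variance comparison would pick up cross-terms that obscure the clean formula. Once this identity is in place, the remainder is careful but routine bias-variance bookkeeping.
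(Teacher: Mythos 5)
Your proof is correct and takes essentially the same route as the paper: both compare $\text{Var}(\phi_y)$ with the variance of the influence function $\bar\phi_y$ obtained by replacing $\tilde m_a(X,S)$ with $m_a(X)$, and both reduce the difference to the bias--variance identity $\E[(Y-m_a(X))^2 \mid X,S,A=a,R=1] = \text{Var}(Y\mid X,S,A=a,R=1) + (\tilde m_a(X,S)-m_a(X))^2$ together with $\pi(X)^2=\pi(X)$. The only distinction is organizational: the paper expands both variances term by term (noting the covariance terms vanish), whereas you channel the same cancellation through the law of total variance given $(X,A,S)$ after verifying $\E[J\mid X,A,S]=0$ and $\E[\tilde J\mid X,A,S]=h(X,A,S)$, arriving at the identical final expression.
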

{Proposition \ref{prop-efficiency} shows that if $S$ is correlated with $Y$ given $X$ (i.e., $S \not \indep Y | X$), the efficiency bound of $\mathbb{V}(\pi; y)$ is lower compared to the case where $S \indep Y | X$. This demonstrates the theoretical role $S$ plays in estimating the long-term reward. 
Furthermore, by a similar proof as Theorem \ref{thm-EIF} and Proposition \ref{prop-efficiency}, we can show that the efficiency bound of $\mathbb{V}(\pi; y)$ is lower when incorporating $S$ compared to not using $S$. This provides additional evidence of the significant role $S$ plays in estimating $\mathbb{V}(\pi; y)$.}


Next, we propose the efficient estimators of $\V(\pi; s)$ and $\V(\pi; y)$. For simplicity,  we let $Z = (X, A, S, Y)$ and write $\phi_s$ and $\phi_y$ in Theorem \ref{thm-EIF} as 
	\[   \phi_s = \phi_s(Z; e, \mu_0, \mu_1),  \phi_y = \phi_y(Z; e, r,   m_0, m_1, \tilde m_0, \tilde m_1)      \]
to highlight their dependence on intermediate quantities $(e, \mu_0, \mu_1)$ and $(e, r,  m_0, m_1, \tilde m_0, \tilde m_1)$.

Denote $\hat e(x)$, $\hat \mu_a(x)$, $\hat{m}_a(x)$, $\hat{\tilde m}_a(x, s)$, and $\hat r(a, x, s)$ for $a=0,1$ as the estimators of $e(x)$, $\mu_a(x)$, $m_a(x)$, $\tilde m_a(x,s)$, and $r(a,x,s)$ respectively, using the sample-splitting \cite{wager2018estimation, Chernozhukov-etal-2018} technique (See Appendix \ref{app-c} for details). 
From Theorem \ref{thm-EIF}, it is natural to define the estimators of $\V(\pi; s)$ and $\V(\pi; y)$ as 
\begin{align} \label{eq5}
\begin{split}
\hat   \V(\pi; s)  ={}&  \frac 1 n  \sum_{i=1}^n \phi_s(Z_i; \hat e, \hat \mu_0, \hat \mu_1),  \\
\hat  \V(\pi; y) ={}&  \frac 1 n  \sum_{i=1}^n \phi_{y}(Z_i; \hat e, \hat r, \hat m_0, \hat m_1, \hat{\tilde m}_0, \hat{\tilde m}_1).     
\end{split}
\end{align}

\begin{proposition}[Unbiasedness] \label{prop6-2} We have that 

(a) (Double Robustness).  $\hat  \V(\pi; s)$ is an unbiased estimator of $ \V(\pi; s)$ if one of the following conditions is satisfied: 
 \begin{itemize}
     \item[(i)] $\hat e(x) =e(x)$, i.e., $\hat e(x)$ estimates $e(x)$ accurately;  
     \item[(ii)] $\hat \mu_a(x) = \mu(x)$ i.e., $\mu_a(x)$ estimates $\mu_a(x)$ accurately. 
 \end{itemize}

(b) (Quadruple Robustness). $\hat  \V(\pi; y)$ is  an unbiased   estimator of $ \V(\pi; y)$ if one of the following conditions is satisfied: 
	\begin{itemize}
		\item[(i)] $\hat e(x) = e(x)$ and $\hat {\tilde m}_a(x, s) =\tilde m_a(x, s)$; 
		\item[(ii)] $\hat e(x) = e(x)$ and $\hat r(a, x, s) = r(a, x, s)$;    

  \item[(iii)] $\hat m_a(x) = m_a(x)$ and $\hat {\tilde m}_a(x, s) = \tilde m_a(x, s)$; 
		\item[(iv)] $\hat m_a(x) = m_a(x)$ and $\hat r(a, x, s) = r(a, x, s)$. 
	\end{itemize}
\end{proposition}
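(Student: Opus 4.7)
The plan is to compute, under the sample-splitting convention that makes $(\hat e,\hat r,\hat\mu_a,\hat m_a,\hat{\tilde m}_a)$ independent of the evaluation fold, the population expectation of the influence-function summands $\phi_s(Z;e^*,\mu_0^*,\mu_1^*)$ and $\phi_y(Z;e^*,r^*,m_0^*,m_1^*,\tilde m_0^*,\tilde m_1^*)$ for arbitrary deterministic choices of the starred nuisances, and then check case-by-case that whenever the pair named in (a)(i)--(ii) or (b)(i)--(iv) is set to its true value the bias vanishes. By linearity and the symmetry between the $A=1$ and $A=0$ branches, it is enough to handle the ``treated'' piece. The two workhorse identities, used throughout, are
\begin{align*}
\bfE[A\,g(X,S)\mid X] &= e(X)\,\bfE[g(X,S)\mid X,A=1],\\
\bfE[R\,(Y-\tilde m_1(X,S))\mid X,A=1,S] &= 0,
\end{align*}
the first from Assumption~\ref{assump5-1}(a) and the second from Assumption~\ref{assump5-2}(a).

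For part~(a), split $\phi_s$ into the plug-in $\pi(X)\mu_1^*(X)+(1-\pi(X))\mu_0^*(X)$ and the two IPW residual corrections $\pi(X)A(S-\mu_1^*)/e^*$ and $(1-\pi(X))(1-A)(S-\mu_0^*)/(1-e^*)$, then take expectation given $X$. The first identity yields $\bfE[A(S-\mu_1^*)\mid X]=e(X)(\mu_1(X)-\mu_1^*(X))$. Under (ii) with $\mu_a^*=\mu_a$ this is zero, so the corrections vanish and only the plug-in survives, giving \eqref{eq4}. Under (i) with $e^*=e$, the weight $1/e^*=1/e$ turns the conditional mean of the correction into $\pi(X)(\mu_1-\mu_1^*)$, exactly cancelling the $\mu_1^*$ error in the plug-in. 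The $A=0$ branch is identical, establishing double robustness.

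For part~(b), focus on
\[
H_1^* = m_1^*(X) + \frac{AR(Y-\tilde m_1^*(X,S))}{e^*(X)\,r^*(1,X,S)} + \frac{A(\tilde m_1^*(X,S)-m_1^*(X))}{e^*(X)}.
\]
Iterating expectation first over $(R,Y)\mid(X,A,S)$ using the second identity, and then over $(A,S)\mid X$ using the first, one obtains
\[
\bfE[H_1^*\mid X] = m_1^*(X) + \frac{e(X)}{e^*(X)}\,\bfE\!\left[\tilde m_1 - m_1^* + \Big(\tfrac{r}{r^*}-1\Big)(\tilde m_1-\tilde m_1^*)\,\Big|\,X,A=1\right].
\]
Subtracting the identification target $\bfE[Y(1)\mid X]=\bfE[\tilde m_1(X,S)\mid X,A=1]$ collects the bias into the product form
\[
\Big(1-\tfrac{e}{e^*}\Big)\big(m_1^*(X)-\bfE[Y(1)\mid X]\big) \;+\; \tfrac{e}{e^*}\,\bfE\!\Big[\big(1-\tfrac{r}{r^*}\big)(\tilde m_1^*-\tilde m_1)\,\Big|\,X,A=1\Big].
\]
Each of the four conditions kills at least one factor in each summand: (i) uses $e^*=e$ on the first summand and $\tilde m_1^*=\tilde m_1$ on the second; (ii) uses $e^*=e$ and $r^*=r$; (iii) uses $\tilde m_1^*=\tilde m_1$ on the second summand and the tower identity $m_1(X)=\bfE[\tilde m_1(X,S)\mid X,A=1]=\bfE[Y(1)\mid X]$ (which holds under Assumptions~\ref{assump5-1}--\ref{assump5-2}) to zero the first; (iv) uses $r^*=r$ and the same tower identity for $m_1$. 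This gives quadruple robustness.

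The main obstacle is the reorganization in part~(b) that exposes the bias as a sum of products of nuisance errors; all remaining steps are short bookkeeping. Cases~(iii) and~(iv) are the most delicate because neither the propensity score nor the selection score need be correctly specified, so one must rely on the tower identity linking $m_1$ to the conditional distribution of $S\mid X,A=1$, combined with the specific algebraic form of $H_1^*$, to force the first summand of the bias to vanish. Once the product structure is in hand, each of (i)--(iv) collapses to a one-line check.
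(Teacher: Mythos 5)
Your proposal is correct and follows essentially the same route as the paper: with the cross-fitted nuisances treated as fixed functions, compute $\E[\phi_s]$ and $\E[\phi_y]$ by iterated conditioning on $(X,A,S)$ and then on $X$, and verify each configuration, your product-of-errors rearrangement being just a compact repackaging of the paper's general expression for $\E[\hat\V(\pi;y)]$ with arbitrary plug-ins. In cases (iii)--(iv) you rely on the same tower identity $m_a(X)=\E[\tilde m_a(X,S)\mid X, A=a]$ that the paper itself invokes (its step $\E[\tilde m_a(X,S)\mid X]=m_a(X)$), so your argument matches the paper's proof at the same level of detail.
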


Proposition \ref{prop6-2}(a) (See Appendix \ref{app-b} for proofs) shows the double robustness of $\hat \V(\pi; s)$, i.e., it is unbiased if either the propensity score or the regression functions can be accurately estimated. Similarly, 
Proposition \ref{prop6-2}(b) demonstrates the quadruple robustness of $\hat \V(\pi; y)$. These properties provide protection against inaccuracies of estimated intermediate quantities. Furthermore, 
the proposed estimators $\hat \V(\pi; s)$ and $\hat   \V(\pi; y)$ are efficient under some mild conditions, please see Theorem \ref{Asy} for more details.   



\subsection{Learning the Optimal Policy}  \label{sec6-2}

Let $\pi^* = \arg \max_{\pi \in \Pi}  \V(\pi; s)  + \lambda  \V(\pi; y)$ be the target policy, which equals to $\pi^*_0$ in Lemma \ref{lemma:optimal_policy} if $\pi^*_0 \in \Pi$; otherwise, they may not be equal, and their difference is the systematic error induced by limited hypothesis space of $\Pi$.        

Let $\hat \pi^*$ be the learned policy of $\pi^*$, derived by optimizing the estimated $U(\pi) \triangleq \V(\pi; s)  + \lambda  \V(\pi; y)$, i.e., 
\begin{equation}  \label{eq6}
	 \hat \pi^* =	\arg \max_{\pi \in \Pi}   \hat \V(\pi; s)  + \lambda  \hat \V(\pi; y) \triangleq \arg \max_{\pi \in \Pi} \hat U(\pi), 
	 \end{equation}
where $\hat \V(\pi; s)$ and $\hat \V(\pi; y)$ are defined in Eq.\eqref{eq5}. 

Next, we explore the properties of $\hat \pi^*$, which depend on the asymptotic properties of $ \hat \V(\pi; s)$ and $\hat \V(\pi; y)$.

\begin{theorem}[Asymptotic Properties]\label{Asy} We have that 
   
(a) if $|| \hat e(x) - e(x)  ||_2 \cdot  || \hat \mu_a(x) - \mu_a(x) ||_2   = o_{\P}(n^{-1/2})$ for all $x\in\mathcal{X}$ and $a\in \{0,  1\}$,  then $\hat \V(\pi; s)$ is a consistent  estimator of $\V(\pi; s)$, and satisfies   
    \[ \sqrt{n} \{\hat \V(\pi; s) -  \V(\pi; s) \}     \xrightarrow{d} N(0,  \sigma_s^2),  \]
  where $\sigma_s^2 = \text{Var}(\phi_s)$ is the semiparametric efficiency bound of $\V(\pi; s)$, and  $\xrightarrow{d}$ means convergence in distribution.

(b) if $|| \hat e(x) - e(x)  ||_2 \cdot  || \hat m_a(x) - m_a(x) ||_2   = o_{\P}(n^{-1/2})$ and $|| \hat r(a, x, s) - r(a, x, s)  ||_2 \cdot  || \hat {\tilde m}_a(x, s) - \tilde m_a(x, s) ||_2   = o_{\P}(n^{-1/2})$  for all $x\in\mathcal{X}$, $a\in \{0,  1\}$ and $s \in \mathcal{S}$, then $\hat \V(\pi; y)$ is a consistent estimator of $\V(\pi; y)$, and satisfies 
    \[ \sqrt{n} \{\hat \V(\pi; y) -  \V(\pi; y) \}     \xrightarrow{d} N(0,  \sigma_y^2),  \]
  where $\sigma_y^2$ is the semiparametric efficiency bound of $\V(\pi; y)$.  
\end{theorem}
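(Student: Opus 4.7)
The plan is to prove both parts by the standard double/debiased machine learning argument of \cite{Chernozhukov-etal-2018} adapted to the two cross-fitted estimators in Eq.\eqref{eq5}. Because the nuisance estimates are obtained via sample splitting, I can decompose
$$\sqrt{n}\{\hat{\V}(\pi;s)-\V(\pi;s)\} = \frac{1}{\sqrt{n}}\sum_{i=1}^{n}\{\phi_s(Z_i;e,\mu_0,\mu_1) - \V(\pi;s)\} + \sqrt{n}\,T_1 + \sqrt{n}\,T_2,$$
where $T_1$ is the centered empirical-process piece comparing $\phi_s$ at $\hat\eta$ versus the true $\eta=(e,\mu_0,\mu_1)$, and $T_2=\bfE[\phi_s(Z;\hat\eta)\mid\hat\eta]-\V(\pi;s)$ is the bias. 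The leading average satisfies a Lindeberg--L\'evy CLT with variance $\mathrm{Var}(\phi_s)=\sigma_s^2$, which is the semiparametric bound by Theorem \ref{thm-EIF}(a). Cross-fitting allows me to condition on the fold used to train $\hat\eta$ and bound $T_1$ by Chebyshev's inequality plus $L_2$-consistency of the nuisance estimates, giving $T_1=o_{\P}(n^{-1/2})$. The crux is $T_2$: by iterated expectations the bias collapses into a sum of cross-products of the form $(e(X)-\hat e(X))(\mu_a(X)-\hat\mu_a(X))/\hat e(X)$, precisely because $\phi_s$ is the efficient influence function and hence Neyman-orthogonal to its nuisances; Cauchy--Schwarz then bounds $|T_2|$ by the product norm in the assumption, which is $o_{\P}(n^{-1/2})$.

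For part (b) I would mirror the argument for $\phi_y$. The leading term $\frac{1}{\sqrt{n}}\sum_i\{\phi_y(Z_i;\eta)-\V(\pi;y)\}$ delivers the $N(0,\sigma_y^2)$ limit with $\sigma_y^2=\mathrm{Var}(\phi_y)$, matching the efficiency bound in Theorem \ref{thm-EIF}(b). The empirical-process remainder is again controlled by cross-fitting under $L_2$-consistency of the four nuisances $(\hat e,\hat r,\hat m_a,\hat{\tilde m}_a)$. The real work is the bias expansion $\bfE[\phi_y(Z;\hat\eta)\mid\hat\eta]-\V(\pi;y)$: I would split by treatment arm, apply iterated expectations first over $R\mid(X,A,S)$ using Assumption \ref{assump5-2}(a) and then over $(A,S)\mid X$ using Assumption \ref{assump5-1}(a), to show that the bias factorizes into a sum of two cross-product terms,
$$(\hat e(X)-e(X))(\hat m_a(X)-m_a(X)) \quad\text{and}\quad (\hat r(a,X,S)-r(a,X,S))(\hat{\tilde m}_a(X,S)-\tilde m_a(X,S)),$$
weighted by bounded ratios of the estimated scores. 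Applying Cauchy--Schwarz to each cross-product and invoking the two product-rate assumptions forces the bias to be $o_{\P}(n^{-1/2})$, delivering consistency, asymptotic normality, and efficiency.

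The main obstacle will be the bookkeeping in the bias expansion for $\phi_y$: two of the four nuisances ($e,m_a$) depend only on $X$ while the other two ($r,\tilde m_a$) depend on $(X,S)$, and the terms involving $m_a(X)$ and $\tilde m_a(X,S)$ must cancel against one another in just the right way via iterated expectations so that no lone, unpaired nuisance error survives the expansion. This algebraic cancellation is the rate-double-robustness counterpart of the quadruple-robustness identity already established in Proposition \ref{prop6-2}(b); once it is made explicit, the two surviving cross-products line up exactly with the two product-rate conditions stated in Theorem \ref{Asy}(b), and the remainder of the argument reduces to standard DML machinery.
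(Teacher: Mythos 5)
Your proposal is correct and follows essentially the same route as the paper: the same three-way decomposition into the oracle average $\frac{1}{\sqrt n}\sum_i\{\phi(Z_i;\eta)-\V\}$ (handled by the CLT, with variance equal to the efficiency bound from Theorem \ref{thm-EIF}), a centered empirical-process remainder controlled by cross-fitting and Chebyshev, and a conditional bias term killed by Neyman orthogonality plus Cauchy--Schwarz under the stated product-rate conditions. The only cosmetic difference is that the paper treats the bias via a Gateaux/Taylor expansion (first-order term vanishes, second-order term yields the cross-products $(\hat e-e)(\hat\mu_a-\mu_a)$, resp.\ $(\hat e-e)(\hat m_a-m_a)$ and $(\hat r-r)(\hat{\tilde m}_a-\tilde m_a)$), whereas you collapse it exactly by iterated expectations --- both reduce to the same cross-product bounds, so no substantive gap.
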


Theorem \ref{Asy} (See Appendix \ref{app-b} for proofs) establishes the consistency and asymptotic normality of proposed estimators $\hat \V(\pi; s)$ and $\hat \V(\pi; y)$. Additionally, these estimators are efficient, achieving the semiparametric efficiency bounds. Also, $\hat U(\pi)$ is the efficient estimator of $U(\pi)$ by the linearity of the influence function.  These desired properties hold under mild conditions concerning the convergence rate of estimated nuisance parameters, commonly used in causal inference \citep{Chernozhukov-etal-2018, Semenova-Chernozhukov}. These conditions are easily satisfied, provided that the nuisance parameters are estimated at the slower rate of $n^{-1/4}$, a criterion achievable by many flexible machine learning methods.  %

Based on the results in Theorem \ref{Asy}, we further explore the convergence rates of $U(\pi^*)- U(\hat \pi^*)$ and $U(\pi^*)-\hat U(\hat \pi^*)$, 
which are the regret of the learned policy, and error of the estimated reward of the learned policy, respectively. 

 \begin{proposition}[Regret and Estimation Error] \label{prop6-4} Suppose that for all $\pi\in\Pi$, $\pi(x)=\pi(x;\theta)$ is a continuously differentiable and convex function with respect to $\theta$, under the conditions in Theorem \ref{Asy}, we have
 
 (a) The expected reward of the learned policy is consistent, and $U(\hat \pi^*) - U(\pi^*) = O_{\mathbb{P}}(1 /\sqrt{n})$;

(b) The estimated reward of the learned policy is consistent, and 
$\hat U(\hat \pi^*) - U(\pi^*)= O_{\P}(1/\sqrt{n})$.  
\end{proposition}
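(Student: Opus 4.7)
\textbf{Proof plan for Proposition \ref{prop6-4}.}

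The plan rests on the standard two-step argument: first establish uniform control of $|\hat U(\pi) - U(\pi)|$ over $\pi \in \Pi$ at rate $n^{-1/2}$, and then combine this with the definitions of $\pi^*$ and $\hat\pi^*$ to get both the regret bound in (a) and the estimation error bound in (b). By linearity of the influence function, $\hat U(\pi) - U(\pi) = [\hat\V(\pi;s) - \V(\pi;s)] + \lambda[\hat\V(\pi;y) - \V(\pi;y)]$, so it suffices to control each summand. Theorem \ref{Asy} already gives pointwise $O_{\P}(n^{-1/2})$ rates, but here I need a uniform statement.

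First I would upgrade Theorem \ref{Asy} to uniform convergence. The influence function $\phi_s(Z;\eta) + \lambda\phi_y(Z;\eta)$ depends on $\pi$ only through the evaluation $\pi(X)$, which by hypothesis takes the form $\pi(X;\theta)$ with $\pi(x;\cdot)$ continuously differentiable in a finite-dimensional parameter $\theta$ lying in a bounded set (implicit in $\Pi$). This smoothness ensures the induced class of functions $\{z \mapsto \phi_s(z;\hat\eta) + \lambda\phi_y(z;\hat\eta) : \pi \in \Pi\}$ has bounded bracketing entropy (by standard results for smoothly parameterized classes, e.g., van der Vaart 1998, Ch.~19), hence is Donsker. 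Combined with the product-rate conditions on the nuisance estimators from Theorem \ref{Asy}, the cross terms from plugging in $\hat\eta$ are negligible uniformly in $\pi$, giving
\begin{equation*}
\sup_{\pi \in \Pi} |\hat U(\pi) - U(\pi)| = O_{\P}(n^{-1/2}).
\end{equation*}

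For part (a), I would apply the standard oracle decomposition. Since $\hat\pi^*$ maximizes $\hat U$ over $\Pi$,
\begin{align*}
0 \le U(\pi^*) - U(\hat\pi^*)
&= [U(\pi^*) - \hat U(\pi^*)] + [\hat U(\pi^*) - \hat U(\hat\pi^*)] \\
&\quad + [\hat U(\hat\pi^*) - U(\hat\pi^*)] \\
&\le 2\sup_{\pi \in \Pi} |\hat U(\pi) - U(\pi)| = O_{\P}(n^{-1/2}),
\end{align*}
using that the middle bracket is $\le 0$. For part (b), write
\begin{equation*}
\hat U(\hat\pi^*) - U(\pi^*) = [\hat U(\hat\pi^*) - U(\hat\pi^*)] + [U(\hat\pi^*) - U(\pi^*)],
\end{equation*}
where the first bracket is bounded by the uniform convergence and the second by part (a), yielding $O_{\P}(n^{-1/2})$.

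The main obstacle is the uniform convergence step: Theorem \ref{Asy} is stated for a fixed $\pi$, and bridging to the supremum over $\Pi$ requires the Donsker-type entropy argument sketched above, together with verifying that the second-order remainder terms arising from the plug-in nuisance estimators $(\hat e, \hat r, \hat\mu_a, \hat m_a, \hat{\tilde m}_a)$ remain $o_{\P}(n^{-1/2})$ uniformly in $\pi$. This uniform remainder control is the delicate piece, and is where the convexity/smoothness hypothesis on $\pi(x;\theta)$ is used in an essential way; once it is in place, the remainder of the argument is routine algebra on maximizers.
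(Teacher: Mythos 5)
Your proposal is correct, but it follows a genuinely different route from the paper. You prove a uniform bound $\sup_{\pi\in\Pi}|\hat U(\pi)-U(\pi)|=O_{\P}(n^{-1/2})$ via a Donsker/bracketing-entropy argument for the smoothly parameterized class (exploiting that $\phi_s+\lambda\phi_y$ enters linearly through the bounded quantity $\pi(X)$, so the nuisance remainders are controlled uniformly), and then obtain (a) by the standard oracle decomposition $0\le U(\pi^*)-U(\hat\pi^*)\le 2\sup_\pi|\hat U(\pi)-U(\pi)|$ and (b) by adding one more triangle-inequality step. The paper instead first proves (b): it establishes the pointwise expansion $\sqrt{n}\{\hat U(\pi)-U(\pi)\}\xrightarrow{d}N(0,\sigma^2)$ from the proof of Theorem \ref{Asy}, and then invokes Shapiro's (1991) result on the asymptotics of optimal values of convex programs (Lemma \ref{shapiro}), which is exactly where the convexity of $\pi(x;\theta)$ in $\theta$ and the linearity of $\hat U$ in $\pi$ are used; part (a) is then deduced from (b) plus the expansion applied at $\hat\pi^*$. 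Your approach buys a self-contained and arguably cleaner argument in the Athey--Wager style: it handles the randomness of $\hat\pi^*$ explicitly through the supremum, whereas the paper's derivation of (a) applies a fixed-$\pi$ bound at the data-dependent policy, which is only fully rigorous once a uniform (functional) convergence statement is in place — precisely the hypothesis Shapiro's lemma requires of $\hat\psi_n-\psi$ as a random element of $C(\Theta)$, so some version of your Donsker step is implicitly needed on the paper's route as well. One small correction to your own commentary: in your argument convexity of $\pi(x;\theta)$ is not actually essential — compactness and smooth (Lipschitz) parameterization suffice for the entropy bound and uniform remainder control; convexity is essential only on the paper's route, where it licenses the application of the convex-programming perturbation lemma.
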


Proposition \ref{prop6-4} (See Appendix \ref{app-b} for proofs) demonstrates that both  
 the regret of the learned policy $U(\hat \pi^*) - U(\pi^*)$ and estimation error of the estimated reward $\hat U(\hat \pi^*) - U(\pi^*)$ exhibit a convergence rate of order $1/\sqrt{n}$ for parametric policy classes.These results hold under mild assumptions commonly adopted in practice \cite{puterman2014markov, sutton2018reinforcement}. 

\subsection{Extension to Multi-Valued Treatments and Multiple Short-Term and Long-Term Rewards}

The proposed method could be readily extended to the case of multi-valued treatments and multiple short-term rewards. 
Specifically, suppose that the treatment $A$ takes values in $\{1, ..., K\}$ and we have $M$ short-term rewards and $J$ long-term rewards,  denoted as $(S_1, ..., S_M)$ and $(Y_1, ..., Y_J)$, respectively. Using the potential outcome framework, we define $(S_1(a), ..., S_M(a), Y_1(a), ..., Y_J(a))$ as the potential short/long-term outcomes if treatment $A$ had been set to $a$. The short-term and long-term rewards are defined as 
$\mathbb{V}(\pi; s_1) = \mathbb{E}[\sum_{a=1}^K S_1(a) ]$, $\cdots$, $\mathbb{V}(\pi; s_M) = \mathbb{E}[\sum_{a=1}^K S_M(a) ]$, and $\mathbb{V}(\pi; y_1) = \mathbb{E}[\sum_{a=1}^K Y_1(a) ]$, $\cdots$, $\mathbb{V}(\pi; y_J) = \mathbb{E}[\sum_{a=1}^K Y_J(a)$.

Then we can formulate the goal by learning an optimal policy that maximizes $$\sum_{j=1}^J \lambda_{m} \mathbb{V}(\pi; y_j) + \sum_{m=1}^M \beta_{m} \mathbb{V}(\pi; s_m),$$ where $(\lambda_1, ..., \lambda_J)$ and $(\beta_1, ..., \beta_M)$ are the weights that balance the short-term and long-term rewards. In addition, all the theoretical analysis in Sections \ref{sec-5}, \ref{sec6-1}, and \ref{sec6-2} can also be extended to such cases.

\section{Experiments}
\begin{table*}[t!]
\centering
\caption{Comparison of the baselines, \textsc{Naive-S} (maximizing short-term rewards alone), \textsc{Naive-Y} (maximizing long-term rewards alone), and our method (\textsc{OURS}) with DM, OR, IPW and our proposed (\textsc{PRO.}) estimators. They are reported
in terms of the rewards, welfare changes, and policy errors on \textsc{Jobs} and \textsc{PRODUCT}. Different balance factors are employed for the estimation and evaluation, $\lambda=0, 0.5, 1$, where the expected short-term and long-term rewards are estimated by outcome regression and multi-layer perceptron regression methods. Higher reward/$\Delta$W and lower error mean better performance.}
\begin{sc}
\resizebox{0.9 \linewidth}{!}{
\begin{tabular}{l | lll | lll | lll}
  \toprule
   \multicolumn{1}{c|}{ \textsc{JOBS}}  
   & \multicolumn{3}{c|}{Short-term metrics}  
   & \multicolumn{3}{c|}{\textbf{Balanced metrics}}
   & \multicolumn{3}{c}{Long-term metrics}\\
  \midrule
  \multicolumn{1}{l|}{Methods}   & 
  \multicolumn{1}{c}{Reward} & 
  \multicolumn{1}{c}{$\Delta$W} & \multicolumn{1}{c|}{ error} & 
  \multicolumn{1}{c}{Reward} & 
  \multicolumn{1}{c}{$\Delta$W} & \multicolumn{1}{c|}{error} &
  \multicolumn{1}{c}{Reward} & 
  \multicolumn{1}{c}{$\Delta$W} & \multicolumn{1}{c}{error} \\  
  \midrule
 \multirow{1}{*}{DM (NAIVE-S)}
  & \textbf{2011.2}&\textbf{736.4}&\textbf{0.295}&2000.0&\textbf{730.8}&0.418&-11.2&-11.2& 0.502\\
  \multirow{1}{*}{DM (NAIVE-Y) } 
&1554.6&279.9&0.572&1885.7&445.4&0.444&\textbf{331.0}&\textbf{331.0}&\textbf{0.423}\\
  \multirow{1}{*}{DM (OURS)}
&1811.4&536.6&0.439&\textbf{2069.6}&665.7&\textbf{0.404}&258.2&258.2&0.441 \\
   \multirow{1}{*}{OR (NAIVE-S)}
   &1487.6&212.8& 0.543&2631.0& 200.5&0.530&\textbf{1143.4}&\textbf{-24.6}&\textbf{0.506}\\
  \multirow{1}{*}{OR (NAIVE-Y) } 
  &\textbf{1616.0}&\textbf{341.2}&\textbf{0.489}&\textbf{2685.8}&\textbf{292.1}&\textbf{0.518}&1069.8& -98.1&0.524\\
  \multirow{1}{*}{OR (OURS)}
    &1559.8& 285.0&0.505&2661.5&251.9&0.522&1101.7&-66.3& 0.515 \\
  \multirow{1}{*}{IPW (NAIVE-S)}
  &\textbf{1654.6}&\textbf{379.8}&\textbf{0.505}&2782.8&\textbf{359.9}&0.497&1128.2&-39.7&0.508 \\
  \multirow{1}{*}{IPW (NAIVE-Y) } 
  &1575.3&300.6& 0.531&2807.9&332.9& 0.493&\textbf{1232.6}&\textbf{64.6}&\textbf{0.484}\\
  \multirow{1}{*}{IPW (OURS)}
  &615.3&340.6&0.514&\textbf{2809.2}&353.5&\textbf{0.491}&1193.9&25.9&0.493   \\
  \multirow{1}{*}{Pro. (NAIVE-S)}
  & \textbf{1698.0} & \textbf{423.2}&\textbf{0.465}&2845.4& 413.0 & 0.484 & 1147.3 & -20.6 & 0.504 \\
  \multirow{1}{*}{Pro. (NAIVE-Y)}
  &1605.8& 331.1&0.507&2866.2& 377.3&0.479 &\textbf{1260.4}&\textbf{92.4}&\textbf{0.476}\\
  \multirow{1}{*}{Pro. (OURS)} 
  &1657.2& 382.5& 0.481&\textbf{2895.4}&\textbf{417.6}&\textbf{0.473}&1238.1&70.2& 0.482\\
  \midrule
  \midrule
\multicolumn{1}{c|}{ \textsc{PRODUCT}}  
   & \multicolumn{3}{c|}{Short-term metrics}  
   & \multicolumn{3}{c|}{\textbf{Balanced metrics}}
   & \multicolumn{3}{c}{Long-term metrics}\\
  \midrule
  \multicolumn{1}{l|}{Methods}   & 
  \multicolumn{1}{c}{Reward} & 
  \multicolumn{1}{c}{$\Delta$W} & \multicolumn{1}{c|}{ error} & 
  \multicolumn{1}{c}{Reward} & 
  \multicolumn{1}{c}{$\Delta$W} & \multicolumn{1}{c|}{error} &
  \multicolumn{1}{c}{Reward} & 
  \multicolumn{1}{c}{$\Delta$W} & \multicolumn{1}{c}{error} \\  
  \midrule
  \multirow{1}{*}{DM (NAIVE-S)}
  & 2714.8& 473.7& 0.549&4868.8& 447.8& 0.529&\textbf{2153.9}& -51.7&0.503\\
  \multirow{1}{*}{DM (NAIVE-Y)}  
  &2763.9& 522.8& 0.555&4898.2& 487.1& 0.525&2134.2&-71.4&0.505 \\
  \multirow{1}{*}{DM (OURS)}
  & \textbf{2806.5}&\textbf{563.6}&\textbf{0.532}&\textbf{4975.4}&\textbf{534.1}&\textbf{0.518}&2131.2&\textbf{-21.2}&\textbf{0.498}\\
  \multirow{1}{*}{OR (NAIVE-S)}
  & \textbf{2929.0}&692.1&\textbf{0.496}&5082.9&682.5&0.500&\textbf{2153.5}& -19.0& 0.502\\
  \multirow{1}{*}{OR (NAIVE-Y)}  
  & 2921.0&684.5& 0.500&\textbf{5127.9}&\textbf{691.6}&\textbf{0.496}&\textbf{2206.8}&\textbf{14.1}&\textbf{0.497}\\
  \multirow{1}{*}{OR (OURS)}
  & 2924.8&\textbf{693.6}& 0.501&5104.1&688.4& 0.498&2179.3&-10.4&0.498\\
  \multirow{1}{*}{IPW (NAIVE-S)}
  & \textbf{2929.8}&692.8&\textbf{0.500}&5101.3&692.1&0.496&2171.5& -1.4&0.498\\
  \multirow{1}{*}{IPW (NAIVE-Y)}  
  &2914.3&677.8& 0.506&\textbf{5127.9}& 689.0&0.497&\textbf{2215.2}&\textbf{22.5}&\textbf{0.494} \\
  \multirow{1}{*}{IPW (OURS)}
  & 2927.1&\textbf{695.9}&0.502&5116.4&\textbf{695.7}&\textbf{0.497}&2189.3& -0.3& 0.497\\
  \multirow{1}{*}{Pro. (NAIVE-S)}
  & 2965.4& 718.8&\textbf{0.485}&5176.6& 740.8&0.490&2245.8& 73.0&0.490\\
  \multirow{1}{*}{Pro. (NAIVE-Y)}  
  &2938.8& 702.5& 0.490&5171.8&722.3&0.490&\textbf{2294.1}&\textbf{99.7}&\textbf{0.487} \\
  \multirow{1}{*}{Pro. (OURS)}
  &\textbf{2968.8}&\textbf{734.8}&0.488&\textbf{5183.4}&\textbf{742.7}&\textbf{0.490}&2253.8& 52.4&0.492 \\
  \bottomrule
  \end{tabular}}%
  \end{sc}
\label{tab:fixed_missing}%
\end{table*}%

\subsection{Experimental Setup}

\paragraph{Datasets.} 
We perform extensive experiments on three widely used benchmark datasets, 
\textsc{IHDP}~\cite{hill2011bayesian}, \textsc{JOBS} \cite{lalonde1986evaluating}, and PRODUCT~\cite{gao2022kuairec}. The \textsc{IHDP} dataset investigates the effects of high-quality home visits on the children's future cognitive scores. It consists of 747 units (139 treated, 608 controlled) and 25 features that measure the characteristics of the children and their mothers. 
Note that we observe only one outcome from one treatment for each unit, and both datasets do not collect the long-term effects. Thus, following previous generation mechanisms~\cite{cheng2021-WSDM, li2023trustworthy}, we simulate the potential short-term outcomes as follows: 
\begin{align} \label{eq:idhp_s}
\begin{split}
S_i(0) &\sim \mathrm{Bern}(\sigma(w_0X_i+\epsilon_{0, i})),\\
S_i(1) & \sim  \mathrm{Bern}(\sigma(w_1X_i+\epsilon_{1, i})),\\
\end{split}
\end{align}
where $\sigma(\cdot)$ is the sigmoid function, $w_0 \sim \mathcal{N}_{[-1,1]}(0, 1)$ follows a truncated normal distribution, $w_1 \sim \mathrm{Unif}(-1, 1)$ follows a uniform distribution, $\epsilon_{0, i} \sim \mathcal{N}(\mu_0, \sigma_0)$, and $\epsilon_{1, i} \sim \mathcal{N}(\mu_1, \sigma_1)$. 
We set $\mu_0=1, \mu_1=3$ and $ \sigma_0=\sigma_1=1$ for \textsc{IHDP} dataset. 
Regarding generating long-term outcomes $Y_{i}(0)$ and $Y_{i}(1)$, we introduce the time step $t$: we set the initial value at time step $0$ as $Y_{0, i}(0) = S_i(0), Y_{0, i}(1) = S_i(1)$, then 
generate $Y_{t, i}(0), Y_{t, i}(1)$ following Eq.\eqref{eq:idhp_y}, and we eventually regard the outcome at the last time step $T$ as the long-term reward, $Y_{i}(0)=Y_{T, i}(0), Y_{i}(1) = Y_{T, i}(1)$.
\begin{align} \label{eq:idhp_y}
\begin{split}
Y_{t,i}(0) & \sim  \mathcal{N}(\beta_0X_i, 1) + C \sum\nolimits_{j=0}^{t-1} Y_{j,i}(0) ,\\
Y_{t,i}(1) & \sim  \mathcal{N}(\beta_1X_i + 2, 0.5) + C \sum\nolimits_{j=0}^{t-1} Y_{j,i}(1),\\
\end{split}
\end{align}
where $\beta_0$ is randomly sampled from $\{0, 1, 2, 3, 4\}$ with probabilities $\{0.5,0.2,0.15,0.1,0.05\}$, $\beta_1 \sim 4\cdot \mathcal{N}_{[0,4]}(0, 1)$, and $C=0.02$ is a scaling factor.

The second dataset, \textsc{JOBS}, explores the effects of job training on income and employment status. It consists of 2,570 units (237 treated, 2,333 controlled), with 17 covariates from observational studies. We employ Eq.\eqref{eq:idhp_s} to simulate short-term outcomes with $\mu_0=0, \mu_1=2$ and $\sigma_0=\sigma_1=1$. We generate long-term outcomes in the similar way as \textsc{IHDP} with the following generation mechanism,
\begin{align} \label{eq:jobs_y}
\begin{split}
Y_{t, i}(0) & \sim  \mathrm{Bern}(\sigma(\beta_0X_i)+C \sum\nolimits_{j=0}^{t-1} Y_{j,i}(0)) + \epsilon_{0, i},\\
Y_{t, i}(1) & \sim  \mathrm{Bern}(\sigma(\beta_1X_i)+C \sum\nolimits_{j=0}^{t-1} Y_{j,i}(0)) + \epsilon_{1, i},\\
\end{split}
\end{align}
where for $\epsilon_{0, i}$ and $\epsilon_{0, i}$, we set $\mu_0=\mu_1=0, \sigma_0=1$ and $\sigma_1=0.5$, and $C=0.02/t$. 
Eventually, we adopt the mechanism below to generate the missing indicator $R$: we first calculate the value of $\text{score}_i = S_i + \sum_{j=1}^d X_{ij}$ for each unit $i$, where $X_{ij}$ is the $j$-th element of $X_i$, and $d$ is the dimension of $X_i$. Then, we specify a missing ratio $\gamma$ and select the $\gamma N$ units with the highest $\text{score}_i$ to be missing.  


The third dataset PRODUCT is collected from a short video-sharing platform, and it is an almost fully observed industrial dataset. There are 4,676,570 samples from 1,411 users on 3,327 items with a density of 99.6\%. We choose video-watching ratios greater than two as 1, otherwise as 0, being short-term outcomes, and generate the long-term outcomes $Y$ the same as Eq. (\ref{eq:jobs_y}). 
The dataset is available at \url{https://github.com/chongminggao/KuaiRec}.

\subsection{Experimental Results}

\noindent\textbf{Experimental details.} 
We aim to learn the optimal policy based on the efficient estimators of long-term reward $\hat \V(\pi; y)$ and short-term reward $\hat \V(\pi; s)$. 
For ease of comparison, we transform the optimization problem into $\arg \max_{\pi \in \Pi}   (1-\lambda)\hat \V(\pi; s)  + \lambda  \hat \V(\pi; y)$, where $\lambda$ is a balance factor between short and long-term rewards. Note that this transformation would not influence the theoretical results shown in Sections \ref{sec-5}-\ref{sec6}. 
Subsequently, 
we include three baselines (DM, IPW, and OR) for comparison.
%
Based on Lemma \ref{lemma:optimal_policy}, the DM (direct method) estimates the optimal policy with
$\hat \pi^*(x) = \mathbb{I}(\hat \tau_s(x) + \lambda \hat \tau_y(x) \geq 0)$, where $\hat \tau_s(x) =\hat \mu_1(x) - \hat \mu_0(x)$ and $\hat \tau_y(x) =\hat m_1(x) - \hat m_0(x)$.
%
%
The IPW (inverse probability weighting) estimators of the short-term and long-term rewards are given as, 
\begin{gather*}
\begin{align*}
\mathbb{\hat V}(\pi; s)^{IPW}={}&\frac{1}{n}\sum_{i=1}^n \Big [\frac{\pi(X_i)A_i S_i }{\hat e(X_i)} + \frac{(1-\pi(X_i)) (1-A_i)S_i }{1 - \hat e(X_i)}\Big ], \\
\mathbb{\hat V}(\pi; y)^{IPW} ={}& \frac{1}{n} \sum_{i=1}^n \Big [ \frac{ \pi(X_i) A_i R_i Y_i }{\hat e(X_i) \hat r(1, X_i, S_i)} + \frac{( -\pi(X_i))(1-A_i)R_iY_i}{(1-\hat e(X_i)) \hat r(0,X_i,S_i)}\Big ].
 \end{align*} 
\end{gather*}
The OR (outcome regression) estimators are given as 
\begin{gather*}
\begin{align*}
\mathbb{\hat V}(\pi; s)^{OR}={}& \frac{1}{n} \sum_{i=1}^n [ \pi(X_i) \hat \mu_1(X_i) + (1- \pi(X_i)) \hat \mu_0(X_i) ], \\
 \mathbb{\hat V}(\pi; y)^{OR} ={}& \frac{1}{n} \sum_{i=1}^n [ \pi(X_i) \hat{\tilde m}_1(X_i, S_i) + (1- \pi(X_i)) \hat{\tilde m}_0(X_i, S_i) ].
 \end{align*} 
\end{gather*}
For each new baseline, we compare three different optimization strategies: $\textsc{Naive-S}$ ($\lambda = 0$), $\textsc{Naive-Y}$ ($\lambda = 1$), and Ours (Balanced, $\lambda = 0.5$).


We report the rewards, the changes in welfare, and policy errors with different balance factors. 
Formally, the short-term reward of the learned policy $\hat \pi (X)$ is $\hat \V(\pi; s) = \sum \nolimits_{i=1}^{n} [\hat \pi(X) S(1) + (1-\hat \pi(X))S(0)]$ with $\lambda = 0$, the long-term is $\hat \V(\pi; y) = \sum \nolimits_{i=1}^{n} [\hat \pi(X) Y(1) + (1-\hat \pi(X))Y(0)]$ with $\lambda = 1$, and the balanced reward is $0.5 \hat \V(\pi; s) + 0.5\hat \V(\pi; Y)$ with $\lambda = 0.5$. 
Similar as~\citet{2018Who,li2023trustworthy}, the welfare changes are defined as $\Delta W_s = \sum \nolimits_{i=1}^{n}\left[(S_i(1) - S_i(0))\cdot \hat \pi(X_i)\right]$ for short-term-based ($\lambda=0$),  $\Delta W_y = \sum \nolimits_{i=1}^{n}\left[(Y_i(1) - Y_i(0))\cdot \hat \pi(X_i)\right]$ for long-term-based ($\lambda=1$), and  $0.5\Delta W_s + 0.5\Delta W_y$ for the overall balanced-based rewards ($\lambda=0.5$).   
The policy error is defined as  $1/n \sum \nolimits_{i=1}^{n} || \pi^*_0 (X_i) - \hat \pi (X_i)||^2$, which is the mean square errors between the estimated policy $\hat \pi(X)$ and the optimal policy $\pi_0^*(X)$ in Lemma   \ref{lemma:optimal_policy}. The value of $\pi^*_0(X_i)$ are derived with different $\lambda$ as well. Among these evaluation metrics, 
BALANCE REWARD is the most critical here, as it directly 
underscores the need for a harmonious trade-off between immediate gains and sustained benefits. 

\begin{figure}[t!]
\centering
\subfigure[Correlated case on \textsc{IHDP}]{
\includegraphics[width=0.23\textwidth]{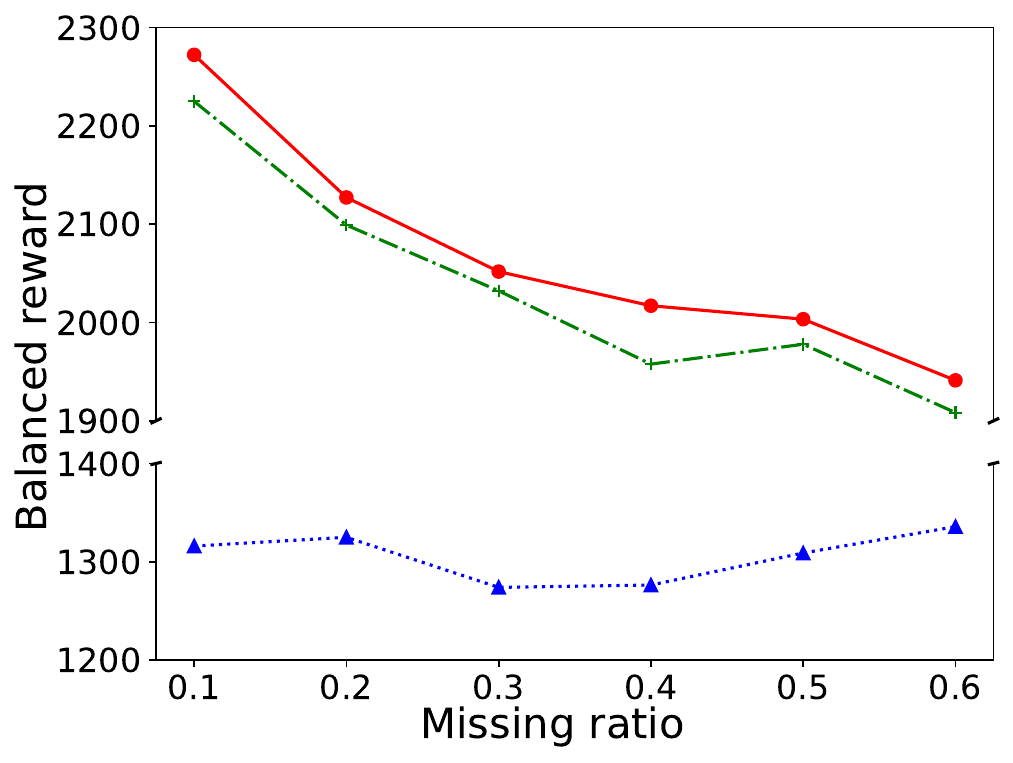}} 
\subfigure[Correlated case on \textsc{JOBS}]{
\includegraphics[width=0.23\textwidth]{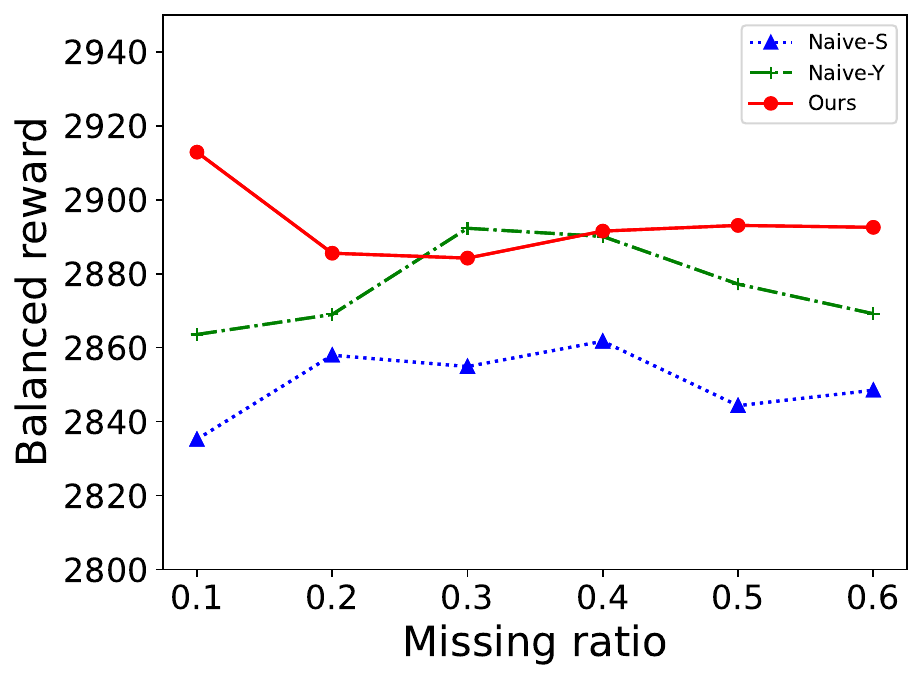}}
\subfigure[Uncorr. case on \textsc{IHDP}]{
\includegraphics[width=0.23\textwidth]{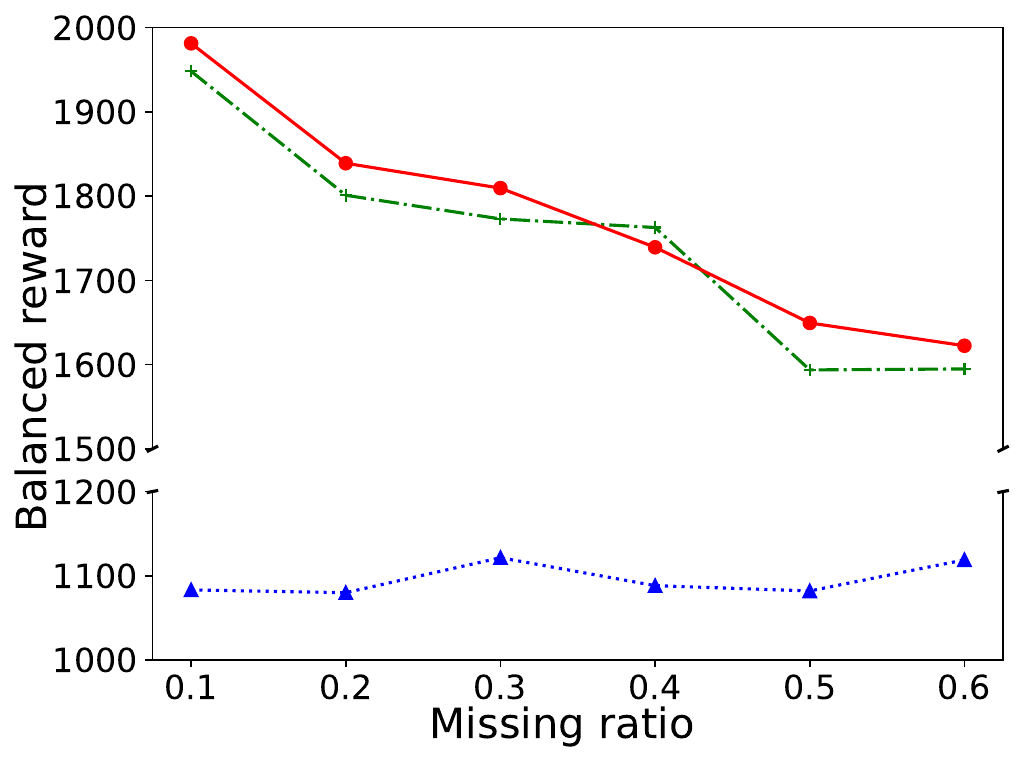}}
\subfigure[Uncorr. case on \textsc{JOBS}]{
\includegraphics[width=0.23\textwidth]{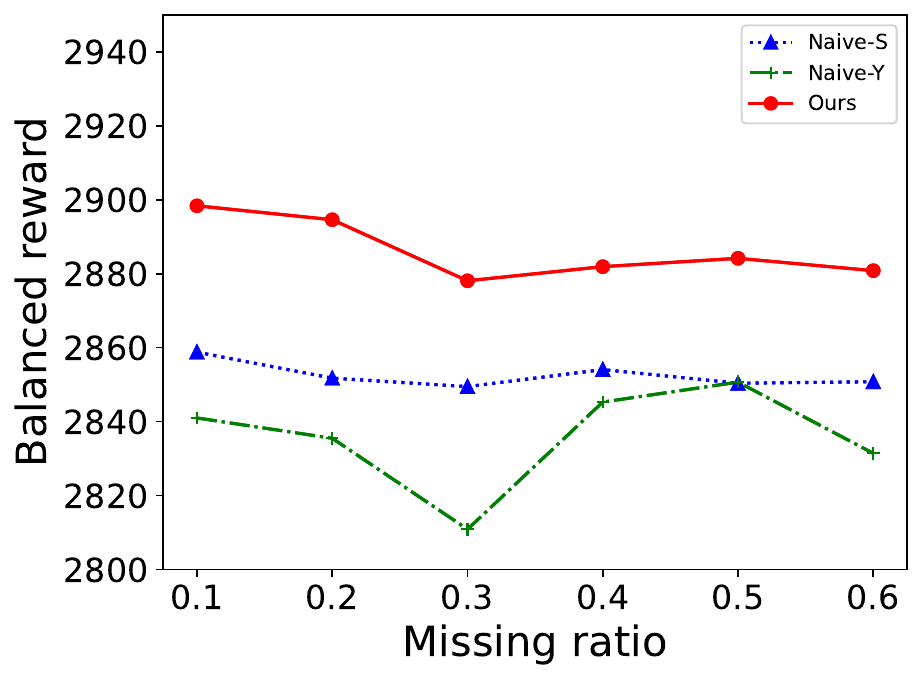}}
\caption{Comparison of \textsc{Naive-S}, \textsc{Naive-Y} and our method with different missing ratios of $Y$ on \textsc{IHDP} and \textsc{JOBS}, where the metric is the balanced reward.}
\label{fig:diff_miss}
\end{figure}

\noindent\textbf{Policy learning with short-term and long-term reward.} 
We average over 50 independent trials of policy learning with short-term and long-term rewards in \textsc{JOBS} and \textsc{PRODUCT}, and the results are shown in Table~\ref{tab:fixed_missing}. The bold fonts represent the best results among the 3 strategies ($\textsc{Naive-S}, \textsc{Naive-Y}, \textsc{Ours}$) for each baseline ($\textsc{IPW, OR, DM}$, and $ \textsc{Pro.}$). We fix the missing ratio of outcomes $Y$ to be $\textsc{0.1}$ and the number of time steps is $T=10$.
As for our proposed methods, i.e., Proposed (Naive-S), Proposed (Naive-Y), and Proposed (Ours) in the last three lines of the table, we see that Proposed (Ours) obtains the highest balanced reward. Besides, balanced rewards for all methods are higher than the short-term and long-term rewards. This demonstrates the necessity of balancing short-term and long-term rewards.
Regarding the various estimators of $\mathbb{V}(\pi; y)$ and $\mathbb{V}(\pi; s)$, including the Proposed, IPW, OR, and DM estimators, we observe that our proposed method outperforms the other methods in terms of balanced reward metrics. This demonstrates the superiority of the proposed method.
 More results are given in Appendix~\ref{app:more_exp}. 

\noindent\textbf{Effects of varying missing ratios.} 
We study the effects of varying missing ratios for long-term outcome $Y$. As shown in Figures~\ref{fig:diff_miss}(a) and \ref{fig:diff_miss}(b), our method achieves better performance in almost all scenarios. 
As the missing ratio increases, both \textsc{Naive-Y} and our method exhibit a declining trend in performance. 
This decline is expected, as higher missing ratios mean more long-term outcomes are neglected. The performance of $\textsc{Naive-Y}$ is consistently worst.  

\noindent\textbf{Effects of varying correlation between $S$ and $Y$.} 
Data generation mechanisms for $Y$ in Eqs. \eqref{eq:idhp_y} and \eqref{eq:jobs_y} inherently lead to $S \notindep Y | X$. 
To compare the distinction between cases with varying correlations, we also generate $Y$ that satisfies $S \indep Y | X$, the data generation details are provided in Appendix~\ref{app:uncorr_data}. The results are shown in Figures \ref{fig:diff_miss}(c) and \ref{fig:diff_miss}(d). 
Importantly, comparing Figure \ref{fig:diff_miss}(a) with Figure \ref{fig:diff_miss}(c), and Figure \ref{fig:diff_miss}(b) with Figure \ref{fig:diff_miss}(d), respectively, we observe that the performance of correlated cases surpasses that in uncorrelated cases. This empirical observation aligns with our findings in Proposition~\ref{prop-efficiency}.

\noindent\textbf{Effects of varying time steps.} 
We further study the impact of varying time steps on long-term outcomes, the associated results are displayed in Figures \ref{fig:diff_time}(a) and \ref{fig:diff_time}(b), where the missing ratio is set as 0.6. Overall, our method consistently outperforms other baselines across all time steps, even in scenarios with a high missing ratio. 
 More numerical results are available in Appendix~\ref{app:more_exp}. 
  



\noindent\textbf{Effects of varying costs.} 
According to Eq.\eqref{eq:cost}, we explore the effects of different costs. As depicted in Figures \ref{fig:diff_time}(c) and \ref{fig:diff_time}(d), in all scenarios with various costs, our method achieves higher balanced rewards compared with \textsc{Naive-S} and \textsc{Naive-Y}, which empirically demonstrates the superiority of taking long-and-short-term rewards into account.  


\begin{figure}[t!]
\centering
\subfigure[Varying time steps on \textsc{IHDP}]{
\includegraphics[width=0.23\textwidth]{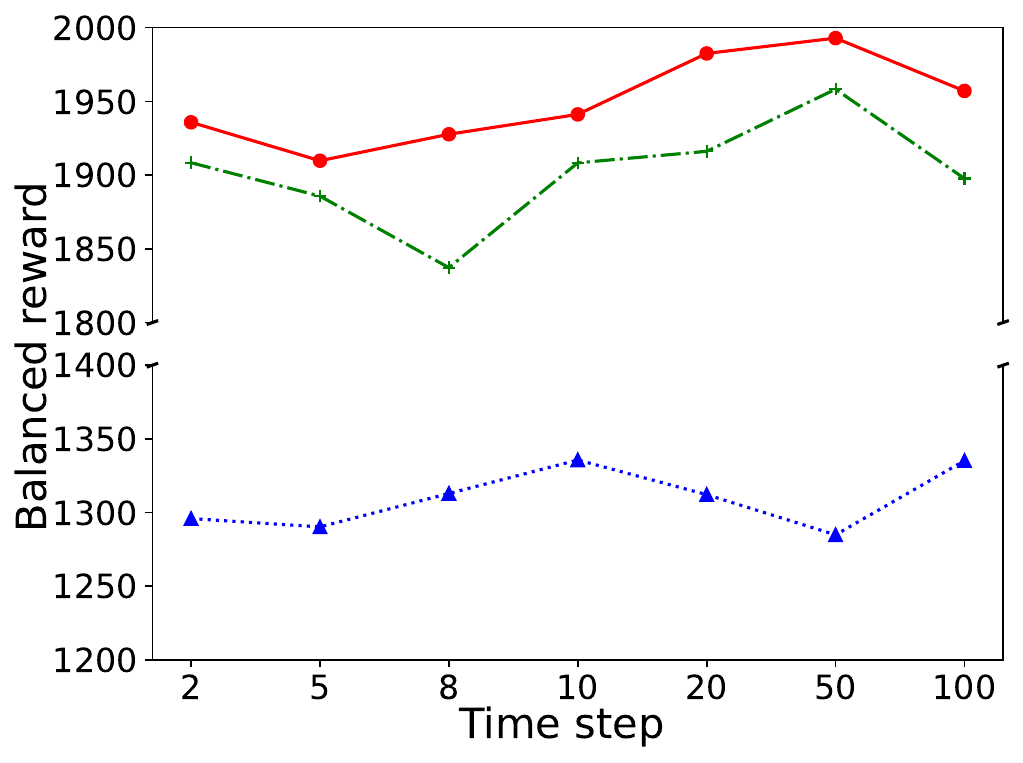}} 
\subfigure[Varying time steps on \textsc{JOBS}]{
\includegraphics[width=0.23\textwidth]{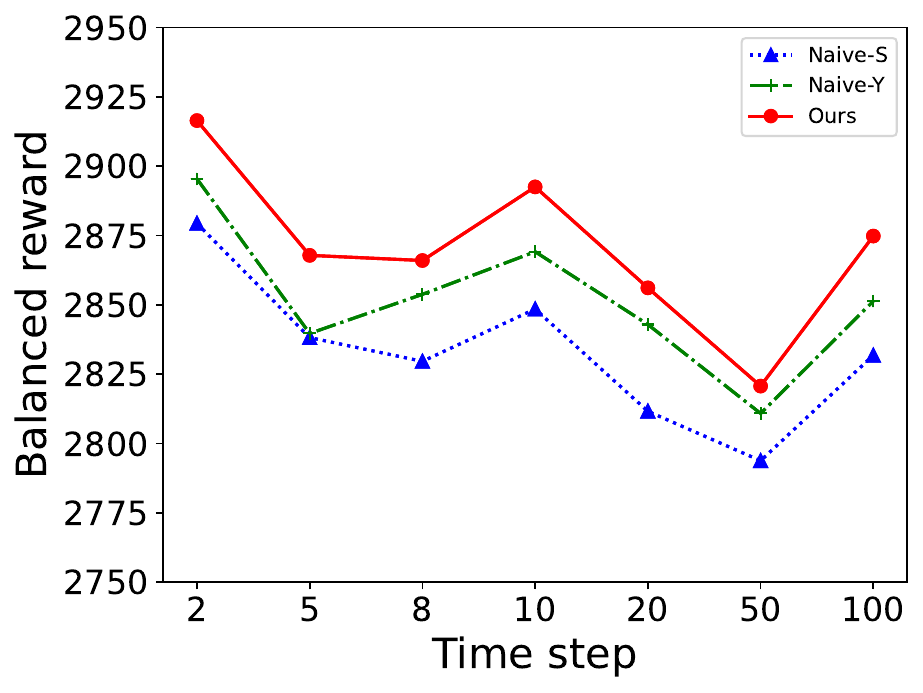}}
\subfigure[Varying costs on \textsc{IHDP}]{
\includegraphics[width=0.23\textwidth]{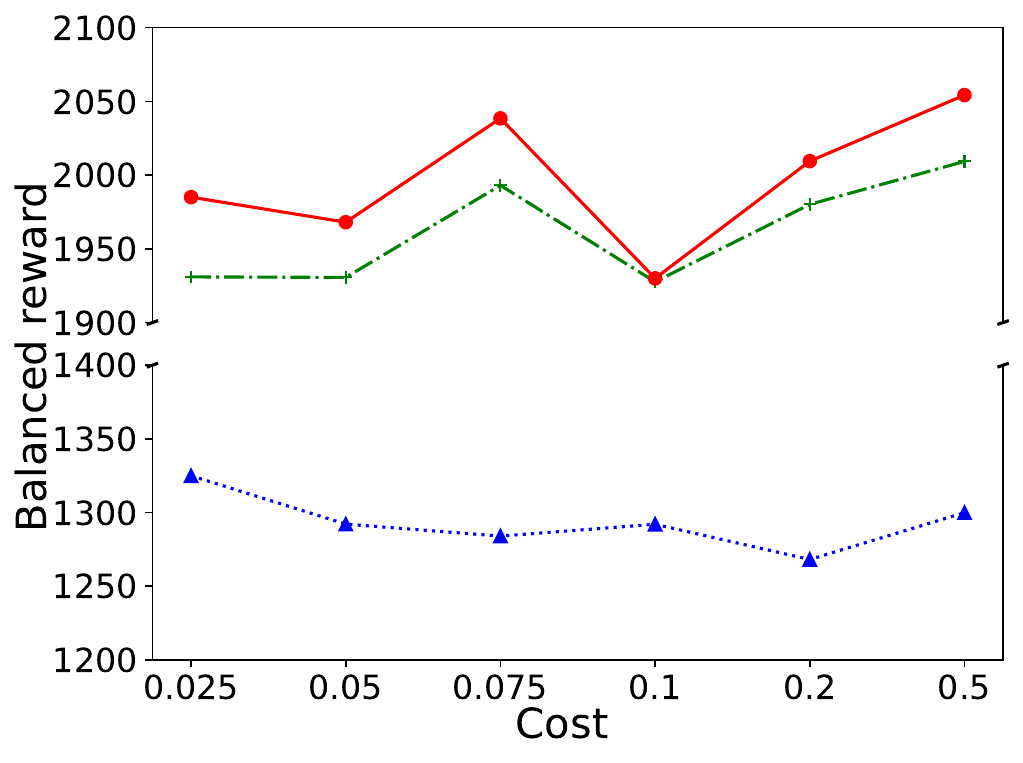}} 
\subfigure[Varying costs on \textsc{JOBS}]{
\includegraphics[width=0.23\textwidth]{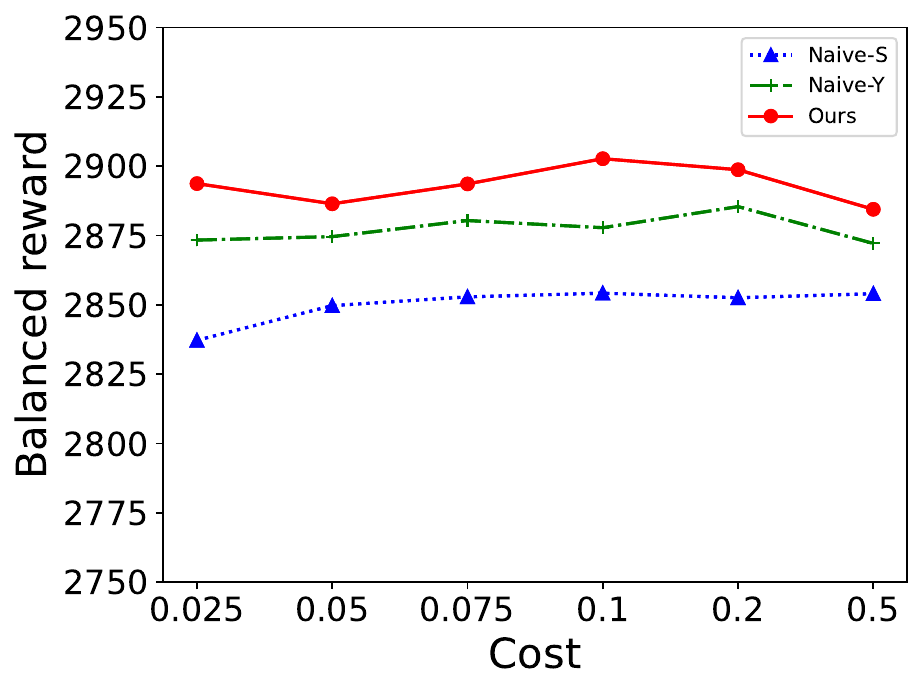}}
\caption{Comparison of \textsc{Naive-S}, \textsc{Naive-Y} and ours on \textsc{IHDP} and \textsc{JOBS}, where the metric is the balanced reward.}
\label{fig:diff_time}
\end{figure}


\section{Conclusion}
This study delves into an important aspect of interest to empirical researchers and decision-makers in many fields -- balancing short-term and long-term rewards. We propose a principled policy learning approach for achieving this goal, which consists of two key steps: estimating the short/long-term rewards for a given policy and learning the optimal policy by taking the estimated short/long-term rewards as the objective functions. 
We conduct a comprehensive theoretical analysis and perform extensive experiments to demonstrate the effectiveness of the proposed policy learning approach.  
A limitation of this work is that Assumption \ref{assump5-1} does not hold in the presence of unmeasured confounders that affects both treatment, short-term and long-term outcomes. Future efforts should focus on extending our method and theory by relaxing identifiability assumptions. 


\section*{Acknowledgements}
This research was supported by the National Natural Science Foundation of China (No. 12301370) and the disciplinary funding of Beijing Technology and Business University (STKY202301). 
Feng Xie is supported by the Natural Science Foundation of China (No. 62306019).

\section*{Impact Statement}
The paper introduces a novel policy learning approach designed to effectively balance short-term and long-term rewards, overcoming challenges such as confounding bias and missing data in long-term outcomes. This research provides valuable insights and practical implications, particularly in scenarios where optimizing both short-term and long-term outcomes is crucial. Here are some potential applications: 

(a) Marketing and customer behavior: marketing professionals can optimize incentive strategies, ensuring they influence customer behavior positively in both the short and long term; 
(b) Information technology (IT) and user experience: IT companies can design web pages that not only cater to immediate user preferences but also enhance user engagement and satisfaction over an extended period; (c) Healthcare and treatment strategies: medical practitioners can refine drug prescriptions, considering both short-term alleviation and long-term outcomes in chronic diseases like Alzheimer's and AIDS; (d) Labor market and employment programs: policymakers can enhance the design of job training programs, considering both immediate income impacts and subsequent improvements in employment status; (e) Video recommendation and content engagement: content providers can optimize recommendations, avoiding short-term clickbait strategies that may lead to user churn, ensuring sustained user engagement and revenue growth; etc. 


\bibliography{reference}
\bibliographystyle{icml2024}

\newpage
\appendix
\onecolumn


\def\thefigure{\arabic{figure}}
\def\thetable{\arabic{table}}

 \renewcommand{\theequation}{A.\arabic{equation}}
 \renewcommand{\thetable}{A\arabic{table}}
 \renewcommand{\thetheorem}{A\arabic{theorem}}
 \renewcommand{\theproposition}{A\arabic{proposition}}
 \setcounter{equation}{0}

\section{Proofs of Proposition \ref{prop5-3} and Theorem \ref{thm-EIF}}  \label{app-a}

{\bf Proposition \ref{prop5-3}} (Identifiability of $\V(\pi; y)$). 
\emph{Under Assumptions \ref{assump5-1}-\ref{assump5-2}, the long-term reward $\V(\pi; y)$ is identified as 
 	\begin{align*}
		     \V(\pi; y) 
		     		={}&  \E[ \pi(X)  \tilde m_1(X, S)  + (1 - \pi(X))  \tilde m_0(X, S) ],
		 \end{align*}    
where $\tilde m_a(X, S) = \E[Y | X, S, A=a, R=1 ]$ for $a = 0, 1$. }  		 

\begin{proof}[Proof of Proposition \ref{prop5-3}] The identifiability of  $\V(\pi; y)$ can be obtained by noting that 
	 	\begin{align*}
		     \V(\pi; y) ={}&  \E[ \pi(X) Y(1) + (1 - \pi(X)) Y(0)] \\
       ={}& \E[ \pi(X)\cdot \E(Y(1)| X, S(1)) + (1 - \pi(X)) \cdot \E(Y(0)| X, S(0)) ]  \\
		     	={}& \E[ \pi(X) \cdot \E(Y(1)| X, S(1), A=1)  + (1 - \pi(X)) \cdot \E(Y(0)| X, S(0), A=0) ]  \\
				={}& 	\E[ \pi(X) \cdot \E(Y(1)| X, S(1), A=1, R=1) ] + \E[  (1 - \pi(X)) \cdot \E(Y(0)| X, S(0), A=0, R=1) ] \\
				 ={}& 	\E[ \pi(X)  \tilde m_1(X, S) + (1 - \pi(X))  \tilde m_0(X, S) ],
		 \end{align*}    
where the second equality follows by the law of iterated expectations, the third equality follows from Assumption \ref{assump5-1}, and the fourth equality follows from Assumption \ref{assump5-2}. 
\end{proof}

{\bf Theorem \ref{thm-EIF}} (Efficiency Bounds of $\V(\pi; s)$ and $\V(\pi; y)$).
\emph{Under Assumptions \ref{assump5-1} and  \ref{assump5-2},  we have that} 

\emph{(a) the efficient influence function of $\V(\pi; s)$ is $\phi_s - \V(\pi; s)$, where  
	\begin{align*}
	        \phi_s &={}   \{ \pi(X) \mu_1(X) + (1 - \pi(X)) \mu_0(X)\} \\ 
	         +{}& \frac{\pi(X) A(S - \mu_1(X))}{e(X)}  +  \frac{(1 - \pi(X)) (1-A)(S - \mu_0(X))}{1 - e(X)},
	\end{align*}
the associated semiparametric efficiency bound is $\text{Var}(\phi_s)$.}

\emph{(b) the efficient influence functions of $\V(\pi; y)$ is  $\phi_y - \V(\pi; y)$, where  
	\begin{align*}
	\phi_{y} &={} \{ \pi(X) m_1(X) + (1 - \pi(X)) m_0(X) \} \\
	  +{}&  \frac{\pi(X) A R (Y - \tilde m_1(X, S))}{e(X) r(1, X, S)}  + \frac{\pi(X) A  ( \tilde  m_1(X, S) - m_1(X))}{e(X)}   \\
	  +{}&   \frac{(1 - \pi(X))(1-A) R (Y - \tilde  m_0(X, S))}{(1-e(X)) r(0, X, S)}   \\
	    +{}&  \frac{(1 - \pi(X))(1-A) ( \tilde m_0(X, S) - m_0(X)) }{1 - e(X)},
	\end{align*}
the associated semiparametric efficiency bound is $\text{Var}(\phi_y)$. }

\begin{proof}[Proof of Theorem \ref{thm-EIF}]
    
Let $f(\cdot)$ be the probability density/mass function, $f_1(y| x, s)$ and  $f_0(y| x, s)$ be the density of $Y(1)$ and $Y(0)$ conditional on  $(X=x,  S(1) = s)$ and $(X=x,  S(0) = s)$ respectively, and denote $f_1(s | x)$ and  $f_0(s|x)$ be the density of $S(1)$ and $S(0)$ conditional on  $X=x$ respectively. Then the observed data distribution under Assumptions \ref{assump5-1} and \ref{assump5-2}  is given as 
\begin{align*} 
 & p(a, x, s, y, r)  \\
  ={}&   f(a, x, s, y, r = 1)^r  \times  f(a, x, s, r = 0)^{1-r}    \\
   ={}&   [  f(r = 1| a, x, s, y) f(a, x, s, y) ]^r \times [ f(r = 0| a, x, s) f(a, x, s) ]^{1-r}  \\ 
   ={}&   [  r(a, x, s) \cdot \{ f(s, y| a=1, x) f(x) e(x) \}^a \cdot \{ f(s, y| a=0, x) f(x) (1-e(x)) \}^{1-a}  ]^r  \\
        & \times [  (1- r(a,x,s)) \cdot \{ f(s| x, a = 1) f(x) e(x)  \}^a \cdot \{ f(s| x, a = 0) f(x) (1-e(x))  \}^{1-a}   ]^{1-r}  \\ 
   ={}&  f(x) \times  [  r(a, x, s) \cdot \{ f_1(y| x, s) f_1(s| x) e(x) \}^a \cdot \{ f_0(y|  x, s) f_0(s| x)  (1-e(x)) \}^{1-a}  ]^r  \\
        & \times [  (1- r(a,x,s)) \cdot \{ f_1(s| x) e(x)  \}^a \cdot \{ f_0(s| x)  (1-e(x))  \}^{1-a}   ]^{1-r}. 
\end{align*}

Under Assumptions \ref{assump5-1} and \ref{assump5-2}, consider a regular parametric submodel indexed by $\theta$ given as 
 	\begin{align*}
	  p(a, x, s, y, r;  \theta) ={}& f(x, \theta) \times  [  r(a, x, s, \theta) \cdot \{ f_1(y| x, s, \theta) f_1(s| x, \theta) e(x, \theta) \}^a \cdot \{ f_0(y| x, s, \theta) f_0(s| x, \theta)  (1-e(x, \theta)) \}^{1-a}  ]^r  \\
        & \times [  (1- r(a,x,s, \theta)) \cdot \{ f_1(s| x, \theta) e(x, \theta)  \}^a \cdot \{ f_0(s| x, \theta)  (1-e(x, \theta))  \}^{1-a}   ]^{1-r}. 
\end{align*}
which equals $p(a, x, y, g)$ when $\theta = \theta_0$. Also, $f_a(y | x, s,  \theta_0) = f_a(y | a, x, s, \theta_0)  = f_a(y | a, x, s, r=1, \theta_0)$ for $a = 0, 1$ by Assumptions \ref{assump5-1} and \ref{assump5-2}. 

 Then, the score function for this submodel is given by 
	 \begin{align*}
	 	& s(a, x, s, y, r; \theta) 
		 = \frac{\partial \log p(a, x, s, y, r;  \theta)}{\partial \theta}  \\
			={}&  s(x, \theta) + r a \cdot \{ s_1(y|x, s, \theta) + s_1(s | x, \theta) \} +  r(1-a) \cdot \{ s_0(y| x, s, \theta) + s_0(s | x, \theta) \} \\
			 &{} + (1-r) a \cdot s_1(s | x, \theta)  +  (1-r)(1-a) \cdot s_0(s | x, \theta) \\
			 &{} +    \frac{a - e(x,\theta)}{ e(x,\theta)(1 - e(x,\theta)) } \dot{e}(x, \theta) +  \frac{r - r(a, x, s,\theta)}{ r(a, x, s,\theta)(1 - r(a, x, s,\theta)) } \dot{r}(a, x, s,\theta), \\
			 	={}&  s(x, \theta) + r a \cdot  s_1(y| x, s, \theta)  +  r(1-a) \cdot s_0(y|  x, s, \theta)  \\
			 &{} +  a \cdot s_1(s | x, \theta)  +  (1-a) \cdot s_0(s | x, \theta) \\
			 &{} +    \frac{a - e(x,\theta)}{ e(x,\theta)(1 - e(x,\theta)) } \dot{e}(x, \theta) +  \frac{r - r(a, x, s,\theta)}{ r(a, x, s,\theta)(1 - r(a, x, s,\theta)) } \dot{r}(a, x, s,\theta),    
	 \end{align*}	
where 
	\begin{align*}
		s(x, \theta)  ={}& \frac{\partial \log f(x, \theta) }{ \partial \theta}, \\
		s_1(y| x, s, \theta)  ={}& \frac{\partial \log f_1(y| x, s, \theta)  }{ \partial \theta}, \\
		s_0(y| x, s, \theta)  ={}& \frac{\partial \log f_0(y|  x, s, \theta)  }{ \partial \theta}, \\  	
		s_1(s | x, \theta)  ={}& \frac{\partial \log f_1(s | x, \theta)  }{ \partial \theta}, \\  
			s_0(s | x, \theta)  ={}& \frac{\partial \log f_0(s | x, \theta)  }{ \partial \theta}, \\  
			\dot{e}(x,\theta) ={}& \frac{\partial e(x,\theta)}{\partial \theta}, \\
		\dot{r}(a, x, s, \theta) ={}& \frac{ \partial r(a, x, s, \theta)}{ \partial \theta}. 				
	\end{align*}

Thus, the tangent space $\mathcal{T}$ is 		
 \begin{align*}    \mathcal{T} ={}&  \{ s(x) + r a  s_1(y| x, s) + r(1-a)  s_0(y|  x, s)  + a s_1(s|x) + (1-a) s_0(s|x) \\
      {}& +   (a - e(x)) \cdot b(x)  + (r - r(a, x, s)) \cdot c(x)     \},  
 \end{align*} 
where $s(x)$ satisfies $\E[s(X)] = \int s(x) f(x)dx = 0$, $s_a(y|a, x, s)$ satisfies $\E[ s_a(Y| X, S) \big | X=x, S=s ] =  \int s_a(y| x, s) f_a(y| x, s) dy = 0$ for $a = 0,1$, $s_a(s | x)$ satisfies $\E[ s_a(S | X) \big | X=x ] =  \int s_a(s | x) f_a(s| x) ds = 0$ for $a = 0,1$, and  $b(x)$ and $c(x)$ are arbitrary square-intergrable measurable functions of $x$.  
In addition,  $s_a(y|a, x, s) = s_a(y| a, x, s, r = 1)$ according to Assumptions \ref{assump5-1} and \ref{assump5-2} (i.e., $f_a(y |  x, s) = f_a(y | a, x, s, r=1)$).

 \medskip \noindent 
{\bf Efficient influence function of short-term reward.}    Under the above parametric submodel, the short-term reward $\V(\pi; s)$ can be written as 
	\begin{align*}
	 \V(\pi, &\theta; s) ={} \E[ \pi(X) S(1) + (1 - \pi(X)) S(0) ]        \\
	 		={}&  \int\int \pi(x) s f_1(s |x, \theta) f(x, \theta) ds dx +  \int\int (1-\pi(x)) s f_0(s |x, \theta) f(x, \theta) ds dx.  
	\end{align*} 
The pathwise derivative of $\V(\pi, \theta; s)$ at $\theta = \theta_0$ is given as  
	\begin{align*}
	 \frac{\partial \V(\pi, \theta; s)}{\partial \theta}&\Big|_{\theta = \theta_0} ={} \int \int \pi(x) s \cdot s_1(s| x, \theta_0) f_1(s|x,\theta_0) f(x, \theta_0) ds dx  +  \int \int \pi(x) s \cdot f_1(s|x,\theta_0) s(x, \theta_0) f(x, \theta_0)  ds dx  \\
	 	{}&+  \int \int (1-\pi(x)) s \cdot s_0(s| x, \theta_0) f_0(s|x,\theta_0) f(x, \theta_0) ds dx  +  \int \int (1-\pi(x)) s \cdot f_0(s|x,\theta_0) s(x, \theta_0) f(x, \theta_0)  ds dx  \\
		={}& \E\Big [ \pi(X) \cdot  \E\Big \{ S(1) \cdot s_1(S(1)|X) \Big | X\Big \} \Big ] +   \E\Big [ (1-\pi(X)) \cdot  \E\Big \{ S(0) \cdot s_0(S(0)|X) \Big | X\Big \} \Big ]  \\
		{}&+ \E\Big [ s(X) \Big  \{  \pi(X) \mu_1(X) + (1 - \pi(X)) \mu_0(X) \Big \}  \Big ]. 
	\end{align*} 	
Next, we construct the efficient influence function of $\V(\pi; s)$. Let 
	\begin{align*}
	\tilde \phi_{s} = \pi(X) \frac{A(S - \mu_1(X))}{e(X)}  + (1 - \pi(X)) \frac{(1-A)(S - \mu_1(X))}{1 - e(X)} + \{ \pi(X) \mu_1(X) + (1 - \pi(X)) \mu_0(X) - \V(\pi; s) \}. 
	\end{align*}
Pathwise differentiability of $\V(\pi; s)$ can be verified by 
	\begin{equation}  \label{A.1}
	 \frac{\partial \V(\pi, \theta; s)}{\partial \theta} \Big|_{\theta = \theta_0}  =  \E[ \tilde  \phi_{s} \cdot s(A, X, S, Y, R; \theta_0) ], 
	\end{equation}
which implies that $\tilde  \phi_{s}$ is an influence function of $\V(\pi; s)$.  Now we give a detailed proof of (\ref{A.1}). 
	\begin{align*} 
 \E[ \tilde  \phi_{s} \cdot s(A, X, S, Y, R; \theta_0) ]
	={}&  H_1 +  H_2 + H_3,
	\end{align*}
 where 
 	\begin{align*}
		H_1 ={}& \E \left [  \pi(X) \frac{A(S - \mu_1(X))}{e(X)}  \cdot s(A, X,  S, Y, R; \theta_0) \right ] \\
			={}&  \E\Biggl [  \pi(X) \frac{A(S - \mu_1(X))}{e(X)}    \\
			 &{}\times \left \{ s(X) + RA \cdot s_1(Y| X, S) + A \cdot s_1(S|X) +  \frac{A - e(X)}{ e(X)(1 - e(X)) } \dot{e}(X) + \frac{R - r(A, X, S)}{r(A, X, S)(1 - r(A, X, S))} \dot{r}(A, X, S) \right \} \Biggr ] \\
			 ={}&  \E \left [  \pi(X) \frac{A(S - \mu_1(X))}{e(X)} \cdot s_1(S| X)  \right ] \\
			={}& \E \left [  \pi(X)  \E \left \{  (S(1) - \mu_1(X)) \cdot  s_1(S(1) |X)     \Big | X \right \}   \right ]   \\
			={}&  \E \left [  \pi(X)  \E \left \{  S(1) \cdot  s_1(S(1) |X)     \Big | X \right \}   \right ]  = \text{ the first term of }  \frac{\partial \V(\pi, \theta; s)}{\partial \theta} \Big|_{\theta = \theta_0}, 
	\end{align*}	
and 	similarly, 
 	\begin{align*}		
		H_2 ={}& \E \left [ (1 - \pi(X)) \frac{(1-A)(S - \mu_1(X))}{1 - e(X)}  \cdot s(A, X,  S, Y, R; \theta_0) \right ] \\
		={}& \E \left [ (1 - \pi(X)) \frac{(1-A)(S - \mu_1(X))}{1 - e(X)}  \cdot s_0(S| X) \right ] \\
	={}&  \E \left [  (1-\pi(X))  \E \left \{  S(0) \cdot  s_0(S(0) |X)     \Big | X \right \}   \right ]  = \text{ the second term of }  \frac{\partial \V(\pi, \theta; s)}{\partial \theta} \Big|_{\theta = \theta_0},
	\end{align*}	
   	\begin{align*}		
		H_3 ={}& \E \left [  \{ \pi(X) \mu_1(X) + (1 - \pi(X)) \mu_0(X) - \V(\pi; s) \} 
 \cdot s(A, X,  S, Y, R; \theta_0) \right ] \\
 ={}& \E \left [  \{ \pi(X) \mu_1(X) + (1 - \pi(X)) \mu_0(X) - \V(\pi; s) \} 
 \cdot s(X) \right ] \\
	={}&   \E\Big [ s(X) \Big  \{  \pi(X) \mu_1(X) + (1 - \pi(X)) \mu_0(X) \Big \}  \Big ]  = \text{ the third term of }  \frac{\partial \V(\pi, \theta; s)}{\partial \theta} \Big|_{\theta = \theta_0},
	\end{align*}

Thus, equation (\ref{A.1}) holds. 
In addition,  let  $s(X) = \{ \pi(X) \mu_1(X) + (1 - \pi(X)) \mu_0(X) - \V(\pi; s) \}$,  $s_1(Y|X) =  \pi(X) \frac{(S - \mu_1(X))}{e(X)} $,   $s_0(S|X) = (1 - \pi(X)) \frac{(S - \mu_1(X))}{1 - e(X)}$,  
  then $\tilde  \phi_{s}$ can be written as 
  	\[  \tilde   \phi_{s} = s(X) +   A s_1(S|X) + (1-A) s_0(S|X).    \]
Clearly, we have that $\int s_a(s |x) f_a(s |x) ds = 0$ and $\int s(x) f(x)dx = 0$, which implies that $\tilde  \phi_{s} \in \mathcal{T}$, and thus $\tilde  \phi_{s}$ is the efficient influence function of $\V(\pi; s)$. 

 \medskip \noindent 
{\bf Efficient influence function of long-term reward.}    Under the above parametric submodel, the long-term reward $\V(\pi; y)$ can be written as 
     \begin{align*}
	 \V(\pi, &\theta; y) ={} \E[ \pi(X) Y(1) + (1 - \pi(X)) Y(0) ]        \\
	 		={}& \E \Big [ \pi(X) \E\{ Y(1) | S(1), X \} + (1 - \pi(X)) \E\{ Y(0) | S(0), X \} \Big ]    \\
	 		={}&  \int\int \int \pi(x) y f_1(y |x, s, \theta) f_1(s|x, \theta) f(x, \theta) dy ds dx +  \int\int \int (1-\pi(x)) y f_0(y|x, s, \theta)  f_0(s |x, \theta) f(x, \theta) dy ds dx.  
	\end{align*}  
	
The pathwise derivative of $\V(\pi, \theta; y)$ at $\theta = \theta_0$ is given as  
	\begin{align*}
	 \frac{\partial \V(\pi, \theta; y)}{\partial \theta}&\Big|_{\theta = \theta_0} ={} \int \int \int \pi(x) y s_1(y|x, s, \theta_0) f_1(y|x, s, \theta_0) \cdot f_1(s|x,\theta_0) f(x, \theta_0) dy ds dx  \\
	 {}&+  \int \int \int \pi(x) y f_1(y|x, s, \theta_0) \cdot \Big \{ s_1(s|x, \theta_0) f_1(s|x,\theta_0) f(x, \theta_0) +   f_1(s|x,\theta_0) s(x, \theta_0)  f(x, \theta_0)   \Big\} dy ds dx  \\
	{}&+  \int \int \int (1-\pi(x)) y s_0(y|x, s, \theta_0) f_0(y|x, s, \theta_0) \cdot f_0(s|x,\theta_0) f(x, \theta_0) dy ds dx  \\
	 {}&+  \int \int \int (1-\pi(x)) y f_0(y|x, s, \theta_0) \cdot \Big \{ s_0(s|x, \theta_0) f_0(s|x,\theta_0) f(x, \theta_0) +   f_0(s|x,\theta_0) s(x, \theta_0)  f(x, \theta_0)   \Big\} dy ds dx  \\
		={}& \E\Big [ \pi(X) \cdot  \E\Big \{ Y(1) \cdot s_1(Y(1)|X, S) \Big | X, S\Big \} \Big ] +   \E\Big [ (1-\pi(X)) \cdot  \E\Big \{ Y(0) \cdot s_0(Y(0)|X, S) \Big | X, S\Big \} \Big ]  \\ 
		{}&+ \E\Big [ \pi(X) \cdot  \E\Big \{ \tilde m_1(X, S) \cdot s_1(S(1)|X) \Big | X\Big \} \Big ] + \E\Big [ (1-\pi(X)) \cdot  \E\Big \{ \tilde m_0(X, S) \cdot s_0(S(0)|X) \Big | X\Big \} \Big ]  \\
		{}&+ \E\Big [ s(X) \Big  \{  \pi(X) \tilde m_1(X, S) + (1 - \pi(X)) \tilde m_0(X, S) \Big \}  \Big ]  \\
			={}& \E\Big [ \pi(X) \cdot  \E\Big \{ Y(1) \cdot s_1(Y(1)|X, S) \Big | X, S\Big \} \Big ] +   \E\Big [ (1-\pi(X)) \cdot  \E\Big \{ Y(0) \cdot s_0(Y(0)|X, S) \Big | X, S\Big \} \Big ]  \\ 
		{}&+ \E\Big [ \pi(X) \cdot  \E\Big \{ \tilde m_1(X, S) \cdot s_1(S(1)|X) \Big | X\Big \} \Big ] + \E\Big [ (1-\pi(X)) \cdot  \E\Big \{ \tilde m_0(X, S) \cdot s_0(S(0)|X) \Big | X\Big \} \Big ]  \\
		{}&+ \E\Big [ s(X) \Big  \{  \pi(X) m_1(X) + (1 - \pi(X)) m_0(X) \Big \}  \Big ],
	\end{align*} 	
where the last equation follows from $\E[ \tilde m_a(X, S)  | X ] = m_a(X)$ for $a = 0, 1$.  
	
Next, we construct the efficient influence function of $\V(\pi; y)$. Let 
	\begin{align*}
	\tilde \phi_{y} ={}& \pi(X) \frac{A R (Y - \tilde m_1(X, S))}{e(X) r(1, X, S)}  + (1 - \pi(X)) \frac{(1-A) R (Y - \tilde m_0(X, S))}{(1-e(X)) r(0, X, S)}    \\
	    {}& + \pi(X) \frac{A  (  \tilde m_1(X, S) - m_1(X))}{e(X)}  + (1 - \pi(X)) \frac{(1-A) (  \tilde m_0(X, S) - m_0(X)) }{1 - e(X)}  \\
	    {}&+ \{ \pi(X) m_1(X) + (1 - \pi(X)) m_0(X) - \V(\pi; y) \}. 
	\end{align*}
Pathwise differentiability of $\V(\pi; y)$ can be verified by 
	\begin{equation}  \label{A.2}
	 \frac{\partial \V(\pi, \theta; y)}{\partial \theta} \Big|_{\theta = \theta_0}  =  \E[\tilde \phi_{y} \cdot s(A, X, S, Y, R; \theta_0) ], 
	\end{equation}
which implies that $\tilde \phi_{y}$ is an influence function of $\V(\pi; y)$.  Now we give a detailed proof of (\ref{A.2}). The right side of (\ref{A.2}) can be decomposed as 
	\[    \E[ \tilde \phi_{y} \cdot s(A, X, S, Y, R; \theta_0) ] = H_4 + H_5 + H_6 + H_7 + H_8, \]
where 
    \begin{align*}
		H_4 ={}& \E \left [ \pi(X) \frac{A R (Y - \tilde m_1(X, S))}{e(X) r(1, X, S)}  \cdot s(A, X,  S, Y, R; \theta_0) \right ] \\
			={}&  \E\Biggl [\pi(X) \frac{A R (Y - \tilde m_1(X, S))}{e(X) r(1, X, S)} \cdot s_1(Y|X, S)  \Biggr ] \\
			 ={}& \E \left [  \pi(X)  \E \left \{  (Y(1) - \tilde m_1(X, S)) \cdot  s_1(Y(1) |X, S)     \Big | X, S \right \}   \right ]   \\
			={}&  \E \left [  \pi(X)  \E \left \{  Y(1) \cdot  s_1(Y(1) |X, S)     \Big | X, S \right \}   \right ]  = \text{ the first term of }  \frac{\partial \V(\pi, \theta; y)}{\partial \theta} \Big|_{\theta = \theta_0}, 
	\end{align*}	
    \begin{align*}
		H_5 ={}& \E \left [  (1 - \pi(X)) \frac{(1-A) R (Y - \tilde m_0(X, S))}{(1-e(X)) r(0, X, S)}    \cdot s(A, X,  S, Y, R; \theta_0) \right ] \\
			={}&  \E\Biggl [  (1 - \pi(X)) \frac{(1-A) R (Y - \tilde m_0(X, S))}{(1-e(X)) r(0, X, S)}   \cdot s_0(Y|X, S)  \Biggr ] \\
			={}&  \E \left [  \pi(X)  \E \left \{  Y(0) \cdot  s_1(Y(0) |X, S)     \Big | X, S \right \}   \right ]  = \text{ the second term of }  \frac{\partial \V(\pi, \theta; y)}{\partial \theta} \Big|_{\theta = \theta_0}, 
	\end{align*}	
    \begin{align*}
		H_6 ={}& \E \left [ \pi(X) \frac{A  (  \tilde m_1(X, S) - m_1(X))}{e(X)}   \cdot s(A, X,  S, Y, R; \theta_0) \right ] \\
		={}&	 \E \left [ \pi(X) \frac{A  (  \tilde m_1(X, S) - m_1(X))}{e(X)}   \cdot   A s_1(S |X)  \right ] \\
			 ={}& \E \left [  \pi(X) \cdot  \E \left \{  (  \tilde m_1(X, S) - m_1(X)) \cdot  s_1(S(1) |X)     \Big | X \right \}   \right ]   \\
			={}&  \E \left [  \pi(X)  \cdot \E \left \{   \tilde m_1(X, S)  \cdot  s_1(S(1) |X)     \Big | X \right \}   \right ]  = \text{ the third term of }  \frac{\partial \V(\pi, \theta; y)}{\partial \theta} \Big|_{\theta = \theta_0}, 
	\end{align*}	
    \begin{align*}
		H_7 ={}& \E \left [   (1 - \pi(X)) \frac{(1-A) (  \tilde m_0(X, S) - m_0(X)) }{1 - e(X)}    \cdot s(A, X,  S, Y, R; \theta_0) \right ] \\
			={}&  \E\Biggl [   (1 - \pi(X)) \frac{(1-A) (  \tilde m_0(X, S) - m_0(X)) }{1 - e(X)}  \cdot (1-A) s_0(S|X)  \Biggr ] \\
			={}&  \E\Big [ (1-\pi(X)) \cdot  \E\Big \{ \tilde m_0(X, S) \cdot s_0(S(0)|X) \Big | X\Big \} \Big ]   = \text{ the fourth term of }  \frac{\partial \V(\pi, \theta; y)}{\partial \theta} \Big|_{\theta = \theta_0}, 
	\end{align*}	
and    	
    \begin{align*}		
		H_8 ={}& \E \left [   \{ \pi(X) m_1(X) + (1 - \pi(X)) m_0(X) - \V(\pi; y) \}  
 \cdot s(A, X,  S, Y, R; \theta_0) \right ] \\
 ={}& \E \left [  \{ \pi(X) m_1(X) + (1 - \pi(X)) m_0(X) - \V(\pi; y) \} 
 \cdot s(X) \right ] \\ 
	={}&   \E\Big [ s(X) \Big  \{  \pi(X) m_1(X) + (1 - \pi(X)) m_0(X) \Big \}  \Big  ] = \text{ the last term of }  \frac{\partial \V(\pi, \theta; y)}{\partial \theta} \Big|_{\theta = \theta_0}. 
	\end{align*}	
Thus, equation (\ref{A.2}) holds. 
In addition, it can be shown that $\tilde \phi_{y} \in \mathcal{T}$, and thus $\tilde \phi_{y}$ is the efficient influence function of $\V(\pi; y)$.

\end{proof}

\section{Proofs of Propositions  \ref{prop-efficiency}-\ref{prop6-2}, Theorem \ref{Asy}, and Proposition \ref{prop6-4}}  \label{app-b}

{\bf Proposition \ref{prop-efficiency}}. 
 \emph{Under the conditions in Theorem \ref{thm-EIF}, if $S$ is associated with $Y$ given $X$, then the semiparametric efficiency bound of $\V(\pi; y)$ is lower compared to the case where $S \indep Y | X$, and the magnitude of this difference is
 \[  \E\left[ \pi^2(X) \frac{ (1 - r(1, X, S)) \cdot  (\tilde m_1(X, S) -  m_1(X))^2}{e(X) r(1, X, S)}   +  (1 - \pi(X))^2 \frac{ (1 - r(0, X, S)) \cdot (\tilde m_0(X, S) - m_0(X))^2}{(1-e(X)) r(0, X, S)}   \right ].\]
 }
 
\begin{proof}[Proof of  Proposition \ref{prop-efficiency}] 
If $S$ is associated with $Y$ given $X$, then the efficient influence function for $\V(\pi; y)$ is 
    	\begin{align*}
	\tilde \phi_{y} ={}& \pi(X) \frac{A R (Y - \tilde m_1(X, S))}{e(X) r(1, X, S)}  + (1 - \pi(X)) \frac{(1-A) R (Y - \tilde m_0(X, S))}{(1-e(X)) r(0, X, S)}    \\
	    {}& + \pi(X) \frac{A  (  \tilde m_1(X, S) - m_1(X))}{e(X)}  + (1 - \pi(X)) \frac{(1-A) (  \tilde m_0(X, S) - m_0(X)) }{1 - e(X)}  \\
	    {}&+ \{ \pi(X) m_1(X) + (1 - \pi(X)) m_0(X) - \V(\pi; y) \}, 
	\end{align*}
and the semiparametric efficiency bound is 
  \begin{align*}
      \text{Var}(\tilde \phi_{y}) ={}& \E[\{ \pi(X) m_1(X) + (1 - \pi(X)) m_0(X) - \V(\pi; y) \}^2] + \E\left[ \pi^2(X) \frac{A R (Y - \tilde m_1(X, S))^2}{e^2(X) r^2(1, X, S)}  \right ]  \\
      {}&  + \E\left [  (1 - \pi(X))^2 \frac{(1-A) R (Y - \tilde m_0(X, S))^2}{(1-e(X))^2 r^2(0, X, S)}   \right ] + \E\left [ \pi^2(X) \frac{A  (  \tilde m_1(X, S) - m_1(X))^2}{e^2(X)} \right ] \\
      {}& + \E\left [ (1 - \pi(X))^2 \frac{(1-A) (  \tilde m_0(X, S) - m_0(X))^2 }{ (1 - e(X))^2 } \right ]  \\
      ={}& \E[\{ \pi(X) m_1(X) + (1 - \pi(X)) m_0(X) - \V(\pi; y) \}^2] + \E\left[ \pi^2(X) \frac{ \E\{(Y - \tilde m_1(X, S))^2 | X, S, A=1, R=1\} }{e(X) r(1, X, S)}  \right ]  \\
      {}&  + \E\left [  (1 - \pi(X))^2 \frac{ \E\{ (Y- \tilde m_0(X, S))^2 | X, S, A=0, R=1 \} }{(1-e(X)) r(0, X, S)}   \right ] + \E\left [ \pi^2(X) \frac{  \E\{(  \tilde m_1(X, S) - m_1(X))^2 |X\} }{e(X)} \right ] \\
      {}& + \E\left [ (1 - \pi(X))^2 \frac{ \E\{(  \tilde m_0(X, S) - m_0(X))^2 |X\} }{ (1 - e(X)) } \right ]  \\ 
    ={}&  \E[\{ \pi(X) m_1(X) + (1 - \pi(X)) m_0(X) - \V(\pi; y) \}^2] + \E\left[ \pi^2(X) \frac{ (Y - \tilde m_1(X, S))^2  }{e(X) r(1, X, S)}  \right ]  \\
      {}&  + \E\left [  (1 - \pi(X))^2 \frac{ (Y- \tilde m_0(X, S))^2  }{(1-e(X)) r(0, X, S)}   \right ] + \E\left [ \pi^2(X) \frac{  (  \tilde m_1(X, S) - m_1(X))^2  }{e(X)} \right ] \\
      {}& + \E\left [ (1 - \pi(X))^2 \frac{ (  \tilde m_0(X, S) - m_0(X))^2  }{ (1 - e(X)) } \right ],  \\ 
  \end{align*}
where the first equality holds because the covariance terms are 0, the second equality follows by law of iterated expectations, and the third equality follows by Assumptions \ref{assump5-1}-\ref{assump5-2}. Likewise, if $S \indep Y | X$,  then $\tilde m_a(X, S) = m_a(X)$, the efficient influence function for $\V(\pi; y)$  simplifies to 
    	\begin{align*}
	\bar \phi_{y} ={}& \pi(X) \frac{A R (Y - m_1(X))}{e(X) r(1, X, S)}  + (1 - \pi(X)) \frac{(1-A) R (Y - m_0(X))}{(1-e(X)) r(0, X, S)}    \\
	    {}&+ \{ \pi(X) m_1(X) + (1 - \pi(X)) m_0(X) - \V(\pi; y) \}, 
	\end{align*}
and the semiparametric efficiency bound is 
   \begin{align*}
      \text{Var}(\bar \phi_{y}) ={}& \E[\{ \pi(X) m_1(X) + (1 - \pi(X)) m_0(X) - \V(\pi; y) \}^2] + \E\left[ \pi^2(X) \frac{A R (Y - m_1(X))^2}{e^2(X) r^2(1, X, S)}  \right ]  \\
      {}&  + \E\left [  (1 - \pi(X))^2 \frac{(1-A) R (Y - m_0(X))^2}{(1-e(X))^2 r^2(0, X, S)}   \right ]  \\
 ={}& \E[\{ \pi(X) m_1(X) + (1 - \pi(X)) m_0(X) - \V(\pi; y) \}^2] + \E\left[ \pi^2(X) \frac{ (Y - m_1(X))^2}{e(X) r(1, X, S)}  \right ]  \\
      {}&  + \E\left [  (1 - \pi(X))^2 \frac{  (Y - m_0(X))^2}{(1-e(X)) r(0, X, S)}   \right ]. 
  \end{align*}
Thus, the magnitude of their difference is 
      \begin{align*}
           \text{Var}(\bar \phi_{y})& -  \text{Var}(\tilde \phi_{y}) 
         ={}    \E\left[ \pi^2(X) \frac{ (Y - m_1(X))^2}{e(X) r(1, X, S)}   +  (1 - \pi(X))^2 \frac{  (Y - m_0(X))^2}{(1-e(X)) r(0, X, S)}   \right ]\\   
  {}&- \E\left[ \pi^2(X) \frac{ (Y - \tilde m_1(X, S))^2  }{e(X) r(1, X, S)}  \right ]   - \E\left [  (1 - \pi(X))^2 \frac{ (Y- \tilde m_0(X, S))^2  }{(1-e(X)) r(0, X, S)}   \right ] \\
  {}&- \E\left [ \pi^2(X) \frac{  (  \tilde m_1(X, S) - m_1(X))^2  }{e(X)} \right ] - \E\left [ (1 - \pi(X))^2 \frac{ (  \tilde m_0(X, S) - m_0(X))^2  }{ (1 - e(X)) } \right ]       \\
      ={}&    \E\left[ \pi^2(X) \frac{ (Y - \tilde m_1(X, S) + \tilde m_1(X, S) -  m_1(X))^2}{e(X) r(1, X, S)}   +  (1 - \pi(X))^2 \frac{  (Y - \tilde m_0(X, S) + \tilde m_1(X, S) - m_0(X))^2}{(1-e(X)) r(0, X, S)}   \right ]\\   
  {}&- \E\left[ \pi^2(X) \frac{ (Y - \tilde m_1(X, S))^2  }{e(X) r(1, X, S)}  \right ]   - \E\left [  (1 - \pi(X))^2 \frac{ (Y- \tilde m_0(X, S))^2  }{(1-e(X)) r(0, X, S)}   \right ] \\
  {}&- \E\left [ \pi^2(X) \frac{  (  \tilde m_1(X, S) - m_1(X))^2  }{e(X)} \right ] - \E\left [ (1 - \pi(X))^2 \frac{ (  \tilde m_0(X, S) - m_0(X))^2  }{ (1 - e(X)) } \right ]     \\
     ={}&    \E\left[ \pi^2(X) \frac{ (\tilde m_1(X, S) -  m_1(X))^2}{e(X) r(1, X, S)}   +  (1 - \pi(X))^2 \frac{  (\tilde m_0(X, S) - m_0(X))^2}{(1-e(X)) r(0, X, S)}   \right ]\\   
  {}&- \E\left [ \pi^2(X) \frac{  (  \tilde m_1(X, S) - m_1(X))^2  }{e(X)} \right ] - \E\left [ (1 - \pi(X))^2 \frac{ (  \tilde m_0(X, S) - m_0(X))^2  }{ (1 - e(X)) } \right ]     \\
  ={}&   \E\left[ \pi^2(X) \frac{ (1 - r(1, X, S)) \cdot  (\tilde m_1(X, S) -  m_1(X))^2}{e(X) r(1, X, S)}   +  (1 - \pi(X))^2 \frac{ (1 - r(0, X, S)) \cdot (\tilde m_0(X, S) - m_0(X))^2}{(1-e(X)) r(0, X, S)}   \right ],
      \end{align*} 
 which leads to the conclusion by noting that $\pi(X)^2 =\pi(X)$ and $(1-\pi(X))^2 = 1-\pi(X)$.
\end{proof}

{\bf Proposition \ref{prop6-2}} (Unbiasedness). \emph{We have that} 

\emph{(a) (Double Robustness).  $\hat  \V(\pi; s)$ is an unbiased estimator of $ \V(\pi; s)$ if one of the following conditions is satisfied: 
 \begin{itemize}
     \item[(i)] $\hat e(x) =e(x)$, i.e., $\hat e(x)$ estimates $e(x)$ accurately;  
     \item[(ii)] $\hat \mu_a(x) = \mu(x)$ i.e., $\mu_a(x)$ estimates $\mu_a(x)$ accurately. 
 \end{itemize}}

\emph{(b) (Quadruple Robustness). $\hat  \V(\pi; y)$ is  an unbiased   estimator of $ \V(\pi; y)$ if one of the following conditions is satisfied: 
	\begin{itemize}
		\item[(i)] $\hat e(x) = e(x)$ and $\hat {\tilde m}_a(x, s) =\tilde m_a(x, s)$; 
		\item[(ii)] $\hat e(x) = e(x)$ and $\hat r(a, x, s) = r(a, x, s)$;    
  \item[(iii)] $\hat m_a(x) = m_a(x)$ and $\hat {\tilde m}_a(x, s) = \tilde m_a(x, s)$; 
		\item[(iv)] $\hat m_a(x) = m_a(x)$ and $\hat r(a, x, s) = r(a, x, s)$. 
	\end{itemize}}

\begin{proof}[Proof of Proposition \ref{prop6-2}] 
    Recall that $Z = (X, A, S, Y)$,  
    \begin{align*}
	        \phi_s(Z; e, \mu_0, \mu_1) &={}   \{ \pi(X) \mu_1(X) + (1 - \pi(X)) \mu_0(X) \} +  \frac{\pi(X) A(S - \mu_1(X))}{e(X)}  +  \frac{(1 - \pi(X)) (1-A)(S - \mu_0(X))}{1 - e(X)},
	\end{align*}
	\begin{align*}
	\phi_{y}(Z;  e, r,  & m_0, m_1, \tilde m_0, \tilde m_1) ={} \{ \pi(X) m_1(X) + (1 - \pi(X)) m_0(X) \} \\
 +{}&  \frac{\pi(X) A R (Y - \tilde m_1(X, S))}{e(X) r(1, X, S)}  + \frac{\pi(X) A  ( \tilde  m_1(X, S) - m_1(X))}{e(X)}   \\
	  +{}&   \frac{(1 - \pi(X))(1-A) R (Y - \tilde  m_0(X, S))}{(1-e(X)) r(0, X, S)} 
	    +  \frac{(1 - \pi(X))(1-A) ( \tilde m_0(X, S) - m_0(X)) }{1 - e(X)},
	\end{align*}
and 
\begin{align*}
\hat   \V(\pi; s)  ={}&  \frac 1 n  \sum_{i=1}^n \phi_s(Z_i; \hat e, \hat \mu_0, \hat \mu_1),  \\
\hat  \V(\pi; y) ={}&  \frac 1 n  \sum_{i=1}^n \phi_{y}(Z_i; \hat e, \hat r, \hat m_0, \hat m_1, \hat{\tilde m}_0, \hat{\tilde m}_1). 
\end{align*}
Next, we prove (a) and (b) separately.

\medskip 
{\bf Proof of (a).} Due to the sample splitting, $\hat e(X_i)$ and $\mu_a(X_i)$ can be seen as an function of $X_i$ when taking expectation of $\E[\hat \V(\pi; s)]$. Thus, 
      \begin{align*}
        \E[&\hat \V(\pi; s)] ={} \E[ \phi_s(Z; \hat e, \hat \mu_0, \hat \mu_1) ]        \\
                      ={}&     \E \left [     \{ \pi(X) \hat \mu_1(X) + (1 - \pi(X)) \hat \mu_0(X) \} +  \frac{\pi(X) A(S - \hat \mu_1(X))}{\hat e(X)}  +  \frac{(1 - \pi(X)) (1-A)(S - \hat \mu_0(X))}{1 -\hat e(X)} \right ]  \\
                      ={}&        \E \left [     \{ \pi(X) \hat \mu_1(X) + (1 - \pi(X)) \hat \mu_0(X) \} +  \frac{\pi(X) A(S(1) - \hat \mu_1(X))}{\hat e(X)}  +  \frac{(1 - \pi(X)) (1-A)(S(0) - \hat \mu_0(X))}{1 -\hat e(X)} \right ]   \\
                      ={}&     \E \left [     \{ \pi(X) \hat \mu_1(X) + (1 - \pi(X)) \hat \mu_0(X) \} \right ] \\
                      & +    \E \left [ \frac{\pi(X) e(X) (\mu_1(X) - \hat \mu_1(X))}{\hat e(X)}  +  \frac{(1 - \pi(X)) (1-e(X))(\mu_0(X) - \hat \mu_0(X))}{1 -\hat e(X)} \right ] , 
      \end{align*}
 where the last equality follows by the law of iterated expectations and Assumption \ref{assump5-1}. 

If $\hat e(X) = e(X)$, $\E[\hat \V(\pi; s)] $ reduces to 
   \begin{align*}
        &    \E \left [     \{ \pi(X) \hat \mu_1(X) + (1 - \pi(X)) \hat \mu_0(X) \} \right ] +    \E \left [ \pi(X) (\mu_1(X) - \hat \mu_1(X))  +  (1 - \pi(X)) (\mu_0(X) - \hat \mu_0(X)) \right ] \\
                        ={}&      \E \left [ \pi(X) \mu_1(X)   +  (1 - \pi(X)) \mu_0(X)  \right ] \\
                        ={}&     \V(\pi; s).                   \qquad \text{By equation (\ref{eq4})}
    \end{align*}    
This proves the conclusion (a)(i). 

If $\hat \mu_a(X) = \mu_a(X)$ for $a = 0, 1$, $\E[\hat \V(\pi; s)] $ reduces to 
  \begin{align*}
     &     \E \left [     \{ \pi(X) \hat \mu_1(X) + (1 - \pi(X)) \hat \mu_0(X) \} \right ] + 0 \\
                   ={}&       \E \left [ \pi(X) \mu_1(X)   +  (1 - \pi(X)) \mu_0(X)  \right ] \\
                   ={}&   \V(\pi; s). 
      \end{align*}
This proves the conclusion (a)(ii).

\medskip 
{\bf Proof of (b).}  Similar to the proof of (a), we first calculate the expectation of $\hat \V(\pi; y)$. 
 \begin{align*}
        & \E[\hat \V(\pi; y)] \\
        ={}& \E[ \phi_y(Z;   \hat e, \hat r, \hat m_0, \hat m_1, \hat{\tilde m}_0, \hat{\tilde m}_1 ]        \\
                      ={}&     \E \left [   \pi(X) \hat m_1(X) + (1 - \pi(X)) \hat m_0(X) \right ] \\
 +{}& \E \left [  \frac{\pi(X) A R (Y - \hat {\tilde m}_1(X, S))}{\hat e(X) \hat r(1, X, S)}  + \frac{\pi(X) A  ( \hat{\tilde  m}_1(X, S) - \hat m_1(X))}{\hat e(X)} \right ]  \\
	  +{}&    \E \left [  \frac{(1 - \pi(X))(1-A) R (Y - \hat{\tilde  m}_0(X, S))}{(1-\hat e(X)) \hat r(0, X, S)} 
	    +  \frac{(1 - \pi(X))(1-A) ( \hat{\tilde m}_0(X, S) - \hat m_0(X)) }{1 -\hat e(X)} \right ] \\
	      ={}&     \E \left [   \pi(X) \hat m_1(X) + (1 - \pi(X)) \hat m_0(X) \right ] \\
 +{}& \E \left [  \frac{\pi(X) \cdot \E[ A R (Y - \hat {\tilde m}_1(X, S)) \mid X, S ] }{\hat e(X) \hat r(1, X, S)}  + \frac{\pi(X) e(X)  ( \hat{\tilde  m}_1(X, S) - \hat m_1(X))}{\hat e(X)} \right ]  \\
	  +{}&    \E \left [  \frac{(1 - \pi(X)) \cdot \E[ (1-A) R (Y - \hat{\tilde  m}_0(X, S)) \mid X, S]}{(1-\hat e(X)) \hat r(0, X, S)} 
	    +  \frac{(1 - \pi(X))(1-e(X)) ( \hat{\tilde m}_0(X, S) - \hat m_0(X)) }{1 -\hat e(X)} \right ] \\
	        ={}&     \E \left [   \pi(X) \hat m_1(X) + (1 - \pi(X)) \hat m_0(X) \right ] \\
 +{}& \E \left [  \frac{\pi(X) \cdot \E[ (Y - \hat {\tilde m}_1(X, S)) \mid X, S, A=1, R=1 ] \cdot \P(A=1, R=1|X,S)  }{\hat e(X) \hat r(1, X, S)}  + \frac{\pi(X) e(X)  ( \hat{\tilde  m}_1(X, S) - \hat m_1(X))}{\hat e(X)} \right ]  \\
	  +{}&    \E \left [  \frac{(1 - \pi(X)) \cdot \E[ ((Y - \hat{\tilde  m}_0(X, S)) \mid X, S, A=0, R=1] \cdot \P(A=0, R=1|X, S)}{(1-\hat e(X)) \hat r(0, X, S)} \right ] \\ 
	    +{}&   \E \left [  \frac{(1 - \pi(X))(1-e(X)) ( \hat{\tilde m}_0(X, S) - \hat m_0(X)) }{1 -\hat e(X)} \right ] \\
	          ={}&     \E \left [   \pi(X) \hat m_1(X) + (1 - \pi(X)) \hat m_0(X) \right ] \\
 +{}& \E \left [  \frac{\pi(X) \cdot \{\tilde m_1(X, S) - \hat {\tilde m}_1(X, S)  \} \cdot  e(X) r(1, X, S)  }{\hat e(X) \hat r(1, X, S)}  + \frac{\pi(X) e(X)  ( \hat{\tilde  m}_1(X, S) - \hat m_1(X))}{\hat e(X)} \right ]  \\
	  +{}&    \E \left [  \frac{(1 - \pi(X)) \cdot  \{\tilde m_0(X,S) - \hat{\tilde  m}_0(X, S)\} \cdot (1 - e(X)) r(0, X, S) }{(1-\hat e(X)) \hat r(0, X, S)} + \frac{(1 - \pi(X))(1-e(X)) ( \hat{\tilde m}_0(X, S) - \hat m_0(X)) }{1 -\hat e(X)} \right ].
      \end{align*}
 
If   $\hat e(X) = e(X)$ and $\hat {\tilde m}_a(X, S) =\tilde m_a(X, S)$, $\E[\hat \V(\pi; y)] $ reduces to 
    \begin{align*}
              &     \E \left [   \pi(X) \hat m_1(X) + (1 - \pi(X)) \hat m_0(X) \right ] \\
 +{}& \E \left [    \pi(X) ( {\tilde  m}_1(X, S) - \hat m_1(X)) +  (1 - \pi(X)) ( {\tilde m}_0(X, S) - \hat m_0(X))  \right ] \\
    ={}& \E[ \pi(X)  \tilde m_1(X, S) + (1 - \pi(X))  \tilde m_0(X, S) ] \\
    ={}&  \V(\pi; y).      \qquad \text{by Proposition \ref{prop5-3}}
    \end{align*}
 This proves (b)(i). 
 
If  $\hat e(X) = e(X)$ and $\hat r(a, X, S) = r(a, X, S)$,     $\E[\hat \V(\pi; y)] $ reduces to 
  \begin{align*}
  &     \E \left [   \pi(X) \hat m_1(X) + (1 - \pi(X)) \hat m_0(X) \right ] \\
 +{}& \E \left [  \pi(X) \cdot \{\tilde m_1(X, S) - \hat {\tilde m}_1(X, S)  \}   +  \pi(X)   ( \hat{\tilde  m}_1(X, S) - \hat m_1(X)) \right ]  \\
	  +{}&    \E \left [  (1 - \pi(X)) \cdot  \{\tilde m_0(X,S) - \hat{\tilde  m}_0(X, S)\}  + (1 - \pi(X)) ( \hat{\tilde m}_0(X, S) - \hat m_0(X))  \right ] \\
	  ={}&   \E[ \pi(X)  \tilde m_1(X, S) + (1 - \pi(X))  \tilde m_0(X, S) ] \\
    ={}&  \V(\pi; y).   
      \end{align*}
   This proves (b)(ii). 
 
   If  $\hat m_a(X) = m_a(X)$ and $\hat {\tilde m}_a(X, S) = \tilde m_a(X, S)$,     $\E[\hat \V(\pi; y)] $ reduces to 
        \begin{align*}
      &       \E \left [   \pi(X)  m_1(X) + (1 - \pi(X))  m_0(X) \right ] \\
 +{}& \E \left [    \frac{\pi(X) e(X)  ( {\tilde  m}_1(X, S) - m_1(X))}{\hat e(X)} +   \frac{(1 - \pi(X))(1-e(X)) ( {\tilde m}_0(X, S) - m_0(X)) }{1 -\hat e(X)} \right ].
      \end{align*}
Note that $\E[\tilde m_a(X,S) \mid X] = m_a(X)$, leading to that $\E[ \tilde m_a(X, S) - m_a(X) | X ] = 0$ for $a = 0, 1$,   
 $\E[\hat \V(\pi; y)] $ can be further reduced to
        \begin{align*}
                \E \left [   \pi(X)  m_1(X) + (1 - \pi(X))  m_0(X) \right ] =   \E[ \pi(X)  \tilde m_1(X, S) + (1 - \pi(X))  \tilde m_0(X, S) ]  =  \V(\pi; y).  
        \end{align*}     
    This proves (b)(iii).

If  $\hat m_a(X) = m_a(X)$ and $\hat r(a, X, S) = r(a, X, S)$, $\E[\hat \V(\pi; y)] $ reduces to 
         \begin{align*}
             &       \E \left [   \pi(X)  m_1(X) + (1 - \pi(X))  m_0(X) \right ] \\
+{}& \E \left [  \frac{\pi(X) \cdot \{\tilde m_1(X, S) - \hat {\tilde m}_1(X, S)  \} \cdot  e(X)   }{\hat e(X) }  + \frac{\pi(X) e(X)  ( \hat{\tilde  m}_1(X, S) - m_1(X))}{\hat e(X)} \right ]  \\
	  +{}&   \E \left [  \frac{(1 - \pi(X)) \cdot  \{\tilde m_0(X,S) - \hat{\tilde  m}_0(X, S)\} \cdot (1 - e(X))  }{(1-\hat e(X)) } + \frac{(1 - \pi(X))(1-e(X)) ( \hat{\tilde m}_0(X, S) - m_0(X)) }{1 -\hat e(X)} \right ] \\
	        ={}&       \E \left [   \pi(X)  m_1(X) + (1 - \pi(X))  m_0(X) \right ] \\
+{}& \E \left [  \frac{\pi(X) \cdot \{\tilde m_1(X, S) - m_1(X)  \} \cdot  e(X)   }{\hat e(X) }  +  \frac{(1 - \pi(X)) \cdot  \{\tilde m_0(X,S) - m_0(X)\} \cdot (1 - e(X))  }{(1-\hat e(X)) }  \right ] \\
     ={}&   \E \left [   \pi(X)  m_1(X) + (1 - \pi(X))  m_0(X) \right ]  \\
     ={}&  \V(\pi; y). 
      \end{align*} 
    This proves (b)(iv). 
    
\end{proof}

{\bf Theorem \ref{Asy}} (Asymptotic Properties). \emph{We have that} 
   
\emph{(a) if $|| \hat e(x) - e(x)  ||_2 \cdot  || \hat \mu_a(x) - \mu_a(x) ||_2   = o_{\P}(n^{-1/2})$ for all $x\in\mathcal{X}$ and $a\in \{0,  1\}$,  then $\hat \V(\pi; s)$ is a consistent  estimator of $\V(\pi; s)$, and satisfies   
    \[ \sqrt{n} \{\hat \V(\pi; s) -  \V(\pi; s) \}     \xrightarrow{d} N(0,  \sigma_s^2),  \]
  where $\sigma_s^2 = \text{Var}(\phi_s)$ is the semiparametric efficiency bound of $\V(\pi; s)$, and  $\xrightarrow{d}$ means convergence in distribution.}

\emph{(b) if $|| \hat e(x) - e(x)  ||_2 \cdot  || \hat m_a(x) - m_a(x) ||_2   = o_{\P}(n^{-1/2})$ and $|| \hat r(a, x, s) - r(a, x, s)  ||_2 \cdot  || \hat {\tilde m}_a(x, s) - \tilde m_a(x, s) ||_2   = o_{\P}(n^{-1/2})$  for all $x\in\mathcal{X}$, $a\in \{0,  1\}$ and $s \in \mathcal{S}$, then $\hat \V(\pi; y)$ is a consistent estimator of $\V(\pi; y)$, and satisfies 
    \[ \sqrt{n} \{\hat \V(\pi; y) -  \V(\pi; y) \}     \xrightarrow{d} N(0,  \sigma_y^2),  \]
  where $\sigma_y^2$ is the semiparametric efficiency bound of $\V(\pi; y)$.}

\begin{proof}[Proof of Theorem \ref{Asy}]
We prove Theorem \ref{Asy}(a) and Theorem \ref{Asy}(b), separately. 

 {\bf Proof of (a).}  Recall that 
      $Z = (X, A, S, Y)$,  
    \begin{align*}
	        \phi_s(Z; e, \mu_0, \mu_1) &={}   \{ \pi(X) \mu_1(X) + (1 - \pi(X)) \mu_0(X) \} +  \frac{\pi(X) A(S - \mu_1(X))}{e(X)}  +  \frac{(1 - \pi(X)) (1-A)(S - \mu_0(X))}{1 - e(X)},
	\end{align*} 
and $\hat   \V(\pi; s)  =  \frac 1 n  \sum_{i=1}^n \phi_s(Z_i; \hat e, \hat \mu_0, \hat \mu_1)$. In addition, $ \V(\pi; s) = \E[ \phi_s(Z; e, \mu_0, \mu_1)]$. 
   
 The estimator $\hat   \V(\pi; s)$ can be decomposed as  
	\begin{align*}
  \hat   \V(\pi; s) -   \V(\pi; s) =  U_{1n} + U_{2n},  
	\end{align*}
where 
    \begin{align*}
    	  U_{1n} ={}&  \frac 1 n \sum_{i=1}^n [   \phi_s(Z_i; e, \mu_0, \mu_1) -    \V(\pi; s) ], \\
        U_{2n} ={}& \frac 1 n \sum_{i=1}^n [   \phi_s(Z_i; \hat e, \hat \mu_0, \hat \mu_1)  -   \phi_s(Z_i; e, \mu_0, \mu_1) ]. 
    \end{align*}
Note that $U_{1n}$ is a sum of $n$ independent variables with zero means,  and its variance equals $\sigma_s^2$.  
By the central limit theorem, 
        \[ \sqrt{n} U_{1n}     \xrightarrow{d}  N\left (0,  \sigma_s^2 \right ).  \]  
 Thus, it suffices to show that $U_{2n} = o_{\P}(n^{-1/2})$.

Next, we focus on analyzing $U_{2n}$, which can be be further decomposed as 
        \[  U_{2n} =  U_{2n} - \bfE[U_{2n}] +  \bfE[U_{2n}].    \]
 
Define the Gateaux derivative of the generic function $g(Z; e, \mu_0, \mu_1)$ in the direction $[\hat e - e,  \hat \mu_0 - \mu_0, \hat \mu_1 - \mu_1]$ as 
 \begin{align*}
  & \partial_{[\hat e - e,  \hat \mu_0 - \mu_0, \hat \mu_1 - \mu_1]}g(Z; e, \mu_0, \mu_1)   \\
 ={}&  \frac{ \partial g(Z; e + \alpha_1 (\hat e - e), \mu_0, \mu_1) } { \partial \alpha_1 }\Big |_{\alpha_1 = 0} +    \frac{ \partial g(Z; e, \mu_0 + \alpha_2 (\hat \mu_0 -  \mu_0), \mu_1) } { \partial \alpha_2 }\Big |_{\alpha_2 = 0} +  \frac{ \partial g(Z; e, \mu_0, \mu_1+\alpha_3 (\hat \mu_1 -  \mu_1)) } { \partial \alpha_2 }\Big |_{\alpha_2 = 0}.
 \end{align*}   
   By a Taylor expansion for $\bfE[U_{2n}]$ yields that  
   \begin{align*}
 \bfE[&U_{2n}] ={}   \bfE [ \phi_s(Z; \hat e, \hat \mu_0, \hat \mu_1)  -   \phi_s(Z; e, \mu_0, \mu_1) ]  \\
     ={}& \partial_{[\hat e - e,  \hat \mu_0 - \mu_0, \hat \mu_1 - \mu_1]} \bfE[   \phi_s(Z; e, \mu_0, \mu_1)   ] +  \frac 1 2  \partial^2_{[\hat e - e,  \hat \mu_0 - \mu_0, \hat \mu_1 - \mu_1]} \bfE[   \phi_s(Z; e, \mu_0, \mu_1) ] + \cdots
   \end{align*}
The first-order term  
	\begin{align*}
	& \partial_{[\hat e - e,  \hat \mu_0 - \mu_0, \hat \mu_1 - \mu_1]} \bfE[    \phi_s(Z; e, \mu_0, \mu_1)   ]  \\
	={}& \bfE \biggl [   \left \{  \frac{ (1 -\pi(X)) (1-A) \{ S - \mu_0(X)\} }{ (1-e(X))^2 }  - \frac{  \pi(X) A\{ S - \mu_1(X)\} }{ e(X)^2 }  \right \} \{\hat e(X) - e(X) \}  \\
	 {}&+   (1 - \pi(X)) \left \{ 1  -  \frac{1- A}{1-  e(X) } \right \}  \{  \hat \mu_0(X) - \mu_0(X) \}   +  \pi(X)  \left \{ 1 -  \frac{A  }{  e(X) } \right \}  \{  \hat \mu_1(X) - \mu_1(X) \}   \biggr ] \\
	={}& 0,
	\end{align*}
where the last equation follows from $\E[A|X] - e(X) = 0$, $\E[ A(S - \mu_1(X)) |X ] = 0$, and $\E[ (1-A)(S - \mu_0(X)) |X ] = 0$. 
  For the second-order term, we get 
    \begin{align*}
       &  \frac 1 2  \partial^2_{[\hat e - e,  \hat \mu_0 - \mu_0, \hat \mu_1 - \mu_1]} \bfE[    \phi_s(Z; e, \mu_0, \mu_1)   ]\\
	={}& \bfE \biggl [  \left \{  \frac{ (1-\pi(X)) (1-A) \{ S - \mu_0(X)\} }{ (1-e(X))^3 } + \frac{\pi(X) A (S - \mu_1(X)) }{ e(X)^3 }  \right \} \{\hat e(X) - e(X) \}^2     \\
	&-     \frac{(1-\pi(X))(1- A)  }{ (1-  e(X))^2 }  \{\hat e(X) - e(X) \}  \{  \hat \mu_0(X) - \mu_0(X) \}     + \frac{\pi(X)A}{e(X)^2} \{\hat e(X) - e(X) \}  \{  \hat \mu_1(X) - \mu_1(X) \}     \biggr ] \\
	={}&   \bfE \biggl [   \frac{\pi(X)A}{e(X)^2} \{\hat e(X) - e(X) \}  \{  \hat \mu_1(X) - \mu_1(X) \}    -     \frac{(1-\pi(X))(1- A)  }{ (1-  e(X))^2 }  \{\hat e(X) - e(X) \}  \{  \hat \mu_0(X) - \mu_0(X) \}      \biggr ] \\
	={}& O_{\P} \biggl (   ||\hat e(X) - e(X) ||_2  \cdot \|  \hat \mu_1(X) - \mu_1(X) ||_2  +   || \hat e(X) - e(X) ||_2 \cdot ||  \hat \mu_0(X) - \mu_0(X) ||_2 \biggr ) \\
	={}& o_{\P}(n^{-1/2}),
     \end{align*}
All higher-order terms can be shown to be dominated by the second-order term. Therefore, 
    $\bfE[U_{2n}]  = o_{\P}(n^{-1/2}).$  
 In addition,  we get that $U_{2n} - \bfE[U_{2n}] = o_{\P}(n^{-1/2)}$ by calculating $\text{Var}\{\sqrt{n}(U_{2n} - \bfE[U_{2n}])\} = o_{\P}(1)$. This proves the conclusion of Theorem \ref{Asy}(a).

 {\bf Proof of (b).}  Recall that \begin{align*}
	\phi_{y}(Z;  e, r,  & m_0, m_1, \tilde m_0, \tilde m_1) ={} \{ \pi(X) m_1(X) + (1 - \pi(X)) m_0(X) \} \\
 +{}&  \frac{\pi(X) A R (Y - \tilde m_1(X, S))}{e(X) r(1, X, S)}  + \frac{\pi(X) A  ( \tilde  m_1(X, S) - m_1(X))}{e(X)}   \\
	  +{}&   \frac{(1 - \pi(X))(1-A) R (Y - \tilde  m_0(X, S))}{(1-e(X)) r(0, X, S)} 
	    +  \frac{(1 - \pi(X))(1-A) ( \tilde m_0(X, S) - m_0(X)) }{1 - e(X)},
	\end{align*}
and 
\begin{align*}
\hat  \V(\pi; y) ={}&  \frac 1 n  \sum_{i=1}^n \phi_{y}(Z_i; \hat e, \hat r, \hat m_0, \hat m_1, \hat{\tilde m}_0, \hat{\tilde m}_1). 
\end{align*}

Similar to the proof of (a), we decompose $ \hat   \V(\pi; y) -   \V(\pi; y) $ as  
	\begin{align*}
  \hat   \V(\pi; y) -   \V(\pi; y) =  U_{3n} + U_{4n},  
	\end{align*}
where 
    \begin{align*}
    	  U_{1n} ={}&  \frac 1 n \sum_{i=1}^n [  \phi_{y}(Z_i;  e, r,   m_0, m_1, \tilde m_0, \tilde m_1)  -    \V(\pi; y) ], \\
        U_{2n} ={}& \frac 1 n \sum_{i=1}^n [  \phi_{y}(Z_i; \hat e, \hat r, \hat m_0, \hat m_1, \hat{\tilde m}_0, \hat{\tilde m}_1)  -   \phi_{y}(Z_i;  e, r,   m_0, m_1, \tilde m_0, \tilde m_1)  ]. 
    \end{align*}
Note that $U_{3n}$ is a sum of $n$ independent variables with zero means,  and its variance equals $\sigma_y^2$.  
By the central limit theorem, 
        \[ \sqrt{n} U_{3n}     \xrightarrow{d}  N\left (0,  \sigma_s^2 \right ).  \]  
 Thus, it suffices to show that $U_{4n} = o_{\P}(n^{-1/2})$.  $U_{4n}$ can be be further decomposed as 
        \[  U_{4n} =  U_{4n} - \bfE[U_{4n}] +  \bfE[U_{4n}].    \]
   By a Taylor expansion for $\bfE[U_{4n}]$ yields that  
   \begin{align*}
 \bfE[&U_{4n}] ={}   \bfE [ \phi_{y}(Z; \hat e, \hat r, \hat m_0, \hat m_1, \hat{\tilde m}_0, \hat{\tilde m}_1)  -   \phi_{y}(Z;  e, r,   m_0, m_1, \tilde m_0, \tilde m_1) ]  \\
     ={}& \partial_{[\hat e - e, \hat r - r,  \hat m_0 - m_0, \hat m_1 - m_1, \hat{\tilde m}_0 - \tilde m_0, \hat{\tilde m}_1 - \tilde m_1  ]} \bfE[   \phi_{y}(Z;  e, r,   m_0, m_1, \tilde m_0, \tilde m_1)   ]  \\
     {}& +  \frac 1 2  \partial^2_{[\hat e - e, \hat r - r,  \hat m_0 - m_0, \hat m_1 - m_1, \hat{\tilde m}_0 - \tilde m_0, \hat{\tilde m}_1 - \tilde m_1 ]} \bfE[   \phi_{y}(Z;  e, r,   m_0, m_1, \tilde m_0, \tilde m_1) ]  \\ 
     {}&+ \cdots
   \end{align*}
The first-order term  
	\begin{align*}
	&  \partial_{[\hat e - e, \hat r - r,  \hat m_0 - m_0, \hat m_1 - m_1, \hat{\tilde m}_0 - \tilde m_0, \hat{\tilde m}_1 - \tilde m_1  ]} \bfE[   \phi_{y}(Z;  e, r,   m_0, m_1, \tilde m_0, \tilde m_1)   ] \\
	={}& \bfE \biggl [   \Big \{  \frac{ (1 -\pi(X)) (1-A) R \{ Y - \tilde m_0(X, S)\} }{ (1-e(X))^2  r(0, X, S)} + \frac{ (1-\pi(X)) (1-A) (\tilde m_0(X, S) - m_0(X)) }{ (1 -e(X))^2 }  \\
	 {}& - \frac{  \pi(X) A R \{ Y - \tilde m_1(X, S)\} }{ e(X)^2 r(1, X, S) } - \frac{  \pi(X) A  \{ \tilde m_1(X, S) - m_1(X)\} }{ e(X)^2 r(1, X, S) }  \Big \} \{\hat e(X) - e(X) \}  \\ 
	 {}&  - \frac{ \pi(X) A R (Y - \tilde m_1(X, S)) }{ e(X) r(1, X, S)^2 } \cdot \{\hat r(1, X, S) - r(1, X, S) \}  \\
	  {}&  - \frac{ (1-\pi(X)) (1- A) R (Y - \tilde m_0(X, S)) }{ (1-e(X)) r(0, X, S)^2 } \cdot \{\hat r(0, X, S) - r(0, X, S) \}  \\
	 {}&+   (1 - \pi(X)) \left \{ 1  -  \frac{1- A}{1-  e(X) } \right \}  \{  \hat m_0(X) - m_0(X) \}  \\
	  {}&+  \pi(X)  \left \{ 1 -  \frac{A  }{  e(X) } \right \}  \{  \hat m_1(X) - m_1(X) \}  \\
	  {}&+ \frac{ (1 -\pi(X))(1-A) }{ 1 - e(X) }  \left \{ 1 - \frac{R}{ r(0, X, S) }  \right \} \{ \hat {\tilde m}_0(X, S) - \tilde m_0(X, S)   \}  \\
	  {}&+  \frac{ \pi(X) A }{ e(X) }  \left \{ 1 - \frac{R}{ r(1, X, S) }  \right \} \{ \hat {\tilde m}_1(X, S) - \tilde m_1(X, S)   \} \biggr ] \\
	={}& 0. 
	\end{align*}
  For the second-order term, we get 
    \begin{align*}
       &  \frac 1 2  \partial^2_{[\hat e - e,  \hat \mu_0 - \mu_0, \hat \mu_1 - \mu_1]} \bfE[    \phi_s(Z; e, \mu_0, \mu_1)   ]\\
	={}& \frac 1 2 \bfE \biggl [   \frac{\pi(X) A }{ e(X)^2 } \cdot  \{\hat e(X) - e(X) \} \cdot \{\hat m_1(X) -   m_1(X) \}   -  \frac{ (1-\pi(X)) (1-A) }{ (1-e(X))^2 } \cdot  \{\hat e(X) - e(X) \} \cdot \{\hat m_0(X) -   m_0(X) \}    \\
	&   +  \frac{\pi(X)A R}{e(X) r(1, X, S)^2 } \{ \hat r(1, X, S) - r(1, X, S) \}  \{  \hat {\tilde m}_1(X,S) - {\tilde m}_1(X,S)  \} \\    &  +   \frac{(1-\pi(X))(1-A) R}{(1-e(X)) r(0, X, S)^2 } \{ \hat r(0, X, S) - r(0, X, S) \}  \{  \hat {\tilde m}_0(X,S) - {\tilde m}_0(X,S)  \}     \\
	&+  \frac{\pi(X) A }{ e(X)^2 } \cdot  \{\hat e(X) - e(X) \} \cdot \{\hat m_1(X) -   m_1(X) \}   -  \frac{ (1-\pi(X)) (1-A) }{ (1-e(X))^2 } \cdot  \{\hat e(X) - e(X) \} \cdot \{\hat m_0(X) -   m_0(X) \}   \\
	&+  \frac{\pi(X)A R}{e(X) r(1, X, S)^2 } \{ \hat r(1, X, S) - r(1, X, S) \}  \{  \hat {\tilde m}_1(X,S) - {\tilde m}_1(X,S)  \} \\    &  +   \frac{(1-\pi(X))(1-A) R}{(1-e(X)) r(0, X, S)^2 } \{ \hat r(0, X, S) - r(0, X, S) \}  \{  \hat {\tilde m}_0(X,S) - {\tilde m}_0(X,S)  \}       \biggr ] \\
	={}& \bfE \biggl [   \frac{\pi(X) A }{ e(X)^2 } \cdot  \{\hat e(X) - e(X) \} \cdot \{\hat m_1(X) -   m_1(X) \}   -  \frac{ (1-\pi(X)) (1-A) }{ (1-e(X))^2 } \cdot  \{\hat e(X) - e(X) \} \cdot \{\hat m_0(X) -   m_0(X) \}    \\
	&   +  \frac{\pi(X)A R}{e(X) r(1, X, S)^2 } \{ \hat r(1, X, S) - r(1, X, S) \}  \{  \hat {\tilde m}_1(X,S) - {\tilde m}_1(X,S)  \} \\    &  +   \frac{(1-\pi(X))(1-A) R}{(1-e(X)) r(0, X, S)^2 } \{ \hat r(0, X, S) - r(0, X, S) \}  \{  \hat {\tilde m}_0(X,S) - {\tilde m}_0(X,S)  \}      \biggr ] \\  
	={}& O_{\P} \biggl (   ||\hat e(X) - e(X) ||_2  \cdot \|  \hat m_1(X) - m_1(X) ||_2  +   || \hat e(X) - e(X) ||_2 \cdot ||  \hat m_0(X) - m_0(X) ||_2  \\
	{}&  +   || \hat r(0, X, S) - r(0, X, S) ||_2 \cdot  ||  \hat {\tilde m}_0(X,S) - {\tilde m}_0(X,S)  ||_2 + || \hat r(1, X, S) - r(1, X, S) ||_2 \cdot  ||  \hat {\tilde m}_1(X,S) - {\tilde m}_1(X,S)  ||_2    \biggr ) \\
	={}& o_{\P}(n^{-1/2}),
     \end{align*}
All higher-order terms can be shown to be dominated by the second-order term. Therefore, 
    $\bfE[U_{4n}]  = o_{\P}(n^{-1/2}).$  
 In addition,  we get that $U_{4n} - \bfE[U_{4n}] = o_{\P}(n^{-1/2)}$ by calculating $\text{Var}\{\sqrt{n}(U_{4n} - \bfE[U_{4n}])\} = o_{\P}(1)$. This proves the conclusion of Theorem \ref{Asy}(b).
 
\end{proof}

Next, we give the detailed proof of Proposition \ref{prop6-4}, which relies on the following Lemma \ref{shapiro}. 

\begin{lemma}\cite{shapiro1991asymptotic}\label{shapiro}
Let $\Theta$ be a compact subset of $\mathbb{R}^k$. Let $C(\Theta)$ denote the set of continuous real-valued functions on $\Theta$, with $\mathcal{L}=C(\Theta) \times \ldots \times C(\Theta)$ the $r$-dimensional Cartesian product. Let $\psi(\theta)=\left(\psi_0, \ldots, \psi_r\right) \in \mathcal{L}$ be a vector of convex functions. Consider the quantity $\alpha^*$ defined as the solution to the following convex optimization program:
$$
\begin{aligned}
& \alpha^*=\min _{\theta \in \Theta} \psi_0(\theta) \\
& \text { subject to } \psi_j(\theta) \leq 0, j=1, \ldots, r
\end{aligned}
$$
Assume that Slater's condition holds, so that there is some $\theta \in \Theta$ for which the inequalities are satisfied and non-affine inequalities are strictly satisfied, i.e. $\psi_j(\theta)<0$ if $\psi_j$ is non-affine. Now consider a sequence of approximating programs, for $n=1,2, \ldots$ :
$$
\begin{aligned}
& \widehat{\alpha}_n=\min _{\theta \in \Theta} \quad \widehat{\psi}_{0 n}(\theta) \\
& \text { subject to } \widehat{\psi}_{j n}(\theta) \leq 0, j=1, \ldots, r
\end{aligned}
$$
with $\widehat{\psi}_n(\theta):=\left(\widehat{\psi}_{0 n}, \ldots, \widehat{\psi}_{r n}\right) \in \mathcal{L}$. Assume that $f(n)\left(\widehat{\psi}_n-\psi\right)$ converges in distribution to a random element $W \in \mathcal{L}$ for some real-valued function $f(n)$. Then:
$$
f(n)\left(\widehat{\alpha}_n-\alpha^*\right) \rightsquigarrow L
$$
for a particular random variable L. It follows that $\widehat{\alpha}_n-\alpha^*=O_{\mathbb{P}}(1 / f(n))$.
\end{lemma}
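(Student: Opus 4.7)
The plan is to prove Lemma~\ref{shapiro} via the functional delta method combined with convex duality. First, I would establish that the value functional $V:\mathcal{L}\to\mathbb{R}$ defined by $V(\psi)=\min_{\theta\in\Theta}\{\psi_0(\theta):\psi_j(\theta)\leq 0,\ j=1,\ldots,r\}$ is Hadamard directionally differentiable at the nominal $\psi$ tangentially to $\mathcal{L}$, and then apply a standard delta-method argument for such maps.

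The preliminary steps handle feasibility and duality. Since $f(n)(\widehat{\psi}_n-\psi)\rightsquigarrow W$ in $\mathcal{L}=C(\Theta)^{r+1}$, we get $\widehat{\psi}_n\to\psi$ uniformly on $\Theta$ in probability. Combined with Slater's condition (strict feasibility at some $\bar\theta\in\Theta$ for all non-affine $\psi_j$), this guarantees that $\bar\theta$ is feasible for $\widehat{\psi}_n$ with probability tending to one, so $\widehat{\alpha}_n$ is well-defined on events of probability tending to one. Convexity of the $\psi_j$ plus Slater's condition imply strong duality: the set $S^*\subset\Theta$ of primal optimizers is nonempty and compact (by compactness of $\Theta$ and continuity), and the set $\Lambda^*\subset\mathbb{R}_+^r$ of optimal Lagrange multipliers is nonempty, convex and \emph{compact} -- boundedness of $\Lambda^*$ is exactly the payoff of Slater's condition, via the standard bound $\|\lambda\|\leq (\psi_0(\bar\theta)-\alpha^*)/\min_j(-\psi_j(\bar\theta))$ for non-affine constraints.

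The core step is to establish the Shapiro directional-derivative formula
\begin{equation*}
V'(\psi;d)\;=\;\min_{\theta\in S^*}\,\max_{\lambda\in\Lambda^*}\;\Bigl\{d_0(\theta)+\sum_{j=1}^r\lambda_j\,d_j(\theta)\Bigr\},\qquad d\in\mathcal{L},
\end{equation*}
and to upgrade it to Hadamard directional differentiability. The lower bound on $V(\psi+td)-V(\psi)$ is obtained from weak duality: for any dual-feasible $\lambda\in\Lambda^*$, $V(\psi+td)\geq \min_\theta\{\psi_0(\theta)+td_0(\theta)+\sum_j\lambda_j(\psi_j(\theta)+td_j(\theta))\}$, and one shows that the minimizing $\theta$ stays near $S^*$ as $t\downarrow 0$. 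The matching upper bound is obtained by evaluating the perturbed program at any fixed $\theta\in S^*$ after a mild perturbation that restores feasibility (possible by Slater's condition at cost $o(t)$). Compactness of $S^*\times\Lambda^*$ together with continuity of the bilinear map $(\theta,\lambda)\mapsto d_0(\theta)+\sum_j\lambda_j d_j(\theta)$ on $\mathcal{L}$ lets one replace $d$ by any sequence $d_n\to d$ in $\mathcal{L}$ with $o(t)$ error, giving Hadamard directional differentiability. With this in hand, the functional delta method for Hadamard directionally differentiable maps yields
\begin{equation*}
f(n)\bigl(\widehat{\alpha}_n-\alpha^*\bigr)\;=\;f(n)\bigl(V(\widehat{\psi}_n)-V(\psi)\bigr)\;\rightsquigarrow\;V'(\psi;W)\;=:\;L,
\end{equation*}
and tightness of the limiting law gives $\widehat{\alpha}_n-\alpha^*=O_{\mathbb{P}}(1/f(n))$.

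The main obstacle is the directional-derivative formula in the core step: proving the matching upper and lower bounds requires simultaneously controlling the primal minimizer (which may drift within $S^*$ as $t$ varies) and the dual multiplier (which may drift within $\Lambda^*$), and handling the non-smooth $\min\max$ structure. Slater's condition is essential throughout -- it ensures boundedness of $\Lambda^*$, the restoration-of-feasibility argument in the upper bound, and the joint upper semicontinuity of the optimal solution correspondence needed to promote pointwise to Hadamard differentiability. Once the formula is in place, the delta-method conclusion is routine.
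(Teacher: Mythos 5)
Your outline is correct, but note that the paper itself gives no proof of this lemma at all --- it is imported verbatim from \cite{shapiro1991asymptotic} and closed with an empty box. What you have written is a faithful reconstruction of Shapiro's own argument (Hadamard directional differentiability of the optimal-value functional via the Gol'shtein min--max formula over $S^*\times\Lambda^*$, followed by the delta method for directionally differentiable maps), so it matches the intended proof of the cited source; the only point worth tightening is that your multiplier bound $\|\lambda\|\leq(\psi_0(\bar\theta)-\alpha^*)/\min_j(-\psi_j(\bar\theta))$ controls only the non-affine components, and the affine constraints need the separate (standard) treatment Shapiro gives to keep $\Lambda^*$ compact.
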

\hfill $\Box$

{\bf Proposition \ref{prop6-4}} (Regret and Estimation Error). 
\emph{Suppose that for all $\pi\in\Pi$, $\pi(x)=\pi(x;\theta)$ is a continuously differentiable and convex function with respect to $\theta$, under the conditions in Theorem \ref{Asy},}  we have
 
 \emph{(a) The expected reward of the learned policy is consistent, and $U(\hat \pi^*) - U(\pi^*) = O_{\mathbb{P}}(1 /\sqrt{n})$;}

\emph{(b) The estimated reward of the learned policy is consistent, and 
$\hat U(\hat \pi^*) - U(\pi^*)= O_{\P}(1/\sqrt{n})$.}  

\begin{proof}[Proof of Proposition \ref{prop6-4}] 
We first show Proposition \ref{prop6-4}(b). According to the proof of Theorem \ref{Asy}, we have 
\begin{align*}
   \sqrt{n}\{\hat U(\pi) - U(\pi)\} ={}& \sqrt{n} \{\hat \V(\pi; s) -   \V(\pi; s) \} + \lambda \sqrt{n} \{ \hat \V(\pi; y)- \V(\pi; y)\} \\
   ={}&  \frac{1}{\sqrt{n}} \sum_{i=1}^n [   \phi_s(Z_i; e, \mu_0, \mu_1) -    \V(\pi; s) ]  + \lambda \frac{1}{\sqrt{n}} \sum_{i=1}^n [  \phi_{y}(Z_i;  e, r,   m_0, m_1, \tilde m_0, \tilde m_1)  -    \V(\pi; y) ] + o_{\P}(1). 
\end{align*}
By the central limit theorem, 
    \begin{equation*} 
         \sqrt{n}\{\hat U(\pi) - U(\pi)\}  \xrightarrow{d} N\left ( 0,  \sigma^2  \right ), 
    \end{equation*}  
where 
    $$\sigma^2 = \text{Var}\Big(  \phi_s(Z_i; e, \mu_0, \mu_1) + \lambda \phi_{y}(Z_i;  e, r,   m_0, m_1, \tilde m_0, \tilde m_1)  -     \V(\pi; s) - \lambda \V(\pi; y) \Big ).$$
This implies that for any given $\pi$, 
\begin{equation} \label{eq-A3}
\hat U(\pi) - U(\pi) = O_{\P}(n^{-1/2}).
\end{equation}

Under Assumptions that for all $\pi\in\Pi$, $\pi(x)=\pi(x;\theta)$ is a continuously differentiable and convex function with respect to $\theta$. The convexity of $\hat   U(\pi)$  follows directly from the convexity of $\pi(x)=\pi(x;\theta)$ with respect to $\theta$, and the linearity of $\hat U(\pi)$ with respect to $\pi\in\Pi$. 
Then the conclusion of Proposition \ref{prop6-4}(b) follows from the direct application of Lemma \ref{shapiro}, and $f(n)=\sqrt{n}$.

Next, we prove Proposition \ref{prop6-4}(a). Note that 
\[ U(\hat \pi^*) - U(\pi^*) = \{ U(\hat \pi^*) -  \hat U(\hat \pi^*)\} + \{ \hat U(\hat \pi^*)  - U(\pi^*)\},     \]
the first term of the right side is $O_{\P}(1/\sqrt{n})$ by equation (\ref{eq-A3}),  and the second term of the right side also is $O_{\P}(1/\sqrt{n})$ by Proposition \ref{prop6-4}(b). Thus, $U(\hat \pi^*) - U(\pi^*)  = O_{\P}(1/\sqrt{n})$. This finishes the proof. 

\end{proof}

\section{Estimation of Nuisance Parameters with Sample Splitting} \label{app-c}

Let $K$ be a small positive integer, and (for simplicity) suppose that $m = n / K$ is also an integer. Let
  $I_{1}$, ..., $I_{K}$ be a random partition of the index set $I = \{1, ...,  n\}$ so that $\# I_{k} = m$ for $k = 1, ..., K$. Denote $I_{k}^{C}$ as the complement of $I_{k}$.

  \medskip  \noindent
~ {\bf Step 1.} 	Nuisance parameter training for each sub-sample.

 \medskip
 \quad {\bf for } $k = 1$ {\bf to} $K$ {\bf do}
 		
\qquad	~	(1) Construct estimators of $ e(X)$, $ r(a, X, S)$, $ \mu_a(X)$, $ m_a(X)$, and $ \tilde m_a(X,S)$ for $a = 0, 1$,
using the sample with $I_{k}^{C}$. The associated estimators are denoted as $\check e(x)$, $\check r(a, X, S)$, $\check \mu_a(X)$, $\check m_a(X)$, and $\check {\tilde m}_a(X,S)$ for $a = 0, 1$. 

\qquad	~ (2) Obtain the predicted values of $\check e(X_i)$, $\check r(a, X_i, S_i)$, $\check \mu_a(X_i)$, $\check m_a(X_i)$, and $\check {\tilde m}_a(X_i,S_i)$ for $i \in I_k$. 

 \medskip
 \quad {\bf end}

 \medskip  \noindent
~ {\bf Step 2.} All the predicted values  $\check e(X_i)$, $\check r(a, X_i, S_i)$, $\check \mu_a(X_i)$, $\check m_a(X_i)$, and $\check {\tilde m}_a(X_i,S_i)$  for $i \in I$ consist of the final estimates of $ e(X)$, $ r(a, X, S)$, $ \mu_a(X)$, $ m_a(X)$, and $ \tilde m_a(X,S)$, denoted as  $\hat e(X_i)$, $\hat r(a, X_i, S_i)$, $\hat \mu_a(X_i)$, $\hat m_a(X_i)$, and $\hat {\tilde m}_a(X_i,S_i)$,  respectively. 

 \medskip  \noindent
~ {\bf Step 3.} The estimators of short-term and long-term rewards are given as 
\begin{align*} 
\begin{split}
\hat   \V(\pi; s)  ={}&  \frac 1 n  \sum_{i=1}^n \phi_s(Z_i; \hat e, \hat \mu_0, \hat \mu_1),  \\
\hat  \V(\pi; y) ={}&  \frac 1 n  \sum_{i=1}^n \phi_{y}(Z_i; \hat e, \hat r, \hat m_0, \hat m_1, \hat{\tilde m}_0, \hat{\tilde m}_1).     
\end{split}
\end{align*}

\medskip 
The full sample is split into $K$ parts, the short-term and long-term rewards are estimated for each subsample, while the 
nuisance parameter training is implemented in the corresponding complement sample. The resulting estimators of short-term and long-term rewards are the average values of the estimators in each subsample.  This is the ``cross-fitting'' approach widely used in causal inference \cite{Chernozhukov-etal-2018, wager2018estimation, Athey-Tibsirani-Wager-2019, Semenova-Chernozhukov}.

Note that when estimating $\mathbb{V}(\pi; y)$, $\mathbb{V}(\pi; s)$ using different estimators (Proposed, IPW, OR, DM), we use the widely-used Adam optimization method to learn the policy for this unconstrained problem.

\section{Additional Experimental Results} \label{app:more_exp}
In the following, we show more experimental results with different missing ratios $\{ 0.2, 0.3, 0.4, 0.5, 0.6\}$ under \textsc{IHDP} and \textsc{JOBS} datasets, in Tables~\ref{tab:fixed_missing02}-\ref{tab:fixed_missing06}. Further, we show more experimental results with different time steps when the missing ratio varies under \textsc{IHDP} and \textsc{JOBS} datasets, in Figure~\ref{fig:difftime0104}.

\begin{table*}[t!]
\centering
\caption{Comparison of the baselines, \textsc{Naive-S} (maximizing short-term rewards alone), \textsc{Naive-Y} (maximizing long-term rewards alone), and the proposed method in terms of the rewards, welfare changes, and policy errors on \textsc{IHDP} and \textsc{Jobs}, based on our proposed estimator. Different balance factors are employed for the estimation and evaluation, $\lambda=0, 0.5, 1$, where the expected short-term and long-term rewards are estimated by outcome regression and multi-layer perceptron regression methods. Higher reward/$\Delta$W and lower error mean better performance. The missing ratio is 0.1. }
\begin{sc}
\begin{tabular}{l | lll | lll | lll}
  \toprule
   \multicolumn{1}{c|}{ \textsc{IHDP}}  
   & \multicolumn{3}{c|}{Short-term metrics}  
   & \multicolumn{3}{c|}{\textbf{Balanced metrics}}
   & \multicolumn{3}{c}{Long-term metrics}\\
  \midrule
  \multicolumn{1}{l|}{Methods}   & 
  \multicolumn{1}{c}{Reward} & 
  \multicolumn{1}{c}{$\Delta$W} & \multicolumn{1}{c|}{ error} & 
  \multicolumn{1}{c}{Reward} & 
  \multicolumn{1}{c}{$\Delta$W} & \multicolumn{1}{c|}{error} &
  \multicolumn{1}{c}{Reward} & 
  \multicolumn{1}{c}{$\Delta$W} & \multicolumn{1}{c}{error} \\  
  \midrule
  \multirow{1}{*}{Naive-S}
  & \textbf{534.7} &\textbf{87.8}& 0.494 & 1315.9 & 473.2 & 0.498 & 781.2 & 770.9 & 0.500 \\
  \multirow{1}{*}{Naive-Y} 
  & 529.7 & 82.8 & \textbf{0.482} & 2225.1 & 925.3 & 0.398 & 1695.4 & 1685.0 & 0.399 \\
  \multirow{1}{*}{Ours}
  & 529.3 & 82.4 & 0.486 & 
  \textbf{2272.4} &\textbf{948.7} &\textbf{0.395}  & \textbf{1743.1} &\textbf{1732.8}& \textbf{0.396} \\
  \midrule
  \midrule
\multicolumn{1}{c|}{ \textsc{JOBS}}  
   & \multicolumn{3}{c|}{Short-term metrics}  
   & \multicolumn{3}{c|}{\textbf{Balanced metrics}}
   & \multicolumn{3}{c}{Long-term metrics}\\
  \midrule
  \multicolumn{1}{l|}{Methods}   & 
  \multicolumn{1}{c}{Reward} & 
  \multicolumn{1}{c}{$\Delta$W} & \multicolumn{1}{c|}{ error} & 
  \multicolumn{1}{c}{Reward} & 
  \multicolumn{1}{c}{$\Delta$W} & \multicolumn{1}{c|}{error} &
  \multicolumn{1}{c}{Reward} & 
  \multicolumn{1}{c}{$\Delta$W} & \multicolumn{1}{c}{error} \\  
  \midrule
  \multirow{1}{*}{Naive-S}
  & \textbf{1694.3} & \textbf{419.5} & \textbf{0.469} & 2835.2 & 406.0 & 0.486 & 1140.9 & -27.1 & 0.506\\
  \multirow{1}{*}{Naive-Y}  
  & 1599.3 & 324.6 & 0.510 & 2863.6  & 372.7 & 0.482 & \textbf{1264.2} &\textbf{96.2} &\textbf{0.477}\\
  \multirow{1}{*}{Ours}
  & 1670.4 & 395.6 & 0.479 & \textbf{2912.9} & \textbf{432.9}& \textbf{0.470} & 1242.5 &  74.6 & 0.481 \\
  \bottomrule
  \end{tabular}
  \end{sc}
\label{tab:fixed_missing}%
\end{table*}%

\begin{table*}[t!]
\centering
\caption{Comparison of the baselines, \textsc{Naive-S} (maximizing short-term rewards alone), \textsc{Naive-Y} (maximizing long-term rewards alone), and the proposed method in terms of the rewards, welfare changes, and policy errors on \textsc{IHDP} and \textsc{Jobs}. Higher reward/$\Delta$W and lower error mean better performance. The missing ratio is 0.2.}
\begin{sc}
\begin{tabular}{l | lll | lll | lll}
  \toprule
   \multicolumn{1}{c|}{ \textsc{IHDP}}  
   & \multicolumn{3}{c|}{Short-term metrics}  
   & \multicolumn{3}{c|}{\textbf{Balanced metrics}}
   & \multicolumn{3}{c}{Long-term metrics}\\
  \midrule
  \multicolumn{1}{l|}{Methods}   & 
  \multicolumn{1}{c}{Reward} & 
  \multicolumn{1}{c}{$\Delta$W} & \multicolumn{1}{c|}{ error} & 
  \multicolumn{1}{c}{Reward} & 
  \multicolumn{1}{c}{$\Delta$W} & \multicolumn{1}{c|}{error} &
  \multicolumn{1}{c}{Reward} & 
  \multicolumn{1}{c}{$\Delta$W} & \multicolumn{1}{c}{error} \\  
  \midrule
  \multirow{1}{*}{Naive-S}
  & \textbf{536.0} &\textbf{89.1}& {0.489} & 1324.9 &478.4 &0.496 & 788.9 & 788.5 & 0.499\\
  \multirow{1}{*}{Naive-Y} 
  & 531.0 & 84.1 & \textbf{0.478} & 2099.1 & 863.0 & 0.409 & 1568.1 & 1557.8 & 0.410  \\
  \multirow{1}{*}{Ours}
  &  531.8 & 84.9 & 0.481 &   \textbf{2127.2} &\textbf{877.4} &\textbf{0.406}  & \textbf{1595.4} &\textbf{1585.1}& \textbf{0.407} \\
  \midrule
  \midrule
\multicolumn{1}{c|}{ \textsc{JOBS}}  
   & \multicolumn{3}{c|}{Short-term metrics}  
   & \multicolumn{3}{c|}{\textbf{Balanced metrics}}
   & \multicolumn{3}{c}{Long-term metrics}\\
  \midrule
  \multicolumn{1}{l|}{Methods}   & 
  \multicolumn{1}{c}{Reward} & 
  \multicolumn{1}{c}{$\Delta$W} & \multicolumn{1}{c|}{ error} & 
  \multicolumn{1}{c}{Reward} & 
  \multicolumn{1}{c}{$\Delta$W} & \multicolumn{1}{c|}{error} &
  \multicolumn{1}{c}{Reward} & 
  \multicolumn{1}{c}{$\Delta$W} & \multicolumn{1}{c}{error} \\  
  \midrule
  \multirow{1}{*}{Naive-S}
  & \textbf{1699.0} & \textbf{424.2} & \textbf{0.464} & 2858.0 & 410.8 & 0.481 & 1159.0 & -9.0 & 0.502 \\
  \multirow{1}{*}{Naive-Y}  
  & 1610.2 & 335.4 & 0.503 & 2869.0 & 380.8 & 0.478 & \textbf{1258.8} & \textbf{90.8} & \textbf{0.477}\\
  \multirow{1}{*}{Ours}
  & 1657.0 & 382.2 & 0.483 & \textbf{2885.6} & \textbf{412.5} & \textbf{0.475} & 1228.6 & 60.6 & 0.484 \\
  \bottomrule
  \end{tabular}
  \end{sc}
\label{tab:fixed_missing02}%
\end{table*}%

\begin{table*}[t!]
\centering
\caption{Comparison of the baselines, \textsc{Naive-S} (maximizing short-term rewards alone), \textsc{Naive-Y} (maximizing long-term rewards alone), and the proposed method in terms of the rewards, welfare changes, and policy errors on \textsc{IHDP} and \textsc{Jobs}. Higher reward/$\Delta$W and lower error mean better performance. The missing ratio is 0.3.}
\begin{sc}
\begin{tabular}{l | lll | lll | lll}
  \toprule
   \multicolumn{1}{c|}{ \textsc{IHDP}}  
   & \multicolumn{3}{c|}{Short-term metrics}  
   & \multicolumn{3}{c|}{\textbf{Balanced metrics}}
   & \multicolumn{3}{c}{Long-term metrics}\\
  \midrule
  \multicolumn{1}{l|}{Methods}   & 
  \multicolumn{1}{c}{Reward} & 
  \multicolumn{1}{c}{$\Delta$W} & \multicolumn{1}{c|}{ error} & 
  \multicolumn{1}{c}{Reward} & 
  \multicolumn{1}{c}{$\Delta$W} & \multicolumn{1}{c|}{error} &
  \multicolumn{1}{c}{Reward} & 
  \multicolumn{1}{c}{$\Delta$W} & \multicolumn{1}{c}{error} \\  
  \midrule
  \multirow{1}{*}{Naive-S}
  &  \textbf{534.1} & \textbf{87.2} & 0.495 & 1273.9 &451.9 &0.500  & 739.8 & 729.5 & 0.502 \\ %
  \multirow{1}{*}{Naive-Y}  
  & 533.7 & 86.8 &\textbf{0.479} & 2032.2 & 830.9 & 0.420 & 1498.5 & 1488.2 &0.420 \\ %
  \multirow{1}{*}{Ours}
  & 531.9 & 85.0 & {0.480} & \textbf{2051.8} & \textbf{839.8} & \textbf{0.417 }& \textbf{1520.0} & \textbf{1509.7} & \textbf{0.419}\\
  \midrule
  \midrule
\multicolumn{1}{c|}{ \textsc{JOBS}}  
   & \multicolumn{3}{c|}{Short-term metrics}  
   & \multicolumn{3}{c|}{\textbf{Balanced metrics}}
   & \multicolumn{3}{c}{Long-term metrics}\\
  \midrule
  \multicolumn{1}{l|}{Methods}   & 
  \multicolumn{1}{c}{Reward} & 
  \multicolumn{1}{c}{$\Delta$W} & \multicolumn{1}{c|}{ error} & 
  \multicolumn{1}{c}{Reward} & 
  \multicolumn{1}{c}{$\Delta$W} & \multicolumn{1}{c|}{error} &
  \multicolumn{1}{c}{Reward} & 
  \multicolumn{1}{c}{$\Delta$W} & \multicolumn{1}{c}{error} \\  
  \midrule
  \multirow{1}{*}{Naive-S}
  & \textbf{1701.4} & \textbf{426.6} &\textbf{0.467} & 2854.9 &419.3 & 0.482 &1153.5 & -14.4 & 0.502  \\ %
  \multirow{1}{*}{Naive-Y}  
  & 1605.5 &330.7 &0.512 & 2892.2& 390.1 & \textbf{0.474}& \textbf{1286.8} &\textbf{118.9} & \textbf{0.472}\\ %
  \multirow{1}{*}{Ours}
  & 1662.7 &387.9& 0.482&\textbf{2894.3}&\textbf{424.7} &\textbf{0.474}& 1221.6& 53.6 &0.485\\
  \bottomrule
  \end{tabular}
  \end{sc}
  \label{tab:fixed_missing03}%
\end{table*}%

\begin{table*}[t!]
\centering
\caption{Comparison of the baselines, \textsc{Naive-S} (maximizing short-term rewards alone), \textsc{Naive-Y} (maximizing long-term rewards alone), and the proposed method in terms of the rewards, welfare changes, and policy errors on \textsc{IHDP} and \textsc{Jobs}. Higher reward/$\Delta$W and lower error mean better performance. The missing ratio is 0.4.}
\begin{sc}
\begin{tabular}{l | lll | lll | lll}
  \toprule
   \multicolumn{1}{c|}{ \textsc{IHDP}}  
   & \multicolumn{3}{c|}{Short-term metrics}  
   & \multicolumn{3}{c|}{\textbf{Balanced metrics}}
   & \multicolumn{3}{c}{Long-term metrics}\\
  \midrule
  \multicolumn{1}{l|}{Methods}   & 
  \multicolumn{1}{c}{Reward} & 
  \multicolumn{1}{c}{$\Delta$W} & \multicolumn{1}{c|}{ error} & 
  \multicolumn{1}{c}{Reward} & 
  \multicolumn{1}{c}{$\Delta$W} & \multicolumn{1}{c|}{error} &
  \multicolumn{1}{c}{Reward} & 
  \multicolumn{1}{c}{$\Delta$W} & \multicolumn{1}{c}{error} \\  
  \midrule
  \multirow{1}{*}{Naive-S}
  & \textbf{536.8} &\textbf{89.9}& 0.490 & 1276.3& 454.5& 0.500 & 739.5 & 729.1& 0.501\\
  \multirow{1}{*}{Naive-Y} 
  & 529.7 & 82.8 & \textbf{0.481} & 1957.7 & 791.6 & 0.430 & 1428.0 & 1417.6 & 0.431  \\
  \multirow{1}{*}{Ours}
  & 529.6 & 82.7 & 0.483& \textbf{2017.1} & \textbf{821.2} & \textbf{0.421} & \textbf{1487.5} & \textbf{1477.2} & \textbf{0.421} \\
  \midrule
  \midrule
\multicolumn{1}{c|}{ \textsc{JOBS}}  
   & \multicolumn{3}{c|}{Short-term metrics}  
   & \multicolumn{3}{c|}{\textbf{Balanced metrics}}
   & \multicolumn{3}{c}{Long-term metrics}\\
  \midrule
  \multicolumn{1}{l|}{Methods}   & 
  \multicolumn{1}{c}{Reward} & 
  \multicolumn{1}{c}{$\Delta$W} & \multicolumn{1}{c|}{ error} & 
  \multicolumn{1}{c}{Reward} & 
  \multicolumn{1}{c}{$\Delta$W} & \multicolumn{1}{c|}{error} &
  \multicolumn{1}{c}{Reward} & 
  \multicolumn{1}{c}{$\Delta$W} & \multicolumn{1}{c}{error} \\  
  \midrule
  \multirow{1}{*}{Naive-S}
  & \textbf{1702.8} & \textbf{428.0} & \textbf{0.465} & 2861.7 & \textbf{423.5} & 0.480 & 1158.9 & -9.0 & 0.502 \\
  \multirow{1}{*}{Naive-Y}  
  & 1609.4 & 334.7 & 0.505 & 2890.1 & 391.0 & 0.474 & \textbf{1280.6} &\textbf{112.7} & \textbf{0.472} \\
  \multirow{1}{*}{Ours}
  & 1663.8 & 389.1 & 0.482 & \textbf{2891.5} & 418.9 & \textbf{0.473} &1227.7 &59.8& 0.484\\
  \bottomrule
  \end{tabular}
  \end{sc}
\label{tab:fixed_missing04}%
\end{table*}%

\begin{table*}[t!]
\centering
\caption{Comparison of the baselines, \textsc{Naive-S} (maximizing short-term rewards alone), \textsc{Naive-Y} (maximizing long-term rewards alone), and the proposed method in terms of the rewards, welfare changes, and policy errors on \textsc{IHDP} and \textsc{Jobs}. Higher reward/$\Delta$W and lower error mean better performance. The missing ratio is 0.5.}
\begin{sc}
\begin{tabular}{l | lll | lll | lll}
  \toprule
   \multicolumn{1}{c|}{ \textsc{IHDP}}  
   & \multicolumn{3}{c|}{Short-term metrics}  
   & \multicolumn{3}{c|}{\textbf{Balanced metrics}}
   & \multicolumn{3}{c}{Long-term metrics}\\
  \midrule
  \multicolumn{1}{l|}{Methods}   & 
  \multicolumn{1}{c}{Reward} & 
  \multicolumn{1}{c}{$\Delta$W} & \multicolumn{1}{c|}{ error} & 
  \multicolumn{1}{c}{Reward} & 
  \multicolumn{1}{c}{$\Delta$W} & \multicolumn{1}{c|}{error} &
  \multicolumn{1}{c}{Reward} & 
  \multicolumn{1}{c}{$\Delta$W} & \multicolumn{1}{c}{error} \\  
  \midrule
  \multirow{1}{*}{Naive-S}
  & \textbf{535.2} & \textbf{88.3} & 0.492 & 1308.9 & 470.0 & 0.498 & 773.7 & 763.4 & 0.499 \\
  \multirow{1}{*}{Naive-Y} 
  & 530.2 & 83.3 & \textbf{0.479} & 1977.9 & 801.9 & 0.429 & 1447.7 & 1437.3 & 0.429 \\
  \multirow{1}{*}{Ours}
  & 529.8 & 82.8 & 0.485 & \textbf{2003.4} & \textbf{814.5} & \textbf{0.423} &\textbf{1473.6 }& \textbf{1463.3} &\textbf{0.425} \\
  \midrule
  \midrule
\multicolumn{1}{c|}{ \textsc{JOBS}}  
   & \multicolumn{3}{c|}{Short-term metrics}  
   & \multicolumn{3}{c|}{\textbf{Balanced metrics}}
   & \multicolumn{3}{c}{Long-term metrics}\\
  \midrule
  \multicolumn{1}{l|}{Methods}   & 
  \multicolumn{1}{c}{Reward} & 
  \multicolumn{1}{c}{$\Delta$W} & \multicolumn{1}{c|}{ error} & 
  \multicolumn{1}{c}{Reward} & 
  \multicolumn{1}{c}{$\Delta$W} & \multicolumn{1}{c|}{error} &
  \multicolumn{1}{c}{Reward} & 
  \multicolumn{1}{c}{$\Delta$W} & \multicolumn{1}{c}{error} \\  
  \midrule
  \multirow{1}{*}{Naive-S}
  & \textbf{1700.6} & \textbf{425.8} & \textbf{0.469} & 2844.3 & 413.7 &0.483 & 1143.7 & -24.3 & 0.504 \\
  \multirow{1}{*}{Naive-Y}  
  & 1601.9 & 32701 &  0.513 &  2877.2 &  380.8 &  0.478 &  \textbf{1275.3 }&  \textbf{107.4 }&  \textbf{0.472}\\
  \multirow{1}{*}{Ours}
  & 1656.8 &  382.0 &  0.487 & \textbf{2893.1} & \textbf{416.2}& \textbf{0.475} &  {1236.3}& 68.4 & 0.482 \\
  \bottomrule
  \end{tabular}
  \end{sc}
\label{tab:fixed_missing05}%
\end{table*}%

\begin{table*}[t!]
\centering
\caption{Comparison of the baselines, \textsc{Naive-S} (maximizing short-term rewards alone), \textsc{Naive-Y} (maximizing long-term rewards alone), and the proposed method in terms of the rewards, welfare changes, and policy errors on \textsc{IHDP} and \textsc{Jobs}. Higher reward/$\Delta$W and lower error mean better performance. The missing ratio is 0.6.}
\begin{sc}
\begin{tabular}{l | lll | lll | lll}
  \toprule
   \multicolumn{1}{c|}{ \textsc{IHDP}}  
   & \multicolumn{3}{c|}{Short-term metrics}  
   & \multicolumn{3}{c|}{\textbf{Balanced metrics}}
   & \multicolumn{3}{c}{Long-term metrics}\\
  \midrule
  \multicolumn{1}{l|}{Methods}   & 
  \multicolumn{1}{c}{Reward} & 
  \multicolumn{1}{c}{$\Delta$W} & \multicolumn{1}{c|}{error} & 
  \multicolumn{1}{c}{Reward} & 
  \multicolumn{1}{c}{$\Delta$W} & \multicolumn{1}{c|}{error} &
  \multicolumn{1}{c}{Reward} & 
  \multicolumn{1}{c}{$\Delta$W} & \multicolumn{1}{c}{error} \\  
  \midrule
  \multirow{1}{*}{Naive-S}
  & \textbf{536.4} & \textbf{89.5} & 0.490 & 1335.7 & 484.0 & 0.494 & 799.3 & 788.9 & 0.495\\
  \multirow{1}{*}{Naive-Y} 
  & 530.1 & 83.2 & \textbf{0.484} & 1908.3 & 767.1 & 0.433 & 1378.2 & 1367.9 & 0.434 \\
  \multirow{1}{*}{Ours}
  & 530.6 & 83.7 & 0.487 & \textbf{1941.2 }& \textbf{783.8} & \textbf{0.431} & \textbf{1410.6} & \textbf{1400.3} & \textbf{0.431 }\\
  \midrule
  \midrule
\multicolumn{1}{c|}{ \textsc{JOBS}}  
   & \multicolumn{3}{c|}{Short-term metrics}  
   & \multicolumn{3}{c|}{\textbf{Balanced metrics}}
   & \multicolumn{3}{c}{Long-term metrics}\\
  \midrule
  \multicolumn{1}{l|}{Methods}   & 
  \multicolumn{1}{c}{Reward} & 
  \multicolumn{1}{c}{$\Delta$W} & \multicolumn{1}{c|}{ error} & 
  \multicolumn{1}{c}{Reward} & 
  \multicolumn{1}{c}{$\Delta$W} & \multicolumn{1}{c|}{error} &
  \multicolumn{1}{c}{Reward} & 
  \multicolumn{1}{c}{$\Delta$W} & \multicolumn{1}{c}{error} \\  
  \midrule
  \multirow{1}{*}{Naive-S}
  & \textbf{1698.1} & \textbf{423.3} & \textbf{0.465} & 2848.5 & 414.5 & 0.484 & 1150.4 & -17.6 & 0.505\\
  \multirow{1}{*}{Naive-Y}  
  & 1596.2 & 321.4 & 0.513 & 2869.2 & 373.9 & 0.480 & \textbf{1273.0} & \textbf{105.1 }& \textbf{0.474} \\
  \multirow{1}{*}{Ours}
  & 1663.0 & 388.2 & 0.485 & \textbf{2892.6} & \textbf{419.0} & \textbf{0.473} & 1229.6 & 61.6 & 0.483 \\
  \bottomrule
  \end{tabular}
  \end{sc}
\label{tab:fixed_missing06}%
\end{table*}%

\begin{figure}[t!] \label{fig:difftime0104}
\centering
\subfigure[Ratio 0.1 on \textsc{IHDP}]{
\includegraphics[width=0.23\textwidth]{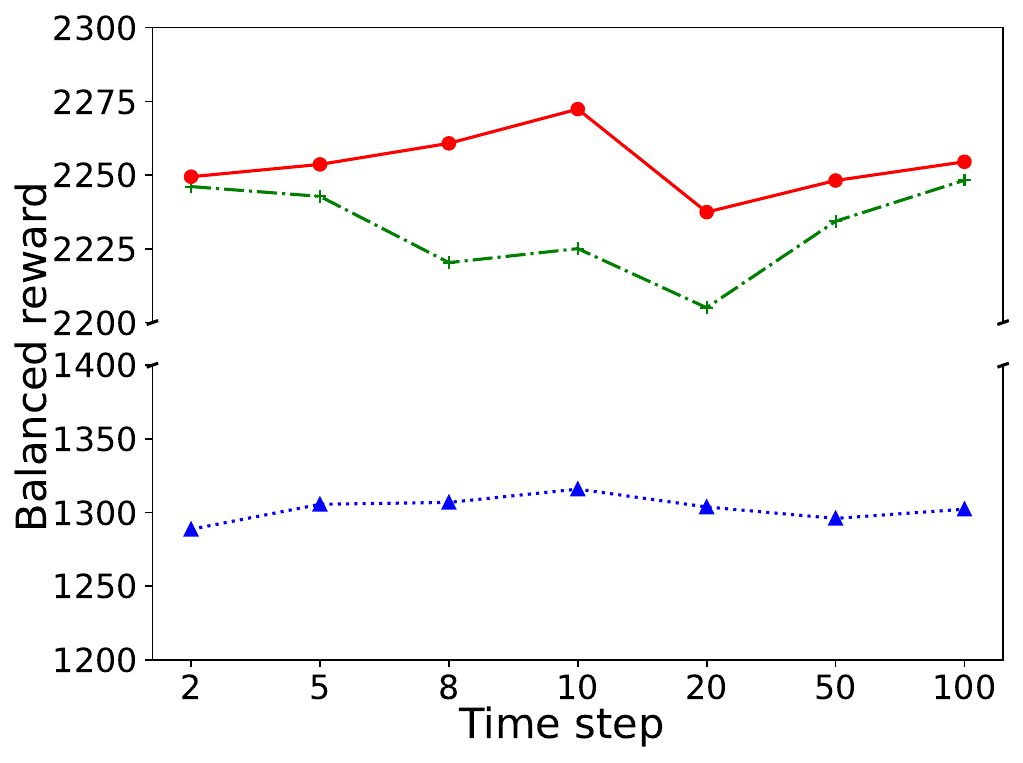}} 
\subfigure[Ratio 0.1 on \textsc{JOBS}]{
\includegraphics[width=0.23\textwidth]{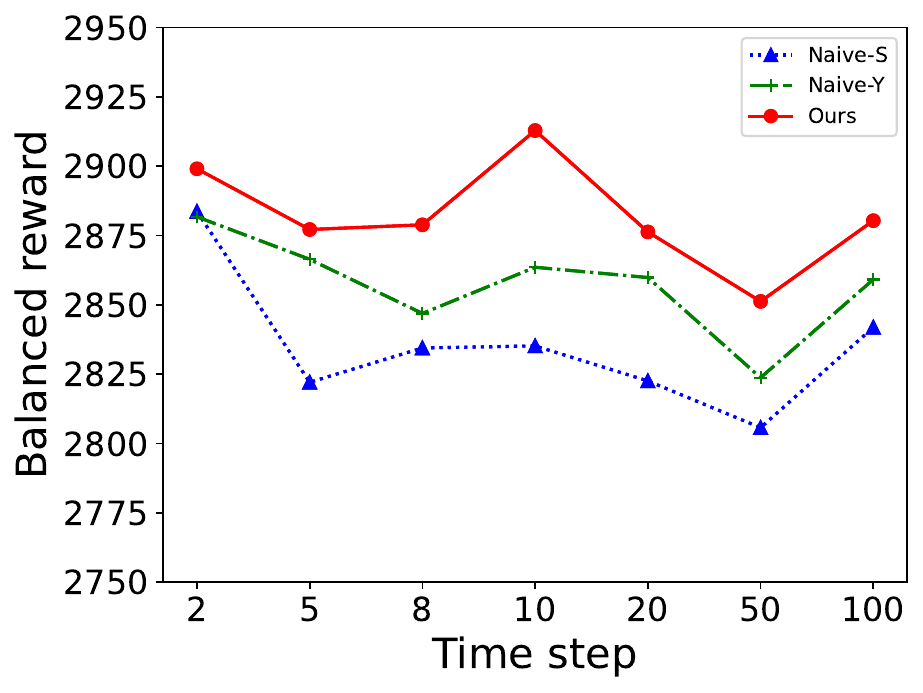}}
\subfigure[Ratio 0.4 on \textsc{IHDP}]{
\includegraphics[width=0.23\textwidth]{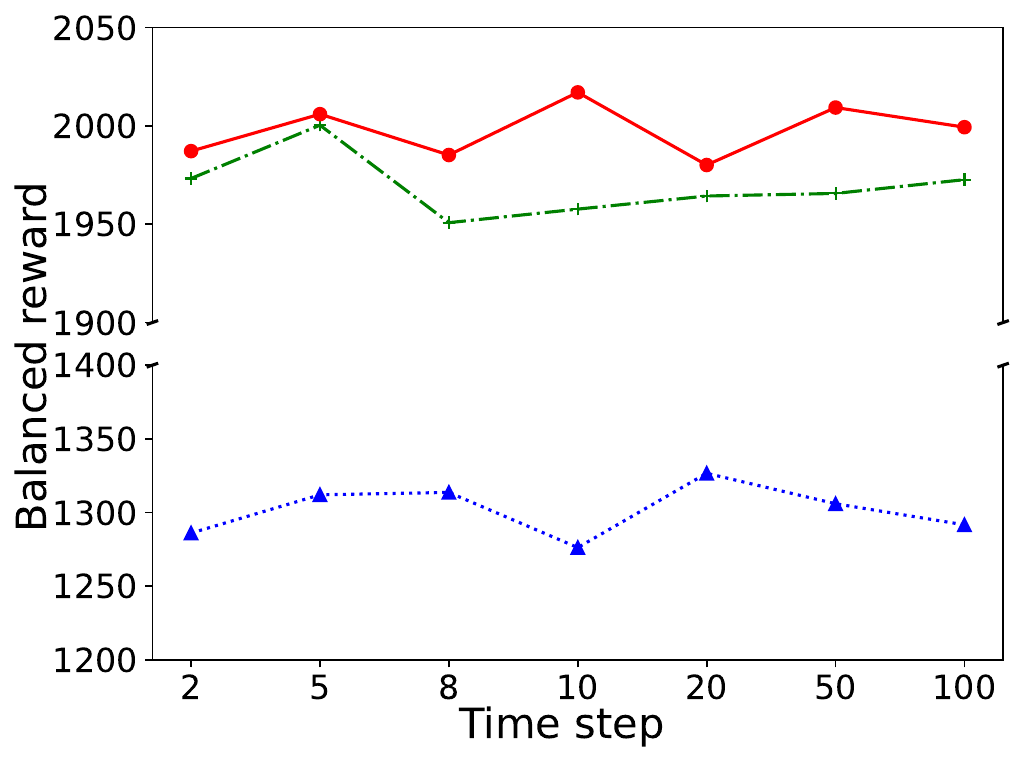}} 
\subfigure[Ratio 0.4 on \textsc{JOBS}]{
\includegraphics[width=0.23\textwidth]{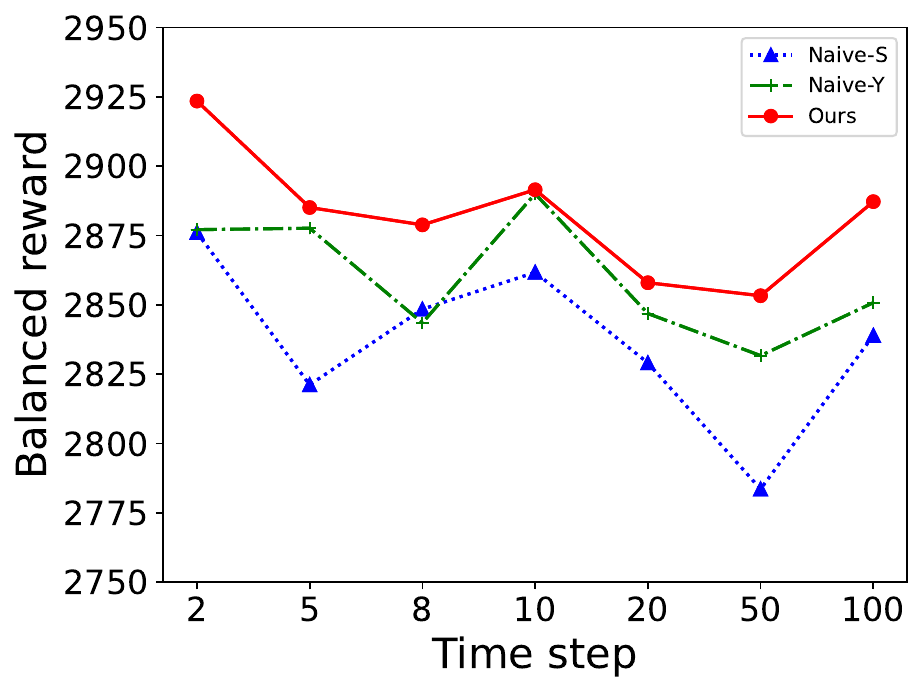}}
\caption{Comparison of \textsc{Naive-S}, \textsc{Naive-Y} and our method with different time steps and other fixed missing ratios $\{0.2, 0.4\}$ on \textsc{IHDP} and \textsc{JOBS}.}
\end{figure}

\newpage 
\section{Data Generation Details of $Y$ for $S \indep Y | X$}
\label{app:uncorr_data}
To generate long-term outcomes $Y_{i}(0)$ and $Y_{i}(1)$ such that $S \indep Y | X$, for $\textsc{IHDP}$, we set the initial value at time step $0$ as $Y_{0, i}(0) \sim \mathrm{Bern}(\sigma(w_0X_i+\epsilon_{0, i})), S_i(1)  \sim  \mathrm{Bern}(\sigma(w_1X_i+\epsilon_{1, i}))$, other than $Y_{0, i}(0) = S_i(0), Y_{0, i}(1) = S_i(1)$. 
For $\textsc{JOBS}$, we generate $Y$ with $Y \indep S | X$ in the following way, $Y_{t, i}(0)  \sim  \mathrm{Bern}(\sigma(\beta_0X_i))+ \epsilon_{0, i},
Y_{t, i}(1)  \sim  \mathrm{Bern}(\sigma(\beta_1X_i)) + \epsilon_{1, i}$.
Parameter values remain unchanged unless specified.
Both ways for two datasets break the correlated relationships between $S$ and $Y$ given $X$. 

\end{document}